\definecolor{Gray}{gray}{0.85}
\crefname{assumption}{Assumption}{Assumptions}
\definecolor{laixi}{RGB}{138,43,226}
\definecolor{chengrui}{RGB}{200,0,0}
\newcommand{\revar}{\mathrm{Var}_{p}}
\newcommand{\tmvar}{\mathrm{Var}_{\tilde{p}^t}}
\newcommand{\tarvar}{\mathrm{Var}_{p_{\mathrm{tar}}}}
\newcommand{\nt}{n^t(s,a)}
\newcommand{\srcn}{n_{\mathrm{src}}(s,a)}
\newcommand{\sa}{\mathcal{S}\times\mathcal{A}}
\newcommand{\overnt}{\overline{n}^t(s,a)}
\newcommand{\tarovernt}{\overline{n}_{\mathrm{tar}}^t(s,a)}
\newcommand{\ba}{\mathbf{a}}
\renewcommand{\P}{\mathbb{P}}
\newcommand{\indic}[1]{\mathds{1}\left\{#1\right\}}
\newcommand{\dtv}[2]{\mathrm{TV}(#1,#2)}
\newcommand{\tnt}{\tilde{n}^t}
\newcommand{\tpt}{\tilde{p}^t}
\newtheorem{lemma}{Lemma}
\newtheorem{theorem}{Theorem}
\newtheorem{assumption}{Assumption}
\newtheorem{definition}{Definition}
\newtheorem{remark}{Remark}
\title{Hybrid Transfer Reinforcement Learning: \\Provable Sample Efficiency from Shifted-Dynamics Data}
 \author{
 	Chengrui Qu\thanks{College of Engineering, Peking University, Beijing, 100871, China.}\\
 	PKU 
 	\and
 	Laixi Shi\thanks{Department of Computing Mathematical Sciences, California Institute of Technology, CA 91125, USA.}\\
 	Caltech 
	\and
	Kishan Panaganti\footnotemark[2] \\
    Caltech\\
	\and
	Pengcheng You\footnotemark[1]  \\
	PKU \\
	\and
	Adam Wierman\footnotemark[2] \\
	Caltech
 	} 
\date{\today}
\begin{document}

\maketitle
\begin{abstract}
   Online Reinforcement learning (RL) typically requires high-stakes online interaction data to learn a policy for a target task. This prompts interest in leveraging historical data to improve sample efficiency.
   The historical data may come from outdated or related source environments with different dynamics. It remains unclear how to effectively use such data in the target task to provably enhance learning and sample efficiency. To address this, we propose a hybrid transfer RL (HTRL) setting, where an agent learns in a target environment while accessing offline data from a source environment with shifted dynamics. We show that -- without information on the dynamics shift -- general shifted-dynamics data, even with subtle shifts, does not reduce sample complexity in the target environment.  However, with prior information on the degree of the dynamics shift, we design HySRL, a transfer algorithm that achieves problem-dependent sample complexity and outperforms pure online RL. Finally, our experimental results demonstrate that HySRL surpasses state-of-the-art online RL baseline.
\end{abstract}

\noindent \textbf{Keywords:} Hybrid Tranfer RL, distribution shift, sample complexity, model-based RL

\allowdisplaybreaks
\setcounter{tocdepth}{2}
\tableofcontents

\section{Introduction}
In online reinforcement learning (RL), an agent learns by continuously interacting with an unknown environment. While this approach has led to remarkable successes across various domains, such as robotics \citep{espeholt_impala_2018}, traffic control \citep{he_robust_2023} and game playing \citep{silver_mastering_2017}, it often requires billions of data from interactions to develop an effective policy \citep{li2023understandingcomplexitygainssingletask}. Moreover, in many real-world scenarios, such interactions can be costly, time-consuming, or unsafe \citep{eysenbach_off-dynamics_2021}, which significantly limits the broader application of RL in practice, highlighting the urgent need for more sample-efficient paradigms.

One promising direction to address sample inefficiency in RL is transfer learning \citep{zhu2023transferlearningdeepreinforcement}. When developing an effective policy for a target environment, it is often possible to leverage experiences from a similar source environment with shifted dynamics \citep{niu_comprehensive_2024}. These sources may include an imperfect simulator \citep{8460528}, historical operating data before external impacts \citep{Luo_Jiang_Yu_Zhang_Zhang_2022}, polluted offline datasets \citep{onlinepricing}, or data from other tasks in a multi-task setting \citep{sodhani2021multitaskreinforcementlearningcontextbased}. This concept has led to various domains and pipelines, such as meta RL \citep{finn2017modelagnosticmetalearningfastadaptation}, cross-domain RL \citep{eysenbach_off-dynamics_2021,liu2022daradynamicsawarerewardaugmentation}, and distributionally robust RL \citep{NEURIPS2023_fc8ee7c7}, which demonstrate varying levels of effectiveness.

From a practical standpoint, so far there are no clear signals on how to perform transfer learning sample-efficiently with theoretical guarantees. 
While some studies show that using shifted dynamics data can reduce the time required to achieve specific performance levels in the target environment \citep{liu2022daradynamicsawarerewardaugmentation,serrano2023similaritybasedknowledgetransfercrossdomain,10444921}, others indicate that sometimes these transfers hinder rather than help learning \citep{autonomous,pmlr-v180-you22a}, a phenomenon known as negative transfer.

These practical challenges highlight the need for theoretical insights, which have not been addressed in existing frameworks. Recently, a new stream of research called hybrid RL \citep{xie_policy_2022} has emerged, showing that, theoretically, an offline dataset with no dynamics shift can facilitate more efficient online exploration. However, when the dataset is collected from a source environment with shifted dynamics, it remains unclear whether this dataset can still enable more sample-efficient learning in the target environment. This brings us an interesting open question:
\begin{quote}\textit{Can data from a shifted source environment be leveraged to provably enhance sample efficiency when learning in a target environment?} 
\end{quote}

To answer this question, we formulate a problem called hybrid transfer RL, where an agent learns in a target environment while having access to an offline dataset collected from a source environment. The source and target environments differ only in transition uncertainties in the same world~\citep{DoshiVelez2013HiddenPM}.  Since these differences are typically unknown before exploring the target environment, we refer to them as an unknown dynamics shift. The learning goal is to find an optimal policy for the target environment using minimal interactions.

\paragraph{Contributions.}
In this work, we propose a hybrid transfer RL (HTRL) setting, where the source and target environments share the same world structure, only differing in their transitions. We first present a hardness result in terms of a minimax lower bound in general hybrid transfer RL and show provable sample efficiency gains from the source environment dataset with additional prior information. To the best of our knowledge, we are the first to look into the sample complexity of this transfer setting. Specifically:
\begin{itemize}[nosep,leftmargin=*]
    \item We formulate and focus on a new problem called hybrid transfer RL (HTRL). We find that even when the target MDP is similar to the source MDP in dynamics, data from the source MDP cannot reduce the sample complexity in the target MDP without further conditions, compared to state-of-the-art online RL sample complexity (\cref{theorem:lower-bound}). This result demonstrates that general HTRL is not feasible, motivating us to look into more practical yet meaningful settings.
    \item We study HTRL where prior information about the degree of the dynamics shift is available. A transfer algorithm, HySRL, is designed, which achieves a problem-dependent sample complexity that is at least as good as the state-of-the-art online sample complexity, offering sample efficiency gains in many scenarios (\cref{theorem:sample-complexity}). The key technical contributions involve extending the current reward-free and bonus-based exploration techniques to accommodate more general rewards and incorporating shifted-dynamics data into the analysis. In addition, we conduct experiments in the GridWorld environment to evaluate the proposed algorithm HySRL, demonstrating that HySRL achieves superior sample efficiency than the state-of-the-art pure online RL baseline.
\end{itemize}

\subsection{Related work}

\paragraph{Finite-sample analysis of online, offline, and hybrid RL.} Finite sample analysis in RL focuses on understanding the sample complexity -- how many samples are required to achieve a desired policy with certain performance. In this line of research, a non-exhaustive list in online RL includes \citet{dong2019q,zhang2020reinforcement,zhang2020model,jafarnia2020model,liu2020gamma,yang2021q,azar2017minimax,jin2018q,bai2019provably,zhang2020almost,menard2021ucb,domingues2021episodic,he2020nearly,zanette2019tighter,zhang2020reinforcement}, while offline RL has seen advances such as \citet{uehara2020minimax,li2014minimax,yang2020off,duan2020minimax,jiang2016doubly,jiang2020minimax,kallus2020double,duan2021optimal,xu2021unified,ren2021nearly,thomas2016data}, and hybrid RL frameworks are explored in \citet{song_hybrid_2023,xie_policy_2022,zhang_policy_2023,li_reward-agnostic_2023}. The most closely related setting is hybrid RL, in which an agent learns in a target environment with access to an offline dataset collected from the same environment. Our work extends hybrid RL by addressing cases where the offline dataset may also come from a related environment with shifted dynamics.

\paragraph{Transfer RL with dynamics shifts.} Cross-domain RL with dynamics shifts is the most related setting, which focuses on leveraging abundant samples from a source environment to reduce data requirements for a target environment \citep{eysenbach_off-dynamics_2021,liu2022daradynamicsawarerewardaugmentation,niu_when_2023,niu_h2o_2023,chen2024domainadaptationofflinereinforcement,wen_contrastive_2024}. A major challenge in these transfers is the dynamics shift between the source and target environments. Common approaches often involve training a classifier to distinguish between source and target transitions, by techniques such as combining source and target datasets for policy training \citep{wen_contrastive_2024,chen2024domainadaptationofflinereinforcement}, or reshaping rewards by introducing a penalty term for dynamics shifts \citep{eysenbach_off-dynamics_2021,liu2022daradynamicsawarerewardaugmentation}. While these methods show promising empirical performance, a systematic study on sample complexity is missing. Our work fills this gap by offering a novel theoretical perspective on cross-domain RL.

Other related transfer RL settings include distributionally robust offline RL \citep{pmlr-v130-zhou21d,panaganti22a,NEURIPS2023_fc8ee7c7,wang2024samplecomplexityofflinedistributionally,ma2023distributionallyrobustofflinereinforcement,liu2024minimaxoptimalcomputationallyefficient}, which focuses on training a robust policy using only an offline dataset, without further exploration, to optimize performance in the worst-case scenario of the target environment. Another area, meta RL \citep{finn2017modelagnosticmetalearningfastadaptation,duan2016rl2fastreinforcementlearning,wang2017learningreinforcementlearn, chen2022understandingdomainrandomizationsimtoreal,ye2023powerpretraininggeneralizationrl,mutti2024testtimeregretminimizationmeta}, trains an agent over a distribution of environments to enhance generalization capabilities. Our work contributes to distributionally robust offline RL by addressing the sample complexity when exploration in the target environment is allowed and complements meta RL by focusing on scenarios where the target environment lies outside the training distribution.

\paragraph{Reward-free RL.} Reward-free RL seeks to collect sufficient data to achieve optimality for any potential reward function. This paradigm was first proposed in \citet{jin2020rewardfreeexplorationreinforcementlearning}, with subsequent improvements in sample complexity by \cite{kaufmann2020adaptiverewardfreeexploration,menard_fast_2021,pmlr-v139-zhang21e}. Similar findings have been developed for settings involving function approximation \citep{wang2020rewardfreereinforcementlearninglinear,wagenmaker2022rewardfreerlharderrewardaware,NEURIPS2020_87736972,chen2022statisticalefficiencyrewardfreeexploration,qiu2022rewardfreerlkernelneural}. While existing reward-free methods efficiently estimate the transition kernel, they do not address the challenge of uniformly controlling high-dimensional transition estimation errors that arises in transfer settings. Our work introduces new tools to tackle this challenge in a sample-efficient manner.

\paragraph{Notation.}
We denote by $[n]$ the set $\{1,\cdots,n\}$ for any positive integer $n$, and use $\indic{\cdot}$ to represent the indicator function. For a function $f$ defined on $S$, we define its expectation under the probability measure $p$ as $ pf \triangleq \mathbb{E}_{s \sim p} f(s) $ and its variance as $\mathrm{Var}_p(f) \triangleq \mathbb{E}_{s \sim p} (f(s) - \mathbb{E}_{s' \sim p} f(s'))^2 = p(f - pf)^2 $. The total variation distance between probability measures $p$ and $q$ is defined as $\dtv{p}{q}\triangleq\sup_{A\subseteq S}|p(A)-q(A)|$. We use standard $O(\cdot)$ and $\Omega(\cdot)$ notation, where $f=O(g)$ means there exists some constant $C>0$ such that $f\le Cg$ (similarly for $\Omega(\cdot)$), and use the tilde notation $\widetilde{O}(\cdot)$ to suppress additional log factors. We denote the cardinality of a set $\mathcal{X}$ by $|\mathcal{X}|$. 
\section{Hybrid Transfer RL}
We begin by introducing the mathematical formulation of HTRL, benchmarking with standard online RL.
\begin{figure}[h]
    \centering
    \vspace{-0.4cm}
    \includegraphics[width=0.6\linewidth]{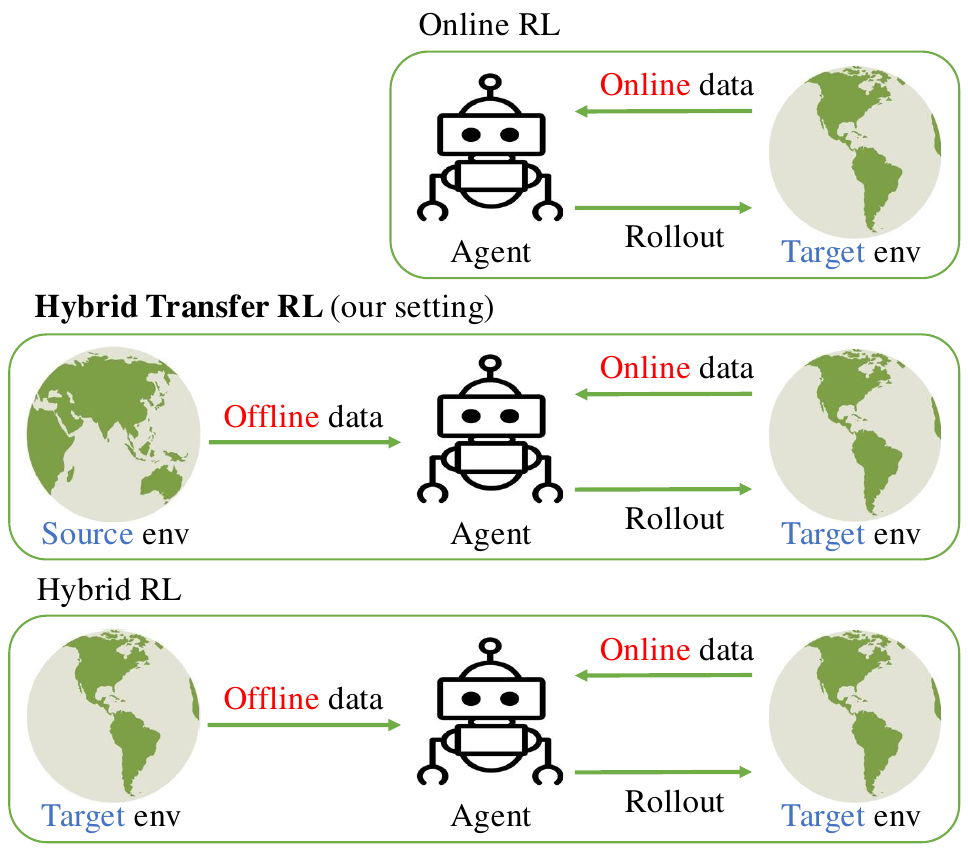}
    \caption{Comparison between different RL settings}
    \vspace{-0.4cm}
    \label{fig:settings}
\end{figure}
\paragraph{Background: Markov decision process (MDP).} We consider episodic Markov Decision Process $\mathcal{M}=(\mathcal{S},\mathcal{A},H,p,r,\rho)$, where $\mathcal{S}$ is the state space with size $S$, $\mathcal{A}$ is the action space with size $A$, $H$ is the horizon length. $p(\cdot\mid s,a): \mathcal{S}\times\mathcal{A}\mapsto \Delta(\mathcal{S})$ denotes the time-independent transition probability at each step, and the reward function is deterministic\footnote{For simplicity, we consider deterministic rewards, as estimating rewards is not a significant challenge in deriving sample complexity results.}, given by $r: \mathcal{S}\times\mathcal{A}\mapsto [0,1]$. In this setting, a Markovian policy is given by $\pi:=\{\pi_h\}_{h=1}^H$ where $\pi_h:\mathcal{S}\mapsto\Delta(\mathcal{A})$. Additionally, we assume that each episode of the MDP starts from an initial state generated from an unknown initial state distribution $\rho\in\Delta(\mathcal{S})$.

For a given transition $p$, the value function for state $s$ at step $h$ is defined as the expected cumulative future reward by executing policy $\pi$, which is given by $V_h^{p,\pi}(s):=\mathbb{E}_{p,\pi}[\sum_{i=h}^Hr(s_i,a_i)\mid s_h=s]$. Similarly, the state-action value function, or Q-function, is defined as $Q_h^{p,\pi}(s,a)=\mathbb{E}_{p,\pi}[\sum_{i=h}^H r(s_i,a_i)\mid s_h=s,a_h=a]$. We denote the weighted value function of policy $\pi$ by:
\begin{align*}
    V_1^{p,\pi}(\rho)=\mathbb{E}_{s\sim\rho}[V_1^{p,\pi}(s)].
\end{align*}
As is well known, there exists at least one deterministic policy that maximizes the value function and the Q-function simultaneously for all $(s,a,h)\in\mathcal{S}\times\mathcal{A}\times[H]$ \citep{bertsekas2010dynamic}. Let $\pi^\star$ denote an optimal deterministic policy, and the corresponding optimal value function $V_h^\star$ and optimal Q-function $Q_h^\star$ are defined respectively by $V_{h}^{p,\star} \triangleq V_h^{p,\pi^{\star}},\,\,Q_{h}^{p,\star} \triangleq Q_h^{p,\pi^{\star}},\,\forall (s,a,h)\in\mathcal{S}\times\mathcal{A}\times[H]$.
\subsection{Hybrid Transfer RL}
In HTRL, the agent can directly interact with the target MDP $\mathcal{M}_{\mathrm{tar}}=(\mathcal{S},\mathcal{A},H,p_{\mathrm{tar}},r,\rho)$ in episodes of length $H$. In an episode, at each step $h\in[H]$, the agent observes a state $s_h\in\mathcal{S}$, takes an action $a_h\in[H]$, receives a reward $r(s_h,a_h)$ and transitions to a new state $s_{h+1}$ according to the underlying transition probability $p_{\mathrm{tar}}(\cdot\mid s_h,a_h)$. 

Additionally, the agent has access to an offline dataset $\mathcal{D}_{\mathrm{src}}=\{(s_i,a_i,r_i,s_i')\}$ pre-collected from a source MDP $\mathcal{M}_{\mathrm{src}}=(\mathcal{S},\mathcal{A},H,p_{\mathrm{src}},r,\rho)$. The target and source MDPs share the same structure except for the transition probabilities (i.e. $p_{\mathrm{tar}}\neq p_{\mathrm{src}}$). For simplicity, we assume the reward signals in $\mathcal{M}_{\mathrm{src}}$ and $\mathcal{M}_{\mathrm{tar}}$ are the same; however, our analysis still holds when the reward signals differ. We assume $p_{\mathrm{src}}$ and $p_{\mathrm{tar}}$ are both unknown to the agent.
\paragraph{Goal.} With access to both $\mathcal{D}_{\mathrm{src}}$ and $\mathcal{M}_{\mathrm{tar}}$, the goal in HTRL is to find an $\varepsilon$-optimal policy for $\mathcal{M}_{\mathrm{tar}}$.  Specifically, the agent learns to find a policy $\hat{\pi}$ for $\mathcal{M}_{\mathrm{tar}}$, which satisfies that:
\begin{align*}
    V_1^{p_{\mathrm{tar}},\star}(\rho)-V_1^{p_{\mathrm{tar}},\hat{\pi}}(\rho)\le \varepsilon.
\end{align*}
\paragraph{Benchmarking with standard online RL.} The  baseline for HTRL is online RL, where the agent learns from scratch by directly interacting with $\mathcal{M}_{\mathrm{tar}}$ in episodes of length $H$. Different from HTRL, in online RL, the agent collects samples from $\mathcal{M}_{\mathrm{tar}}$ without any additional information, learning $\varepsilon$-optimal policy for $\mathcal{M}_{\mathrm{tar}}$. Offline RL cannot be a baseline because the $\mathcal{M}_{\mathrm{src}}$ differs from $\mathcal{M}_{\mathrm{tar}}$ and in general it is impossible to run an offline algorithm on $\mathcal{D}_{\mathrm{src}}$ to obtain an $\varepsilon$-optimal policy for $\mathcal{M}_{\mathrm{tar}}$.

Compared to online RL, the introduction of additional access to $\mathcal{D}_{\mathrm{src}}$ in HTRL naturally raises the question: can we achieve better sample efficiency by leveraging $\mathcal{D}_{\mathrm{src}}$? Unfortunately, the answer is negative, which will be highlighted in the next section.

\section{Minimax Lower Bound For HTRL}
In this section, we establish a minimax lower bound on the sample complexity for general HTRL, formally demonstrating that sample complexity improvements for general HTRL are not feasible.

Specifically, when $p_{\mathrm{tar}}$ is close to $p_{\mathrm{src}}$, one might expect that fewer samples from $\mathcal{M}_{\mathrm{tar}}$ are needed to reach a given performance level by leveraging additional information about $\mathcal{M}_{\mathrm{src}}$ because by the Simulation Lemma, we can already obtain a good initial policy from $\mathcal{M}_{\mathrm{src}}$. However, we show in \cref{theorem:lower-bound} 
that, in the worst case, the same number of samples from $\mathcal{M}_{\mathrm{tar}}$ is still required compared with pure online RL. The proof can be found in \cref{appendix:lower-bound}.

\begin{theorem}[Minimax lower bound for HTRL]
  \label{theorem:lower-bound}
  Given an optimality gap $\varepsilon$, consider for any $\mathcal{M}_{\mathrm{src}}$ the following set of possible MDPs:
    \begin{align*}
        \mathcal{M}_{\alpha} \triangleq \{ &\mathcal{M} = (\mathcal{S}, \mathcal{A}, H, p, r, \rho)\mid \\
        & \max_{(s, a) \in \sa}
        \dtv{p(\cdot \mid s, a)}{p_{\mathrm{src}}(\cdot \mid s, a)} \leq \alpha \},
    \end{align*}
  where $48\varepsilon/H^2\le \alpha\le 1$.  Suppose $S\ge 3$, $H\ge 3$, $A\ge 2$, $\varepsilon\le 1/48$. For any algorithm, there always exists a $\mathcal{M}_{\mathrm{src}}$ and a target MDP $\mathcal{M}_{\mathrm{tar}}\in \mathcal{M}_{\alpha}$, if the number of samples  $n$ collected from the target MDP is $O(H^3SA / \varepsilon^2)$,
  then the algorithm suffers from an $\varepsilon$-suboptimality gap:
  \begin{align*}
    \mathbb{E}_{\mathrm{tar}}\left[ V_{1}^{p_{\mathrm{tar}},\star}(\rho) - V_{1}^{p_{\mathrm{tar}},\hat{\pi}}(\rho)\right] \ge \varepsilon,
  \end{align*}
where $\mathbb{E}_{\mathrm{tar}}$ denotes the expectation with respect to the randomness during algorithm execution in the target MDP $\mathcal{M}_{\mathrm{tar}}$.
\end{theorem}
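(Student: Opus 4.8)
The plan is to reduce hybrid-transfer RL to a pure online minimax lower bound by designing a family of hard target instances that all share \emph{one fixed} source MDP, so that the offline dataset $\mathcal{D}_{\mathrm{src}}$ carries no information about which target is in play. Concretely, I would fix a single reference kernel and take it as $p_{\mathrm{src}}$, then build a family $\{\mathcal{M}_\theta : \theta \in \{0,1\}^{\sa}\}$ of target MDPs, where $\mathcal{M}_\theta$ is obtained from the source by a coordinate-wise perturbation of magnitude $\Delta$ applied at the state-action pairs flagged by $\theta$. Since every member has the identical source kernel, the law of $\mathcal{D}_{\mathrm{src}}$ does not depend on $\theta$. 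Placing a uniform prior on $\theta$ then makes $\mathcal{D}_{\mathrm{src}}$ independent of $\theta$, so the posterior of $\theta$ given $(\mathcal{D}_{\mathrm{src}},\tau_{\mathrm{tar}})$ equals the posterior given the target trajectory $\tau_{\mathrm{tar}}$ alone. Hence the Bayes-optimal learner may discard $\mathcal{D}_{\mathrm{src}}$, and the Bayes suboptimality of any HTRL algorithm equals that of a pure online learner limited to $n$ target samples; as the worst case over the family dominates the Bayes risk, it suffices to lower bound the online Bayes risk.

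For the online bound I would adapt the standard bandit-in-MDP hard instance. In each $\mathcal{M}_\theta$, reward is accrued only upon reaching a designated rewarding state, which from any decision pair $(s,a)$ is reached with a small base probability $\Theta(1/H)$, shifted upward by $\Delta$ exactly when $\theta_{s,a}=1$; the target-optimal policy plays the flagged action at each state. I would take $\Delta \asymp \varepsilon/H^2$, which is exactly where the hypothesis $48\varepsilon/H^2 \le \alpha$ enters: it ensures the per-pair total variation $\dtv{p_\theta(\cdot\mid s,a)}{p_{\mathrm{src}}(\cdot\mid s,a)} \le \Delta \le \alpha$, so that every $\mathcal{M}_\theta \in \mathcal{M}_\alpha$, while the value gap from misidentifying the flags — the per-step shift $\Delta$ times the reaching-value $\Theta(H)$, compounded over the $H$ stationary steps — is $\Theta(\varepsilon)$.

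To turn this into a lower bound I would invoke Assouad's lemma (equivalently a coordinate-wise Le Cam / change-of-measure argument), which produces the linear $SA$ dependence. The small base probability $\Theta(1/H)$ makes the per-visit KL divergence between the two hypotheses at a pair $\Theta(H\Delta^2) = \Theta(\varepsilon^2/H^3)$; hence the KL between the full target-trajectory laws under $\theta_{s,a}=0$ and $\theta_{s,a}=1$ is at most $\Theta(\varepsilon^2/H^3)\,\mathbb{E}[N(s,a)]$, where $N(s,a)$ counts visits to $(s,a)$. Summing over the $SA$ coordinates and using $\sum_{(s,a)} \mathbb{E}[N(s,a)] \le n = O(H^3 SA/\varepsilon^2)$ shows the average per-coordinate divergence is a small constant, so a constant fraction of coordinates cannot be reliably identified. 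Assouad's lemma then forces expected suboptimality $\ge \varepsilon$ under the uniform prior, and therefore for some fixed $\mathcal{M}_{\mathrm{tar}} \in \mathcal{M}_\alpha$, establishing the claim.

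The step I expect to be most delicate is the joint calibration inside the hard-instance design: the single scale $\Delta \asymp \varepsilon/H^2$ and the base probability $\Theta(1/H)$ must be chosen so that, simultaneously, every target stays within the TV-ball of radius $\alpha$ about the one fixed source, the value gap is $\Theta(\varepsilon)$, and the per-visit signal is weak enough to force the $\Omega(H^3 SA/\varepsilon^2)$ count — all within the stationary transition model, where the same perturbed kernel is reused at each step so its effect compounds across the horizon and the value/variance bookkeeping must be done carefully to land on the exponent $H^3$ rather than $H^2$ or $H^4$. By contrast, the source-uselessness reduction is conceptually clean once the family is built around a common source, and I expect it to be the easy, though conceptually central, part of the argument.
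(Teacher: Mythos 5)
Your overall architecture matches the paper's: a single fixed source MDP shared by all hard targets (so that the law of $\mathcal{D}_{\mathrm{src}}$ is independent of the hidden parameter, and the source data can be neutralized by conditioning / a change of measure on the target counts only — this is exactly what the paper's sufficient-statistic argument does), a bandit-in-MDP instance with an absorbing rewarding state reached with base probability $\Theta(1/H)$ and a per-pair total-variation perturbation $\Theta(\varepsilon/H^2)$, and a KL-versus-visit-count accounting that lands on $\Omega(H^3SA/\varepsilon^2)$. Your calibration of $\Delta$, the value gap, and the per-visit KL all agree with the paper's choices ($\gamma=48\varepsilon/H$, TV distance $\gamma/H$, $\mathrm{KL}\le 4\gamma^2/H$).

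The genuine gap is in the hypothesis class. Indexing the targets by $\theta\in\{0,1\}^{\mathcal{S}\times\mathcal{A}}$ and invoking Assouad over the $SA$ coordinates does not deliver the $A$ factor, for two related reasons. First, Assouad needs the loss to be coordinate-separable, but the regret incurred at a state $s$ is (gap) $\times\,\indic{\theta_{s,\hat\pi(s)}=0 \text{ and } \exists a'\colon\theta_{s,a'}=1}$: it depends on the whole row $(\theta_{s,a})_{a\in\mathcal{A}}$ and pays nothing for misidentified coordinates of actions the policy never plays, so ``a constant fraction of coordinates cannot be identified'' does not imply $\Omega(\varepsilon)$ suboptimality. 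Second, and decisively, against that family a learner need not identify all $A$ flags per state: each action is flagged independently with probability $1/2$, so sequentially testing actions until one flagged action is found costs only $O(1/\mathrm{KL})$ samples per state in expectation, and $O(H^3S/\varepsilon^2)$ target samples already achieve $o(\varepsilon)$ regret against your construction — the hypercube family is provably too easy by a factor of $A$. The paper instead indexes the targets by the identity of the unique optimal action, $\mathbf{a}^\star\in[A]^S$, puts a uniform prior over $[A]$ at each bandit state, and extracts the $A$ factor through the Jensen step $\mathbb{E}_{a^\star\sim\mathrm{Unif}[A]}\sqrt{\mathbb{E}[N(s,a^\star)]}\le\sqrt{\tfrac{1}{A}\sum_{a}\mathbb{E}[N(s,a)]}$. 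That unique-best-arm structure (or an equivalent Fano-type argument over $A$ alternatives per state) is the missing ingredient; the rest of your plan goes through once you replace the hypercube with it.
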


\cref{theorem:lower-bound} shows that the lower bound of sample complexity of general HTRL is $\Omega(H^3SA/\varepsilon^2)$, which matches the state-of-the-art sample complexity of pure online RL, $\widetilde{O}(H^3SA/\varepsilon^2)$ (e.g., \cite{menard_fast_2021}\footnote{\cite{menard_fast_2021} considers time-dependent transitions and the sample complexity result is $\widetilde{O}(H^4SA/\varepsilon^2)$, which in our setting translates into $\widetilde{O}(H^3SA/\varepsilon^2)$ due to time-independent transitions.}; \cite{wainwright2019variancereducedqlearningminimaxoptimal}). This demonstrates that, in general, practical transfer algorithms leveraging source environment data cannot reduce the sample complexity in the target environment. No matter what algorithms are used, there always exists a worst case where transfer learning cannot achieve better sample efficiency in the target environment, compared to pure online RL. However, this lower bound is conservative, motivating us to explore practically meaningful and feasible settings to derive problem-dependent sample complexity bounds.
\paragraph{Comparisons to prior lower bounds.}
To the best of our knowledge, this is the first lower bound on the sample complexity when leveraging information from a source environment to explore a new target environment with an unknown dynamics shift. We highlight the novelty of our lower bound result by comparing it with prior lower bounds:

\textit{Lower bounds for transfer learning in RL}: \cite{odonoghue2022variationalbayesianreinforcementlearning,ye2023powerpretraininggeneralizationrl,mutti2024testtimeregretminimizationmeta} provide lower bounds on regret in settings where the agent is trained on $N$ source tasks and is fine-tuned to the target task during testing. However, these lower bounds cannot be adapted to our setting as they assume the target task is one of the source tasks -- which is stronger than ours.

\textit{Lower bounds for pure online RL}:  The existing lower bound on the sample complexity of pure online RL is also $\Omega(H^3SA/\varepsilon^2)$ \citep{pac-azar}. This demonstrates that the improvement in the sample complexity lower bound from the introducing additional information from a source environment is at most a constant factor. The construction of the lower bound for HTRL follows a procedure similar to existing lower bounds for online RL \citep{lattimore2012pacboundsdiscountedmdps,yin2020near}, offline RL \citep{rashidinejad2023bridgingofflinereinforcementlearning} and hybrid RL without dynamics shift \citep{xie_policy_2022}. However, the new technical challenge in our setting is to bound the maximum information gain of the data from $\mathcal{M}_{\mathrm{src}}$. We address this difficulty using a proper change-of-measure approach. See \cref{appendix:lower-bound} for a detailed comparison.

\section{HTRL with Separable Shift}
Although improved sample efficiency is not achievable for general HTRL in the worst case, practical tasks are typically more manageable than these difficult instances. Inspired by practical tasks such as hierarchical RL \citep{chua_provable_2023} and meta RL \citep{chen2022understandingdomainrandomizationsimtoreal}, we instead focus on a class of HTRL with separable shift in the following. This setting allows us to leverage prior information about the degree of dynamic shift between the source and target environments. We then propose an algorithm, called HySRL, which achieves provably superior sample complexity compared to pure online RL.

\subsection{$\beta$-separable shfits}
We first introduce the definition of separable shift, characterized by the minimal degree of the dynamics shift between the source and target environments.

\begin{definition}[$\beta$-separable shift]\label{definition:separation}
Consider a target MDP $\mathcal{M}_{\mathrm{tar}}=(\mathcal{S},\mathcal{A},H,p_{\mathrm{tar}},r,\rho)$ and a source MDP $\mathcal{M}_{\mathrm{src}}=(\mathcal{S},\mathcal{A},H,p_{\mathrm{src}},r,\rho)$.
    The shift between $\mathcal{M}_{\mathrm{tar}}$ and $\mathcal{M}_{\mathrm{src}}$ is $\beta$-separable if for some $\beta \in (0,1]$, we have for all $(s, a) \in \mathcal{S}\times\mathcal{A}$,
    \begin{align*}
       &p_{\mathrm{src}}(\cdot \mid s, a) \neq p_{\mathrm{tar}}(\cdot \mid s, a)\\
        \implies &\dtv{p_{\mathrm{src}}(\cdot \mid s, a)}{p_{\mathrm{tar}}(\cdot \mid s, a)} \ge \beta.
    \end{align*}
\end{definition}

In other words, for any state-action pair $(s,a)$, the transitions in $\mathcal{M}_{\mathrm{src}}$ and $\mathcal{M}_{\mathrm{tar}}$ are either identical or different by at least the degree of $\beta$ w.r.t the TV distance metric. This definition is widely used to characterize the "distance" between tasks in hierarchical RL \citep{chua_provable_2023}, RL for latent MDPs \citep{kwon2021rllatentmdpsregret}, multi-task RL \cite{brunskill2013}, and meta RL \citep{mutti2024testtimeregretminimizationmeta,chen2022understandingdomainrandomizationsimtoreal}, serving the purpose of distinguishing different tasks with finite samples.

Such a minimal degree of dynamic shift, $\beta$, can often be estimated beforehand as prior information for specific problems in practice \citep{brunskill2013}. 
Therefore, in this section, we design algorithms under the assumption that $p_{\mathrm{tar}}$ and $p_{\mathrm{src}}$ are $\beta$-separable.

\begin{remark}[Separable shift makes HTRL feasible]
The lower bound in \cref{theorem:lower-bound} arises from potential challenging target MDPs that subtly differ from the source MDP and are specified based on the optimality gap $\varepsilon$. This subtlety requires extensive data to distinguish between them. However, in practice, the dynamic shift between source and target environments is usually independent of $\varepsilon$. By focusing on HTRL with a $\beta$-separable shift, where $\beta$ is independent of $\varepsilon$, we exclude over-conservative instances that are rare in practice.
\end{remark}

In addition to the aforementioned key definition --- $\beta$-separable dynamic shifts, we introduce another assumption for the reachability of the target MDP. Note that it is not tailored for our Hybrid Transfer RL setting, but widely adopted in extensive RL tasks such as standard RL, meta RL and multi-task RL \citep{JMLR:v11:jaksch10a,chen2022understandingdomainrandomizationsimtoreal,brunskill2013}, to ensure the agent's access to the entire environment (over all state-action pairs).

\begin{assumption}[$\sigma$-reachability]\label{definition:reachability}
    We assume the target MDP $\mathcal{M}_{\mathrm{tar}}$ has $\sigma$-reachability if there exists a constant $\sigma\in(0,1]$ so that 
    \begin{align*}
                \max_{\pi}\max_{h\in[H]} p_h^{\pi}(s,a)\ge \sigma, \quad \forall (s,a)\in\sa,
    \end{align*}
        where $p_h^{\pi}(s,a)$ is the probability of reaching $(s,a)$ at step $h$ by executing policy $\pi$ in $\mathcal{M}_{\mathrm{tar}}$.
\end{assumption} 

\subsection{Algorithm design: HySRL}
Focusing on HTRL with $\beta$-separable shift, now we are ready to introduce our algorithm HySRL, outlined in \cref{alg:hybrid}.
To explicitly characterize the set of state-action pairs where $p_{\mathrm{src}}$ and $p_{\mathrm{tar}}$ differ, we introduce the following definition. 
\begin{definition}[Shifted region]
    We define the shifted region $\mathcal{B}$ as the set of state-action pairs where the transitions in $\mathcal{M}_{\mathrm{src}}$ and $\mathcal{M}_{\mathrm{tar}}$ differ:
    \begin{equation*}
        \mathcal{B}\triangleq\{(s,a)\in \mathcal{S}\times\mathcal{A}\mid p_{\mathrm{src}}(\cdot \mid s,a)\neq p_{\mathrm{tar}}(\cdot \mid s,a) \}.
    \end{equation*}
\end{definition}

Although $p_{\mathrm{tar}}$ is unknown in advance, it is possible to invest a small number of online samples to estimate $p_{\mathrm{tar}}$ and identify the shifted region $\mathcal{B}$. This helps determine which part of $\mathcal{D}_{\mathrm{src}}$ can improve sample efficiency in $\mathcal{M}_{\mathrm{tar}}$, allowing us to focus further exploration on the remaining areas to learn an effective policy. Since, in many practical applications, the dynamcis shift typically affects only a small portion of the state-action space \citep{chua_provable_2023}, this approach can enable more sample-efficient exploration in $\mathcal{M}_{\mathrm{tar}}$. This intuition drives the design of \cref{alg:hybrid}.
\paragraph{\cref{alg:hybrid}: Hybrid separable-transfer RL (HySRL).}
At a high level, given a desired optimality gap $\varepsilon$, if $\sigma\beta$ is too small -- implying that an excessive number of samples is required to identify the shifted region -- \cref{alg:hybrid} chooses to ignore the offline dataset and instead relies on pure online learning. Otherwise, we proceed as follows: first, we run \cref{alg:rf} to obtain an estimated shifted region $\hat{\mathcal{B}}$, which, with high probability, matches the true shifted region $\mathcal{B}$. Next, we use the offline dataset $\mathcal{D}_{\mathrm{src}}$ and online data to design exploration bonuses and execute \cref{alg:ucbvi} to efficiently explore $\hat{\mathcal{B}}$, ultimately outputting a final policy for $\mathcal{M}_{\mathrm{tar}}$. Below, we outline the key steps of \cref{alg:hybrid}.
\begin{algorithm}[ht]
    \caption{Hybrid Separable-transfer RL (HySRL)}\label{alg:hybrid}
    \begin{algorithmic}[1]
        \REQUIRE{
        Parameters $\beta$, $\delta$, $\sigma$, $\varepsilon$, source dataset $\mathcal{D}_{\mathrm{src}}$
        }
        \IF{$\sigma\beta\le\sqrt{S/H}\varepsilon$}{
            \STATE $\hat{\mathcal{B}}\leftarrow\sa$ \textcolor{blue}{ // Abandon $\mathcal{D}_{\mathrm{src}}$}
        }
        \ELSE{
            \STATE $\hat{\mathcal{B}}\leftarrow$ call \cref{alg:rf}\textcolor{blue}{ // Estimate $\mathcal{B}$}
        }
        \ENDIF
        \STATE $\pi^{\mathrm{final}}\leftarrow$ call \cref{alg:ucbvi} with $\hat{\mathcal{B}}$\textcolor{blue}{ // Explore $\hat{\mathcal{B}}$}
        \RETURN $\pi^{\mathrm{final}}$
    \end{algorithmic}
\end{algorithm}
\paragraph{Step 1: Reward-free shift identification (\cref{alg:rf}).} Even with knowledge of $\beta$ and $\sigma$, accurately estimating $p_{\mathrm{tar}}$ to identify the shifted region $\mathcal{B}$ is still challenging, as we need to control the errors in estimating high-dimensional transitions with finite samples. 
To the best of our knowledge, no existing works have addressed this issue. 
\begin{algorithm}[ht]
    \caption{Reward-free shift identification}\label{alg:rf}
    \begin{algorithmic}[1]
        \REQUIRE Parameters $\beta$, $\delta$, $\sigma$, $\hat{p}_{\mathrm{src}}$
        \FOR{$t=0,1,2,\cdots$}{
            \FOR{$h=H,\cdots,1$}{
            \STATE Update $W_h^t$ using \cref{eq:definition-of-W}
            \STATE Update $\pi_h^{t+1}(\cdot)=\arg\max_{a\in\mathcal{A}}W_h^t(\cdot,a)$
            }
            \ENDFOR
            \STATE \textbf{Break} if $3\sqrt{\rho\pi_1^{t+1} W_1^t}+\rho\pi_1^{t+1} W_1^t\le\sigma\beta/8$
            \STATE Rollout $\pi^{t+1}$ and observe new online samples
            \FOR{$(s,a)\in\sa$}{
                \STATE Update $n^t(s,a)$, $n^t(s,a,s')$ and $\hat{p}^t_{\mathrm{tar}}(\cdot\mid s,a)$
            }
            \ENDFOR
        }
        \ENDFOR
        \RETURN Estimated shifted area $\hat{\mathcal{B}}$ by \cref{eq:estimated-shifted-area}
    \end{algorithmic}
\end{algorithm}

To this end, sufficient online data coverage is required for each $(s,a)$, which aligns with the motivation behind reward-free exploration to collect enough data and achieve optimality for any reward signal $r:\sa\mapsto[0,1]$. Inspired by RF-Express from \cite{menard_fast_2021}, we propose an algorithm in \cref{alg:rf}.  Specifically, we first define an uncertainty function $W_h^{t}(s,a)$, which characterizes data sufficiency in the $t^{\mathrm{th}}$ episode, recursively (with $W_{H+1}^t(s,a)=0$) for all $h\in[H]$ and $(s,a)\in\sa$,
\begin{align}
    W_h^t(s,a)&\triangleq\min\bigg(1,\frac{4Hg_1(\nt,\delta)}{\nt}\nonumber\\
    &+\sum_{s'}\hat{p}_{\mathrm{tar}}^t(s'\mid s,a)\max_{a'\in\mathcal{A}}W_{h+1}^t(s',a')\bigg),\label{eq:definition-of-W}
\end{align}
where $g_1(n,\delta)\triangleq\log(6SAH/\delta)+S\log(8e(n+1))$, $\nt\triangleq \sum_{\tau=1}^t\sum_{h=1}^H\indic{(s^\tau_h,a^\tau_h)=(s,a)}$ denotes the visitation count for $(s,a)$ in the first $t$ episodes and $\hat{p}_{\mathrm{tar}}^t(s,a)$ denotes the corresponding empirical transitions. Accordingly, we select $\pi_h^{t+1}(\cdot)=\arg\max_{a\in\mathcal{A}}W_h^t(\cdot,a)$ to collect online data from $\mathcal{M}_{\mathrm{tar}}$, update $\nt$, $\hat{p}_{\mathrm{tar}}^t(s,a)$ and $W_h^t(s,a)$, and stop until:
\begin{equation*}
    3\sqrt{\rho\pi_1^{t+1} W_1^t}+\rho\pi_1^{t+1} W_1^t\le\sigma\beta/8,
\end{equation*}
where $\rho\pi_1^{t+1} W_1^t=\sum_{s}\rho(s)W_1^t(s,\pi_1^{t+1}(s))$. This stopping criterion is designed to ensure that sufficient data coverage is achieved when \cref{alg:rf} stops. Beyond reward-free exploration, our design further guarantees the confidence intervals 
$$\dtv{p_{\mathrm{tar}}(\cdot\mid s,a)}{\hat{p}^t_{\mathrm{tar}}(\cdot\mid s,a)}\le \beta/4$$ 
is constructed for each $(s,a)$. Then, we estimated the shifted region as:
\begin{align}
    \hat{\mathcal{B}}&\triangleq \left\{(s,a)\in\sa\mid\right. \nonumber \\
    &\quad\left.\dtv{\hat{p}_{\mathrm{src}}(\cdot \mid s, a)}{\hat{p}^t_{\mathrm{tar}}(\cdot \mid s, a)}> \beta/2\right\},\label{eq:estimated-shifted-area}
\end{align}
where $\hat{p}^t_{\mathrm{src}}$ denotes the empirical transitions in $\mathcal{D}_{\mathrm{src}}$. We show that by executing \cref{alg:rf}, the shifted region $\mathcal{B}$ can be identified with high probability within a sample size from $\mathcal{M}_{\mathrm{tar}}$ that is independent of $\varepsilon$, as formally stated in \cref{lem:sample-recognization}. The proof of \cref{lem:sample-recognization} can be found in \cref{appendix:recognization}. 

\begin{lemma}[Sample-efficient shift identification]\label{lem:sample-recognization}
    Let \cref{definition:reachability} hold, and $\delta \in(0,1)$ be given. 
    Suppose the shift between $\mathcal{M}_{\mathsf{tar}}$ and $\mathcal{M}_{\mathsf{src}}$ is $\beta$-separable, and $\mathcal{D}_{\mathrm{src}}$ contains at least $\widetilde{\Omega}(S/\beta^2)$ samples for $\forall (s,a)\in\sa$.
      With probability at least $1-\delta/2$, \cref{alg:rf} can output an estimate of $p_{\mathrm{tar}}$ satisfying
    \begin{equation*}
        \dtv{p_{\mathrm{tar}}(\cdot\mid s,a)}{\hat{p}^t_{\mathrm{tar}}(\cdot\mid s,a)}\le \beta/4, \,\forall(s,a)\in\sa,
    \end{equation*}
    along with the estimated shifted region 
    $\hat{\mathcal{B}}=\mathcal{B}$, using $\widetilde{O}({H^2S^2A}/{(\sigma\beta)^2})$
    samples collected from $\mathcal{M}_{\mathrm{tar}}$.
\end{lemma}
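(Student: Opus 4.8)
The plan is to condition on a single high-probability event and then proceed in three stages: establish that the empirical target transitions concentrate in TV distance as a function of the visitation counts, show that the reward-free stopping rule forces exactly the coverage needed to certify $\dtv{p_{\mathrm{tar}}(\cdot\mid s,a)}{\hat p^t_{\mathrm{tar}}(\cdot\mid s,a)}\le\beta/4$ uniformly over $\sa$, and finally deduce $\hat{\mathcal B}=\mathcal B$ together with the sample count. First I would define the good event $\mathcal E$ on which, simultaneously for every $(s,a)\in\sa$ and every episode $t$,
\[
\dtv{p_{\mathrm{tar}}(\cdot\mid s,a)}{\hat p^t_{\mathrm{tar}}(\cdot\mid s,a)}\le\sqrt{c\,g_1(\nt,\delta)/\nt},
\]
and, using the $\widetilde\Omega(S/\beta^2)$ source samples, $\dtv{\hat p_{\mathrm{src}}(\cdot\mid s,a)}{p_{\mathrm{src}}(\cdot\mid s,a)}<\beta/4$. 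The target inequality is an anytime $L_1$/TV deviation bound for an empirical distribution on $S$ atoms; the dimension factor $S$ and the $\log(8e(\nt+1))$ term inside $g_1$ are precisely what a Weissman/Devroye-type bound produces once it is union-bounded over the random counts and over $\sa$, with the total failure budget set to $\delta/2$. Crucially, the bonus $4Hg_1(\nt,\delta)/\nt$ appearing in \cref{eq:definition-of-W} is of the order of this squared TV radius, so controlling $W$ controls the squared per-$(s,a)$ estimation error.

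The core step is to show that the stopping criterion forces $g_1(\nt,\delta)/\nt$ below a multiple of $\beta^2$ for every $(s,a)$. Following the RF-Express template of \citep{menard_fast_2021}, I would first prove by backward induction on $h$ that $W_h^t$ is optimistic, i.e.\ it upper-bounds the propagated model error, so that $3\sqrt{\rho\pi_1^{t+1}W_1^t}+\rho\pi_1^{t+1}W_1^t$ dominates the worst-case value estimation error (the square-root being the Bernstein variance term and the linear term the lower-order bias). I would then invoke $\sigma$-reachability (\cref{definition:reachability}) to pass from this aggregate quantity to a pointwise bound: for a fixed $(s,a)$ the policy that reaches it with probability at least $\sigma$ certifies $\rho\pi_1^{t+1}W_1^t\ge\sigma\cdot 4Hg_1(\nt,\delta)/\nt$, and comparing against the stopping threshold $\sigma\beta/8$ and solving yields $\nt\gtrsim Hg_1(\nt,\delta)/(\sigma\beta^2)$, hence the desired $\dtv{p_{\mathrm{tar}}(\cdot\mid s,a)}{\hat p^t_{\mathrm{tar}}(\cdot\mid s,a)}\le\beta/4$. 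This aggregate-to-pointwise translation is the main obstacle: standard reward-free analysis only certifies the accuracy of a single scalar value functional, whereas here I need a uniform TV guarantee on the full high-dimensional transition vector at every $(s,a)$. The delicate point is that reachability is stated under $p_{\mathrm{tar}}$ while $W$ is built from $\hat p^t_{\mathrm{tar}}$, so the optimism of $W$ must be strong enough to let the $\hat p^t$-induced occupancy be lower-bounded by the true $\sigma$-reachable occupancy, which is exactly the coupling the $4H$ inflation of the bonus is designed to supply.

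Granting the uniform bound $\dtv{p_{\mathrm{tar}}(\cdot\mid s,a)}{\hat p^t_{\mathrm{tar}}(\cdot\mid s,a)}\le\beta/4$, exact recovery of the shifted region is a short triangle-inequality argument on $\mathcal E$. For $(s,a)\in\mathcal B$, $\beta$-separability (\cref{definition:separation}) gives $\dtv{p_{\mathrm{src}}(\cdot\mid s,a)}{p_{\mathrm{tar}}(\cdot\mid s,a)}\ge\beta$, whence $\dtv{\hat p_{\mathrm{src}}(\cdot\mid s,a)}{\hat p^t_{\mathrm{tar}}(\cdot\mid s,a)}\ge\beta-\beta/4-\beta/4>\beta/2$ using the strict source error, so the pair enters $\hat{\mathcal B}$; for $(s,a)\notin\mathcal B$ we have $p_{\mathrm{src}}=p_{\mathrm{tar}}$, so $\dtv{\hat p_{\mathrm{src}}(\cdot\mid s,a)}{\hat p^t_{\mathrm{tar}}(\cdot\mid s,a)}\le\beta/4+\beta/4=\beta/2$ and the pair is excluded. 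Thus the threshold in \cref{eq:estimated-shifted-area} classifies every pair correctly and $\hat{\mathcal B}=\mathcal B$.

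Finally, for the sample count I would bound the stopping time by a potential/telescoping argument on the cumulative uncertainty $\sum_t\rho\pi_1^{t+1}W_1^t$: while the algorithm has not stopped, each greedy episode contributes at least the threshold, while the growth of the counts $\nt$ makes the summed bonuses telescope up to logarithmic factors. The core step already shows the stopping rule drives each count up to $N=\widetilde\Theta(HS/(\sigma\beta^2))$ (using $g_1=\widetilde O(S)$); converting visits to episodes costs a factor $1/\sigma$ per reachable pair and $H$ steps per episode, so the total number of samples from $\mathcal M_{\mathrm{tar}}$ is $\widetilde O(H\cdot SA\cdot N/\sigma)=\widetilde O(H^2S^2A/(\sigma\beta)^2)$, as claimed.
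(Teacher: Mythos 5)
Your skeleton matches the paper's: a good event, a stopping rule that forces a uniform per-pair TV bound, a triangle-inequality classification of $\hat{\mathcal B}$, and a telescoping bound on the stopping time. The gap is in the core step. You claim that $\sigma$-reachability of a fixed $(s,a)$ certifies $\rho\pi_1^{t+1}W_1^t\ge\sigma\cdot 4Hg_1(\nt,\delta)/\nt$, so that the stopping threshold directly caps $g_1(\nt,\delta)/\nt$ pointwise. But the recursion \cref{eq:definition-of-W} propagates the bonus through the \emph{empirical} kernel $\hat p^t_{\mathrm{tar}}$, so the weight that $(s,a)$ receives inside $\rho\pi_1^{t+1}W_1^t$ is its occupancy under $\hat p^t_{\mathrm{tar}}$, whereas \cref{definition:reachability} only guarantees $\max_\pi\max_h p_h^\pi(s,a)\ge\sigma$ under the true dynamics. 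Early in the run a poorly estimated kernel can route the empirical occupancy of $(s,a)$ arbitrarily close to zero, and the $4H$ inflation of the bonus does not repair this (in the paper that factor merely absorbs a $\sqrt{g_1/n}$ term when $Hg_1/n\ge 1/H$). You flag this as ``the delicate point'' but resolve it by assertion, and that assertion is exactly where the work is; the paper in fact never proves a pointwise count bound of the form $\nt\gtrsim g_1(\nt,\delta)/\beta^2$ at all.

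The paper closes the gap by a different mechanism. It enlarges the reward class to functions $r_h(s,a,s')$ depending on the next state and supported on a single step $h^\star$, proves that $3\sqrt{\rho\pi_1 W_1^t}+\rho\pi_1 W_1^t$ dominates the value-estimation error uniformly over this class and over all policies (\cref{lem:error-W}), and then, for each $(s,a)$, instantiates this bound twice with the true-reachability-maximizing policy $\pi'$: once with $r'=\indic{(s,a)\text{ visited at }h^\star}$, which controls the occupancy discrepancy $|p^{\pi'}_{h^\star}(s,a)-\hat p^{\pi',t}_{h^\star}(s,a)|$, and once with $r''$ additionally indicating a transition into $\{s':p_{\mathrm{tar}}(s'\mid s,a)>\hat p^t_{\mathrm{tar}}(s'\mid s,a)\}$, whose value gap equals $p^{\pi'}_{h^\star}(s,a)\cdot\dtv{p_{\mathrm{tar}}(\cdot\mid s,a)}{\hat p^t_{\mathrm{tar}}(\cdot\mid s,a)}$ up to the discrepancy already controlled by $r'$. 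Reachability then enters through the \emph{true} occupancy $p^{\pi'}_{h^\star}(s,a)\ge\sigma$ multiplying the TV distance, which breaks the circularity you ran into. The paper also shows (the example in \cref{fig:RF-example}) that state-action rewards alone cannot certify transition accuracy, so the next-state-dependent reward class is not optional. The remaining pieces of your argument --- the Weissman-style good event in place of the paper's KL event, the triangle inequality giving $\hat{\mathcal B}=\mathcal B$, and the telescoping sample count --- are sound and essentially the paper's.
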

The confidence interval for transitions with finite-sample guarantees in \cref{lem:sample-recognization} is estabilished by extending reward-free exploration to accomodate more general reward functions $r:[H]\times\sa\times\mathcal{S}\mapsto[0,1]$ in the analysis. 
\paragraph{Step 2: Hybrid UCB value iteration (\cref{alg:ucbvi}).} Once we have the estimated shifted region $\hat{\mathcal{B}}$, it is intuitive for the agent to focus more on exploring the estimated shifted region $\hat{\mathcal{B}}$. To achieve this, we introduce an algorithm that incorporates the additional source dataset $\mathcal{D}_{\mathrm{src}}$ in the design of the exploration bonus summarized in \cref{alg:ucbvi}.  

\begin{algorithm}[ht]
    \caption{Hybrid UCB Value Iteration}\label{alg:ucbvi}
    \begin{algorithmic}[1]
        \REQUIRE Parameters $\delta$, $\varepsilon$, $\hat{\mathcal{B}}$, $\mathcal{D}_{\mathrm{src}}$
        \FOR{$t=0,1,2,\cdots$}{
            \FOR{$h=H,\cdots,1$}{
            \STATE Update $\overline{Q}_h^t$, $G^t_h$ using \cref{eq:upper,eq:definition-G}
            \STATE Update $\pi_h^{t+1}(\cdot)=\arg\max_{a\in\mathcal{A}}\overline{Q}_h^t(\cdot,a)$
            }
            \ENDFOR
            \STATE \textbf{Break} if $\rho\pi_1^{t+1} G_1^t\le\varepsilon$
            \STATE Rollout $\pi^{t+1}$ and observe new online samples
            \FOR{$(s,a)\in\hat{\mathcal{B}}$}{ 
                \STATE Update $n^t(s,a)$, $n^t(s,a,s')$ and $\hat{p}^t_{\mathrm{tar}}(\cdot\mid s,a)$
                \STATE \textcolor{blue}{// Only update $n^t$ and $\hat{p}^t_{\mathrm{tar}}$ inside $\hat{\mathcal{B}}$}
            }
            \ENDFOR
        }
        \ENDFOR
        \RETURN $\pi^{\mathrm{final}}=\pi^{t+1}$
    \end{algorithmic}
\end{algorithm}

This algorithm is inspired by BPI-UCBVI in \cite{menard_fast_2021}; however, in our problem, we carefully design the exploration bonus to leverage the additional offline dataset $\mathcal{D}_{\mathrm{src}}$ while controlling potential bias that it introduces. To effectively use $\mathcal{D}_{\mathrm{src}}$ while avoiding potential bias, we define the upper confidence bounds of the optimal Q-functions and value functions for the estimated shifted region $\hat{\mathcal{B}}$ and its complement $\sa\,/\,\hat{\mathcal{B}}$, respectively:
\begin{subequations}
\begin{align}
    &\overline{Q}_h^t(s, a)\triangleq \min \bigg(3\sqrt{\tmvar(\overline{V}_{h+1}^t)(s,a) \frac{g_2(\tnt(s,a), \delta)}{\tnt(s,a)}} \nonumber\\
    &+ \frac{14H^2 g_1(\tnt(s,a), \delta)}{\tnt(s,a)} + \frac{1}{H} \tpt (\overline{V}_{h+1}^t - \underline{V}_{h+1}^t)(s,a)\nonumber\\
    &+ \tpt \overline{V}_{h+1}^t(s,a)+r(s,a),H\bigg),\label{eq:upper}\\
    &\overline{V}_h^t(s)\triangleq \max_{a \in \mathcal{A}}\,\overline{Q}_h^t(s, a),\ \overline{V}_{H+1}^t(s)\triangleq 0,
\end{align}
\end{subequations}
where $g_2(n,\delta)\triangleq\log(6SAH/\delta)+\log(8e(n+1))$, $\underline{V}_{h+1}^t$ is a lower bound of the optimal value defined similarly in \cref{appendix:main-result}, and $\tmvar(\cdot)$ denotes the empirical variance under $\tpt$. Here, for $(s,a)\in\hat{\mathcal{B}}$, we have $\tnt(s,a)\triangleq n^t(s,a)$ and $\tpt(\cdot\mid s,a)\triangleq \hat{p}_{\mathrm{tar}}^t(\cdot\mid s,a)$; for $(s,a)\notin\hat{\mathcal{B}}$, we have $\tnt(s,a)\triangleq n_{\mathrm{src}}(s,a)$ and $\tpt(\cdot\mid s,a)\triangleq \hat{p}_{\mathrm{src}}(\cdot\mid s,a)$, where $n_{\mathrm{src}}$ denotes the visitation count in $\mathcal{M}_{\mathrm{src}}$. Note that we only update the visitation count in $\hat{\mathcal{B}}$ to remove statistical dependency.

Aiming to achieve optimality in $\mathcal{M}_{\mathrm{tar}}$, we choose $\pi_h^{t+1}(\cdot)=\arg\max_{a\in\mathcal{A}}\overline{Q}_h^t(\cdot,a)$ to collect samples from $\mathcal{M}_{\mathrm{tar}}$ in \cref{alg:rf}. Accordingly, we define the following function $G_h^t(s,a)$ to serve as an upper bound on the optimality gap $V_h^{p_{\mathrm{tar}},\star}-V_h^{p_{\mathrm{tar}},\pi^{t+1}}$ (with $G_{H+1}(s,a)=0$):
\begin{small}
\begin{equation}\label{eq:definition-G}
\begin{aligned}
    &G_h^t(s, a) \triangleq \min \bigg(H,  6\sqrt{\tmvar(\overline{V}_{h+1}^t)(s,a)\frac{g_2(\tnt(s,a), \delta)}{\tnt(s,a)}} \\
    &+  \frac{35H^2 g_1(\tnt(s,a), \delta)}{\tnt(s,a)}
     + ( 1 + \frac{3}{H}) \tpt \pi_{h+1}^{t+1} G_{h+1}^t(s,a)\bigg),
\end{aligned}
\end{equation}
\end{small}%
\cref{alg:ucbvi}  stops when $\rho\pi_1^{t+1} G_1^t\le \varepsilon$, indicating that $\varepsilon$-optimality is achieved in the target domain $\mathcal{M}_{\mathrm{tar}}$. This procedure requires at most $\widetilde{O}(H^3|\mathcal{B}|/\varepsilon^2)$ samples from $\mathcal{M}_{\mathrm{tar}}$, as detailed in the final results in the next section.

\subsection{Theoretical guarantees: sample complexity}
In this subsection, we discuss the total sample complexity of \cref{alg:hybrid}, highlighting its sample efficiency gains compared to the state-of-the-art pure online RL sample complexity and connections to practical transfer algorithms.
\begin{theorem}[Problem-dependent sample complexity]\label{theorem:sample-complexity}
    Let \cref{definition:reachability} hold,  and $\delta \in(0,1)$ and $\varepsilon\in(0,1]$ be given. Suppose the shift between $\mathcal{M}_{\mathsf{tar}}$ and $\mathcal{M}_{\mathsf{src}}$ is $\beta$-separable, and $\mathcal{D}_{\mathrm{src}}$ contains at least $\widetilde{\Omega}(H^3/\varepsilon^2+S/\beta^2)$ samples for $\forall (s,a)\in\sa$. With probability at least $1-\delta$, the output  policy $\pi^{\mathrm{final}}$ of \cref{alg:hybrid} satisfies
\begin{equation}
    V_1^{p_{\mathrm{tar}},\star}(\rho)-V_1^{p_{\mathrm{tar}},\pi^{\mathrm{final}}}(\rho)\le\varepsilon,
\end{equation}
if the total number of online samples collected from $\mathcal{M}_{\mathrm{tar}}$ is  
\begin{equation}\label{eq:hybrid-sample}
    \widetilde{O}\left(\min\left(\frac{H^3SA}{\varepsilon^2},\frac{H^3|\mathcal{B}|}{\varepsilon^2}+\frac{H^2S^2A}{(\sigma\beta)^2}\right)\right).
\end{equation}
\end{theorem}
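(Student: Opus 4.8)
The plan is to analyze the two branches of \cref{alg:hybrid} separately and, within each, establish $\varepsilon$-optimality together with the matching term of the sample bound; the final $\min$ then follows by comparing the two regimes against the threshold $\sigma\beta\le\sqrt{S/H}\varepsilon$. Throughout, I would work on the intersection of the relevant high-probability events --- the identification event of \cref{lem:sample-recognization} (which gives $\hat{\mathcal{B}}=\mathcal{B}$ with probability $\ge 1-\delta/2$ whenever Branch~2 is taken) and a Bernstein-type concentration event for the bonuses of \cref{alg:ucbvi} (holding with probability $\ge 1-\delta/2$) --- so that a union bound delivers the overall $1-\delta$.

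\textbf{Branch 1 ($\sigma\beta\le\sqrt{S/H}\varepsilon$).} Here $\hat{\mathcal{B}}=\sa$, so in \cref{alg:ucbvi} every pair uses $\tnt(s,a)=\nt$ and $\tpt=\hat{p}_{\mathrm{tar}}^t$, and the algorithm collapses to purely online BPI-UCBVI. I would invoke the standard optimism-plus-stopping-rule argument to show $\varepsilon$-optimality at termination and the $\widetilde{O}(H^3SA/\varepsilon^2)$ episode count. It remains to match the $\min$: the threshold gives $(\sigma\beta)^2\le (S/H)\varepsilon^2$, hence $H^2S^2A/(\sigma\beta)^2\ge H^3SA/\varepsilon^2$, so the second argument of the $\min$ dominates, the $\min$ equals $H^3SA/\varepsilon^2$, and this is exactly the achieved cost.

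\textbf{Branch 2 ($\sigma\beta>\sqrt{S/H}\varepsilon$).} By \cref{lem:sample-recognization}, \cref{alg:rf} returns $\hat{\mathcal{B}}=\mathcal{B}$ together with the target-transition confidence bound at a cost of $\widetilde{O}(H^2S^2A/(\sigma\beta)^2)$ online samples. Conditioned on $\hat{\mathcal{B}}=\mathcal{B}$, the decisive structural fact is that on the complement $\sa\setminus\mathcal{B}$ we have $p_{\mathrm{src}}=p_{\mathrm{tar}}$ exactly, so substituting $\tpt=\hat{p}_{\mathrm{src}}$ and $\tnt(s,a)=\srcn$ introduces no bias, and the assumed $\widetilde{\Omega}(H^3/\varepsilon^2)$ source samples per pair make the complement bonuses negligible without any online data there. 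The core of the argument is the analysis of \cref{alg:ucbvi} on $\mathcal{B}$: I would (i) show by backward induction that $\overline{Q}_h^t,\overline{V}_h^t$ of \cref{eq:upper} are valid upper confidence bounds on $Q_h^{p_{\mathrm{tar}},\star},V_h^{p_{\mathrm{tar}},\star}$, using Bernstein concentration for the online estimates on $\mathcal{B}$ and the source estimates on the complement uniformly; (ii) show $G_h^t$ of \cref{eq:definition-G} upper-bounds the per-step gap $V_h^{p_{\mathrm{tar}},\star}-V_h^{p_{\mathrm{tar}},\pi^{t+1}}$, so that the stopping rule $\rho\pi_1^{t+1}G_1^t\le\varepsilon$ certifies $\varepsilon$-optimality; and (iii) bound the number of episodes before termination by summing $G_1^t$ over $t$ and applying a law-of-total-variance argument to convert the variance-weighted bonuses into the sharp $\widetilde{O}(H^3|\mathcal{B}|/\varepsilon^2)$ count, where online updates occur only on $\mathcal{B}$ so the effective dimension is $|\mathcal{B}|$ rather than $SA$. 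Adding the two costs gives $\widetilde{O}(H^3|\mathcal{B}|/\varepsilon^2+H^2S^2A/(\sigma\beta)^2)$; since $|\mathcal{B}|\le SA$ and the branch condition yields $H^2S^2A/(\sigma\beta)^2< H^3SA/\varepsilon^2$, this sum is within a constant factor of the smaller of the two arguments, hence $\widetilde{O}(\min(\cdot,\cdot))$.

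\textbf{Main obstacle.} The hardest part is step~(iii) together with the uniform concentration in (i): I must control the variance-weighted bonuses across a \emph{mixed} data source --- online counts on $\mathcal{B}$, offline counts on its complement --- while respecting the ``only update $\nt$ inside $\hat{\mathcal{B}}$'' rule that decouples the statistical dependency between the exploration policy and the reused online transitions. Establishing optimism despite this dependency, and then extracting the law-of-total-variance cancellation that yields the $H^3$ (rather than $H^4$) scaling for the hybrid estimator, is where the bulk of the technical work lies; the reward-free confidence guarantee of \cref{lem:sample-recognization} and the exact no-shift identity $p_{\mathrm{src}}=p_{\mathrm{tar}}$ on the complement are the two facts that make the argument tractable.
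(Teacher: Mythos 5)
Your proposal is correct and follows essentially the same route as the paper: the same case split on the threshold $\sigma\beta\le\sqrt{S/H}\,\varepsilon$, the same reliance on \cref{lem:sample-recognization} for identification, and the same optimism / gap-function $G_h^t$ / law-of-total-variance machinery for \cref{alg:ucbvi}, with the complement of $\hat{\mathcal{B}}$ handled via the unbiasedness of $\hat{p}_{\mathrm{src}}$ and the assumed $\widetilde{\Omega}(H^3/\varepsilon^2)$ source samples. The only remaining work is filling in the concentration events and the summation-over-episodes inequality, which the paper carries out exactly as you outline.
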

\cref{theorem:sample-complexity} provides a problem-dependent sample complexity of \cref{alg:hybrid} that is at least as good as the state-of-the-art $\widetilde{O}(H^3SA/\varepsilon^2)$ in pure online RL \citep{menard_fast_2021,wainwright2019variancereducedqlearningminimaxoptimal}. Specifically, for a given $\beta$:
\begin{itemize}[nosep,leftmargin=*]
    \item When $\varepsilon\ge\Omega(\sqrt{H/S}\sigma\beta)$: it captures the scenarios where the desired optimality gap $\varepsilon$ is at least the order of the degree of the dynamics shift $\beta$. In this case, the sample complexity of \cref{alg:hybrid} becomes $\widetilde{O}(H^3SA/\varepsilon^2)$, which matches the state-of-the-art pure online RL sample complexity, showing that our framework provably avoids negative transfer in terms of sample efficiency.
    \item When $\varepsilon<\Omega(\sqrt{H/S}\sigma\beta)$: the comparisons between the sample complexity of \cref{alg:hybrid} in \cref{eq:hybrid-sample} and the state-of-the-art pure online RL is as follows:
    \begin{equation*}
      \widetilde{O}\left(\frac{H^3|\mathcal{B}|}{\varepsilon^2}\right)  \quad v.s. \quad  \widetilde{O}\left(\frac{H^3SA}{\varepsilon^2}\right),
    \end{equation*}
    where $|\mathcal{B}|$ represents the cardinality of the shifted region, a problem-dependent parameter in HTRL that is strictly no larger than $SA$. It indicates that \cref{alg:hybrid} provably achieves better sample efficiency than state-of-the-art pure online RL algorithms in HTRL tasks, as long as the shift does not cover the entire state-action space, as validated in \cref{section:experiments}. In many practical scenarios, such as training cooking agents \citep{beck_survey_2023} or autonomous driving \citep{xiong2016combiningdeepreinforcementlearning}, environmental variations between source and target environments (e.g., different kitchen layouts or obstacle positions) typically affect only a small portion of the state-action space, meaning $|\mathcal{B}|\ll SA$, with a large separable shift. This enables significant sample efficiency gains from reusing the source dataset.
\end{itemize}

Our results demonstrate that for HTRL tasks with $\beta$-separable shift between source and target environments, \cref{alg:hybrid} provably avoids harmful information transfer and enhances sample efficiency compared to pure online RL. 
While \cref{definition:separation} depends on $\beta$, we evaluate \cref{alg:hybrid} in broader scenarios where an inaccurate $\beta$ is used, as discussed in \cref{section:experiments}, demonstrating the robustness of \cref{alg:hybrid}. 

\paragraph{Connections with practical cross-domain transfer algorithms.} Practical algorithms for cross-domain transfer RL often involve training a neural network classifier to distinguish between source and target transitions \citep{eysenbach_off-dynamics_2021,liu2022daradynamicsawarerewardaugmentation,niu_h2o_2023,wen_contrastive_2024} and reusing source data accordingly. Our sample complexity results provide theoretical insights for determining the data collection budget in the target domain. They also demonstrate that the estimated transition shift serves as an effective metric for utilizing the source data and can provably improve sample efficiency.

\paragraph{Extensions of \cref{theorem:sample-complexity}: variants of source data.}
In \cref{theorem:sample-complexity}, we assume abundant samples from the source domain, which is a common assumption since we primarily focus on sample complexity in the target domain $\mathcal{M}_{\mathrm{tar}}$. However, even when the source dataset $\mathcal{D}_{\mathrm{src}}$ is insufficient, similar results hold. In particular, we consider the set of state-action pairs where $\mathcal{D}_{\mathrm{src}}$ lacks sufficient samples:
\begin{align*}
    \mathcal{C}\triangleq\{(s,a)\in\sa\mid 
    n_{\mathrm{src}}(s,a)<\widetilde{\Omega}(H^3/\varepsilon^2+S/\beta^2) \}.
\end{align*}
By adjusting the input of \cref{alg:ucbvi} to $\hat{\mathcal{B}}\cup \mathcal{C}$,
\cref{alg:hybrid} can still achieve the identical optimality with the sample complexity as below:
    \begin{equation*}
        \widetilde{O}\left(\min\left(\frac{H^3SA}{\varepsilon^2},\frac{H^3|\mathcal{B}\cup\mathcal{C}|}{\varepsilon^2}+\frac{H^2S^2A}{(\sigma\beta)^2}\right)\right).
    \end{equation*}
    
Similarly, when $N$ datasets from $N$ different source MDPs are available, \cref{alg:hybrid} can still function by executing \cref{alg:rf} once to identify the shifts in the target transition relative to each source transition and selecting useful source data accordingly. Let $\mathcal{B}_i$ denote the corresponding shifted region for each source MDP~$i$. Under the conditions of \cref{theorem:sample-complexity}, the required  sample complexity in this setting becomes
\begin{equation*}
    \widetilde{O}\left(\min\left(\frac{H^3SA}{\varepsilon^2},\frac{H^3|\cap_{i\in[N]}\mathcal{B}_i|}{\varepsilon^2}+\frac{H^2S^2A}{(\sigma\beta)^2}\right)\right).
\end{equation*}

\section{Experiments}\label{section:experiments}
We evaluate our proposed algorithm by comparing it to the state-of-the-art online RL baseline, BPI-UCBVI \cite{menard_fast_2021}, in the GridWorld environment  ($S=16,A=4,H=20$).

In the source and the target environments, the agent may fail to take an action and go to a wrong direction. Compared with the source environment, the target environment includes three absorbing states. The source dataset is collected by running \cref{alg:rf} in the source environment for $T=1\times 10^5$ episodes, which satisfies the conditions in \cref{theorem:sample-complexity}. We implement both algorithms in the benchmark rlberry \citep{rlberry}, similar to that in \cite{menard_fast_2021}. See \cref{appendix:experiment} for a detailed introduction of the experiment setup. The code is available at \url{https://github.com/SilentEchoes77/hybrid-transfer-rl}. The results are presented in \cref{fig:experiments}, averaged over 5 random seeds with a 95\% confidence interval. 

\begin{figure}[ht]
    \centering
    \subfloat[]{
        \centering
        \includegraphics[width=0.47\linewidth]{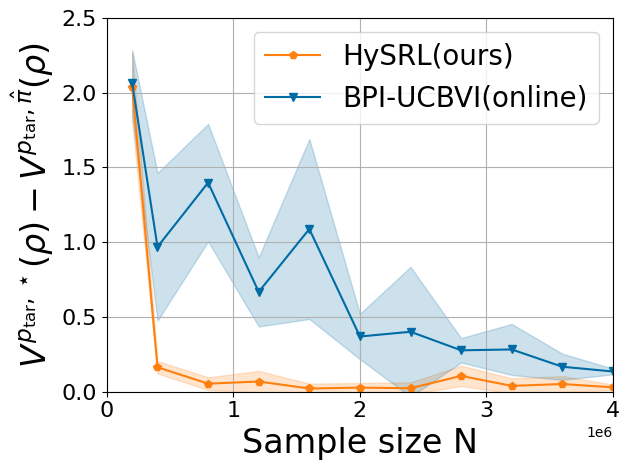}
        \label{fig:gap-ci}
    }
    \hspace{-0.4cm}
    \subfloat[]{
        \centering
        \includegraphics[width=0.47\linewidth]{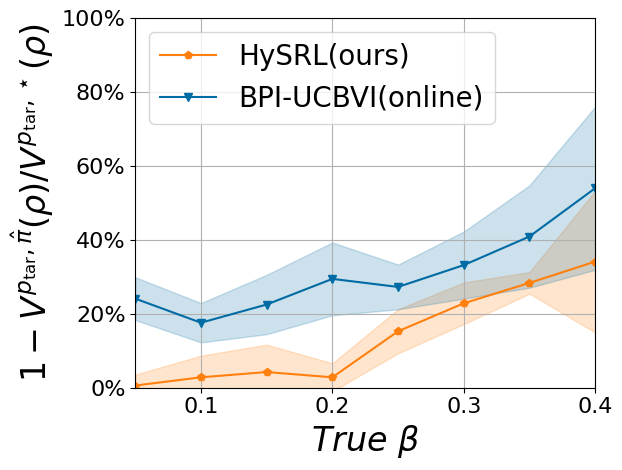}
        \label{fig:percentage-drop-ci}
    }
    \caption{\cref{fig:gap-ci} shows the optimality gap of HySRL (ours) and BPI-UCBVI as the sample size varies. \cref{fig:percentage-drop-ci} presents the percentage optimality gap of HySRL (ours) and BPI-UCBVI as the true $\beta$ varies.}
    \label{fig:experiments}
\end{figure}

As shown in \cref{fig:gap-ci}, \cref{alg:hybrid} learns the optimal policy with approximately $1\times 10^6$ samples from the target environment. In contrast, BPI-UCBVI converges more slowly, highlighting the data inefficiency of pure online RL. This demonstrates that transferring shifted-dynamics data from a source environment can significantly improve sample efficiency.

To assess whether a correct $\beta$ is necessary, we conduct an ablation study with the input $\beta=0.45$, while the true $\beta$ ranges from 0.05 to 0.4. As shown in \cref{fig:percentage-drop-ci}, even when \cref{definition:separation} is not satisfied, the performance degradation of the output policy from \cref{alg:hybrid} is minor and still outperforms BPI-UCBVI within finite samples, demonstrating the robustness of our algorithm.
\section{Conclusion}
This paper introduces Hybrid Transfer RL, providing a framework to analyze the finite-sample guarantees of practical transfer algorithms. We establish a worst-case lower bound for general HTRL, demonstrating that it cannot outperform online RL in its most general form. 
However, in more practical scenarios, we show that transferring shifted-dynamics data can provably reduce sample complexity in the target environment, offering theoretical insights for algorithm design.

\section*{Acknowledgment}
The work of C. Qu is supported in part by NSFC through 723B1001 and by the Summer Undergraduate Research Fellowships at California Institute of Technology. The work of L. Shi is supported in part by the Resnick Institute and Computing, Data, and Society Postdoctoral Fellowship at California Institute of Technology. K. Panaganti is supported in part by the Resnick Institute and the ‘PIMCO Postdoctoral Fellow in Data Science’ fellowship at the California Institute of Technology. The work of P. You is supported in part from NSFC through 723B1001, 72431001, 72201007, T2121002, 72131001. The work of A. Wierman is supported in part from the NSF through CNS-2146814, CPS-2136197, CNS-2106403, NGSDI-2105648.
\bibliographystyle{apalike}
\bibliography{hybrid}

\begin{thebibliography}{}

\bibitem[Ammar et~al., 2015]{autonomous}
Ammar, H.~B., Eaton, E., Luna, J.~M., and Ruvolo, P. (2015).
\newblock Autonomous cross-domain knowledge transfer in lifelong policy gradient reinforcement learning.
\newblock In {\em Proceedings of the 24th International Conference on Artificial Intelligence}, IJCAI'15, page 3345–3351. AAAI Press.

\bibitem[Azar et~al., 2017]{azar2017minimax}
Azar, M.~G., Osband, I., and Munos, R. (2017).
\newblock Minimax regret bounds for reinforcement learning.
\newblock In {\em Proceedings of the 34th International Conference on Machine Learning-Volume 70}, pages 263--272. JMLR. org.

\bibitem[Bai et~al., 2019]{bai2019provably}
Bai, Y., Xie, T., Jiang, N., and Wang, Y.-X. (2019).
\newblock Provably efficient $q$-learning with low switching cost.
\newblock In {\em Advances in Neural Information Processing Systems}, pages 8002--8011.

\bibitem[Beck et~al., 2024]{beck_survey_2023}
Beck, J., Vuorio, R., Liu, E.~Z., Xiong, Z., Zintgraf, L., Finn, C., and Whiteson, S. (2024).
\newblock A survey of meta-reinforcement learning.
\newblock {\em arXiv preprint arXiv:2301.08028}.

\bibitem[Bertsekas, 2007]{bertsekas2010dynamic}
Bertsekas, D.~P. (2007).
\newblock {\em Dynamic Programming and Optimal Control, Vol. II}.
\newblock Athena Scientific, 3rd edition.

\bibitem[Brunskill and Li, 2013]{brunskill2013}
Brunskill, E. and Li, L. (2013).
\newblock Sample complexity of multi-task reinforcement learning.
\newblock In {\em Proceedings of the Twenty-Ninth Conference on Uncertainty in Artificial Intelligence}, UAI'13, page 122–131, Arlington, Virginia, USA. AUAI Press.

\bibitem[Chen et~al., 2024a]{chen2022statisticalefficiencyrewardfreeexploration}
Chen, J., Modi, A., Krishnamurthy, A., Jiang, N., and Agarwal, A. (2024a).
\newblock On the statistical efficiency of reward-free exploration in non-linear rl.
\newblock In {\em Proceedings of the 36th International Conference on Neural Information Processing Systems}, NIPS '22, Red Hook, NY, USA. Curran Associates Inc.

\bibitem[Chen et~al., 2024b]{chen2024domainadaptationofflinereinforcement}
Chen, W., Mishra, S., and Paternain, S. (2024b).
\newblock Domain adaptation for offline reinforcement learning with limited samples.
\newblock {\em arXiv preprint arXiv:2408.12136}.

\bibitem[Chen et~al., 2022]{chen2022understandingdomainrandomizationsimtoreal}
Chen, X., Hu, J., Jin, C., Li, L., and Wang, L. (2022).
\newblock Understanding domain randomization for sim-to-real transfer.
\newblock {\em arXiv preprint arXiv:2110.03239}.

\bibitem[Chua et~al., 2023]{chua_provable_2023}
Chua, K., Lei, Q., and Lee, J. (2023).
\newblock Provable hierarchy-based meta-reinforcement learning.
\newblock In Ruiz, F., Dy, J., and van~de Meent, J.-W., editors, {\em Proceedings of The 26th International Conference on Artificial Intelligence and Statistics}, volume 206 of {\em Proceedings of Machine Learning Research}, pages 10918--10967. PMLR.

\bibitem[Domingues et~al., 2021a]{rlberry}
Domingues, O.~D., Flet-Berliac, Y., Leurent, E., M{\'e}nard, P., Shang, X., and Valko, M. (2021a).
\newblock {rlberry - A Reinforcement Learning Library for Research and Education}.

\bibitem[Domingues et~al., 2021b]{domingues2021episodic}
Domingues, O.~D., M{\'e}nard, P., Kaufmann, E., and Valko, M. (2021b).
\newblock Episodic reinforcement learning in finite {MDP}s: Minimax lower bounds revisited.
\newblock In {\em Algorithmic Learning Theory}, pages 578--598. PMLR.

\bibitem[Domingues et~al., 2021c]{domingues2022kernelbased}
Domingues, O.~D., Menard, P., Pirotta, M., Kaufmann, E., and Valko, M. (2021c).
\newblock Kernel-based reinforcement learning: A finite-time analysis.
\newblock In Meila, M. and Zhang, T., editors, {\em Proceedings of the 38th International Conference on Machine Learning}, volume 139 of {\em Proceedings of Machine Learning Research}, pages 2783--2792. PMLR.

\bibitem[Dong et~al., 2019]{dong2019q}
Dong, K., Wang, Y., Chen, X., and Wang, L. (2019).
\newblock {Q}-learning with {UCB} exploration is sample efficient for infinite-horizon {MDP}.
\newblock {\em arXiv preprint arXiv:1901.09311}.

\bibitem[Doshi-Velez and Konidaris, 2013]{DoshiVelez2013HiddenPM}
Doshi-Velez, F. and Konidaris, G.~D. (2013).
\newblock Hidden parameter markov decision processes: A semiparametric regression approach for discovering latent task parametrizations.
\newblock {\em IJCAI : proceedings of the conference}, 2016:1432--1440.

\bibitem[Duan et~al., 2020]{duan2020minimax}
Duan, Y., Jia, Z., and Wang, M. (2020).
\newblock Minimax-optimal off-policy evaluation with linear function approximation.
\newblock In {\em International Conference on Machine Learning}, pages 2701--2709. PMLR.

\bibitem[Duan et~al., 2016]{duan2016rl2fastreinforcementlearning}
Duan, Y., Schulman, J., Chen, X., Bartlett, P.~L., Sutskever, I., and Abbeel, P. (2016).
\newblock Rl$^2$: Fast reinforcement learning via slow reinforcement learning.
\newblock {\em arXiv preprint arXiv:1611.02779}.

\bibitem[Duan et~al., 2021]{duan2021optimal}
Duan, Y., Wang, M., and Wainwright, M.~J. (2021).
\newblock Optimal policy evaluation using kernel-based temporal difference methods.
\newblock {\em arXiv preprint arXiv:2109.12002}.

\bibitem[Durrett, 2019]{durrett_probability}
Durrett, R. (2019).
\newblock Probability: {Theory} and {Examples}.
\newblock {\em Cambridge University Press}.

\bibitem[Espeholt et~al., 2018]{espeholt_impala_2018}
Espeholt, L., Soyer, H., Munos, R., Simonyan, K., Mnih, V., Ward, T., Doron, Y., Firoiu, V., Harley, T., Dunning, I., Legg, S., and Kavukcuoglu, K. (2018).
\newblock {IMPALA}: Scalable distributed deep-{RL} with importance weighted actor-learner architectures.
\newblock In Dy, J. and Krause, A., editors, {\em Proceedings of the 35th International Conference on Machine Learning}, volume~80 of {\em Proceedings of Machine Learning Research}, pages 1407--1416. PMLR.

\bibitem[Eysenbach et~al., 2021]{eysenbach_off-dynamics_2021}
Eysenbach, B., Asawa, S., Chaudhari, S., Levine, S., and Salakhutdinov, R. (2021).
\newblock Off-dynamics reinforcement learning: Training for transfer with domain classifiers.
\newblock {\em arXiv preprint arXiv:2006.13916}.

\bibitem[Finn et~al., 2017]{finn2017modelagnosticmetalearningfastadaptation}
Finn, C., Abbeel, P., and Levine, S. (2017).
\newblock Model-agnostic meta-learning for fast adaptation of deep networks.
\newblock In Precup, D. and Teh, Y.~W., editors, {\em Proceedings of the 34th International Conference on Machine Learning}, volume~70 of {\em Proceedings of Machine Learning Research}, pages 1126--1135. PMLR.

\bibitem[Gheshlaghi~Azar et~al., 2013]{pac-azar}
Gheshlaghi~Azar, M., Munos, R., and Kappen, H.~J. (2013).
\newblock Minimax pac bounds on the sample complexity of reinforcement learning with a generative model.
\newblock {\em Mach. Learn.}, 91(3):325–349.

\bibitem[He et~al., 2021]{he2020nearly}
He, J., Zhou, D., and Gu, Q. (2021).
\newblock Nearly minimax optimal reinforcement learning for discounted mdps.
\newblock In Ranzato, M., Beygelzimer, A., Dauphin, Y., Liang, P., and Vaughan, J.~W., editors, {\em Advances in Neural Information Processing Systems}, volume~34, pages 22288--22300. Curran Associates, Inc.

\bibitem[He et~al., 2023]{he_robust_2023}
He, S., Wang, Y., Han, S., Zou, S., and Miao, F. (2023).
\newblock A robust and constrained multi-agent reinforcement learning electric vehicle rebalancing method in amod systems.
\newblock In {\em 2023 IEEE/RSJ International Conference on Intelligent Robots and Systems (IROS)}, pages 5637--5644.

\bibitem[Hsu et~al., 2012]{hsu2012spectralalgorithmlearninghidden}
Hsu, D., Kakade, S.~M., and Zhang, T. (2012).
\newblock A spectral algorithm for learning hidden markov models.
\newblock {\em arXiv preprint arXiv:0811.4413}.

\bibitem[Jafarnia-Jahromi et~al., 2020]{jafarnia2020model}
Jafarnia-Jahromi, M., Wei, C.-Y., Jain, R., and Luo, H. (2020).
\newblock A model-free learning algorithm for infinite-horizon average-reward {MDP}s with near-optimal regret.
\newblock {\em arXiv preprint arXiv:2006.04354}.

\bibitem[Jaksch et~al., 2010]{JMLR:v11:jaksch10a}
Jaksch, T., Ortner, R., and Auer, P. (2010).
\newblock Near-optimal regret bounds for reinforcement learning.
\newblock {\em Journal of Machine Learning Research}, 11(51):1563--1600.

\bibitem[Jiang and Huang, 2020]{jiang2020minimax}
Jiang, N. and Huang, J. (2020).
\newblock Minimax value interval for off-policy evaluation and policy optimization.
\newblock {\em Advances in Neural Information Processing Systems}, 33:2747--2758.

\bibitem[Jiang and Li, 2016]{jiang2016doubly}
Jiang, N. and Li, L. (2016).
\newblock Doubly robust off-policy value evaluation for reinforcement learning.
\newblock In {\em International Conference on Machine Learning}, pages 652--661. PMLR.

\bibitem[Jin et~al., 2018]{jin2018q}
Jin, C., Allen-Zhu, Z., Bubeck, S., and Jordan, M.~I. (2018).
\newblock Is q-learning provably efficient?
\newblock In Bengio, S., Wallach, H., Larochelle, H., Grauman, K., Cesa-Bianchi, N., and Garnett, R., editors, {\em Advances in Neural Information Processing Systems}, volume~31. Curran Associates, Inc.

\bibitem[Jin et~al., 2020]{jin2020rewardfreeexplorationreinforcementlearning}
Jin, C., Krishnamurthy, A., Simchowitz, M., and Yu, T. (2020).
\newblock Reward-free exploration for reinforcement learning.
\newblock In III, H.~D. and Singh, A., editors, {\em Proceedings of the 37th International Conference on Machine Learning}, volume 119 of {\em Proceedings of Machine Learning Research}, pages 4870--4879. PMLR.

\bibitem[Jonsson et~al., 2020]{jonsson2020planning}
Jonsson, A., Kaufmann, E., Ménard, P., Domingues, O.~D., Leurent, E., and Valko, M. (2020).
\newblock Planning in markov decision processes with gap-dependent sample complexity.
\newblock {\em arXiv preprint arXiv:2006.05879}.

\bibitem[Kallus and Uehara, 2020]{kallus2020double}
Kallus, N. and Uehara, M. (2020).
\newblock Double reinforcement learning for efficient off-policy evaluation in markov decision processes.
\newblock {\em Journal of Machine Learning Research}, 21(167):1--63.

\bibitem[Kaufmann et~al., 2021]{kaufmann2020adaptiverewardfreeexploration}
Kaufmann, E., M{\'e}nard, P., Darwiche~Domingues, O., Jonsson, A., Leurent, E., and Valko, M. (2021).
\newblock Adaptive reward-free exploration.
\newblock In Feldman, V., Ligett, K., and Sabato, S., editors, {\em Proceedings of the 32nd International Conference on Algorithmic Learning Theory}, volume 132 of {\em Proceedings of Machine Learning Research}, pages 865--891. PMLR.

\bibitem[Kwon et~al., 2024]{kwon2021rllatentmdpsregret}
Kwon, J., Efroni, Y., Caramanis, C., and Mannor, S. (2024).
\newblock Rl for latent mdps: regret guarantees and a lower bound.
\newblock In {\em Proceedings of the 35th International Conference on Neural Information Processing Systems}, NIPS '21, Red Hook, NY, USA. Curran Associates Inc.

\bibitem[Lattimore and Hutter, 2012]{lattimore2012pacboundsdiscountedmdps}
Lattimore, T. and Hutter, M. (2012).
\newblock Pac bounds for discounted mdps.
\newblock In {\em Proceedings of the 23rd International Conference on Algorithmic Learning Theory}, ALT'12, page 320–334, Berlin, Heidelberg. Springer-Verlag.

\bibitem[Lattimore and Szepesv{\'a}ri, 2020]{lattimore2020bandit}
Lattimore, T. and Szepesv{\'a}ri, C. (2020).
\newblock {\em Bandit algorithms}.
\newblock Cambridge University Press.

\bibitem[Lehmann and Casella, 2006]{lehmann2006theory}
Lehmann, E.~L. and Casella, G. (2006).
\newblock {\em Theory of point estimation}.
\newblock Springer Science \& Business Media.

\bibitem[Li et~al., 2024]{li_reward-agnostic_2023}
Li, G., Zhan, W., Lee, J.~D., Chi, Y., and Chen, Y. (2024).
\newblock Reward-agnostic fine-tuning: provable statistical benefits of hybrid reinforcement learning.
\newblock In {\em Proceedings of the 37th International Conference on Neural Information Processing Systems}. Curran Associates Inc.

\bibitem[Li et~al., 2014]{li2014minimax}
Li, L., Munos, R., and Szepesv{\'a}ri, C. (2014).
\newblock On minimax optimal offline policy evaluation.
\newblock {\em arXiv preprint arXiv:1409.3653}.

\bibitem[Li et~al., 2023]{li2023understandingcomplexitygainssingletask}
Li, Q., Zhai, Y., Ma, Y., and Levine, S. (2023).
\newblock Understanding the complexity gains of single-task {RL} with a curriculum.
\newblock In Krause, A., Brunskill, E., Cho, K., Engelhardt, B., Sabato, S., and Scarlett, J., editors, {\em Proceedings of the 40th International Conference on Machine Learning}, volume 202 of {\em Proceedings of Machine Learning Research}, pages 20412--20451. PMLR.

\bibitem[Liu et~al., 2022]{liu2022daradynamicsawarerewardaugmentation}
Liu, J., Zhang, H., and Wang, D. (2022).
\newblock Dara: Dynamics-aware reward augmentation in offline reinforcement learning.
\newblock {\em arXiv preprint arXiv:2203.06662}.

\bibitem[Liu and Su, 2021]{liu2020gamma}
Liu, S. and Su, H. (2021).
\newblock Regret bounds for discounted mdps.
\newblock {\em arXiv preprint arXiv:2002.05138}.

\bibitem[Liu and Xu, 2024]{liu2024minimaxoptimalcomputationallyefficient}
Liu, Z. and Xu, P. (2024).
\newblock Minimax optimal and computationally efficient algorithms for distributionally robust offline reinforcement learning.
\newblock {\em arXiv preprint arXiv:2403.09621}.

\bibitem[Luo et~al., 2022]{Luo_Jiang_Yu_Zhang_Zhang_2022}
Luo, F.-M., Jiang, S., Yu, Y., Zhang, Z., and Zhang, Y.-F. (2022).
\newblock Adapt to environment sudden changes by learning a context sensitive policy.
\newblock {\em Proceedings of the AAAI Conference on Artificial Intelligence}, 36(7):7637--7646.

\bibitem[Ma et~al., 2023]{ma2023distributionallyrobustofflinereinforcement}
Ma, X., Liang, Z., Blanchet, J., Liu, M., Xia, L., Zhang, J., Zhao, Q., and Zhou, Z. (2023).
\newblock Distributionally robust offline reinforcement learning with linear function approximation.
\newblock {\em arXiv preprint arXiv:2209.06620}.

\bibitem[Menard et~al., 2021]{menard2021ucb}
Menard, P., Domingues, O.~D., Shang, X., and Valko, M. (2021).
\newblock Ucb momentum q-learning: Correcting the bias without forgetting.
\newblock In Meila, M. and Zhang, T., editors, {\em Proceedings of the 38th International Conference on Machine Learning}, volume 139 of {\em Proceedings of Machine Learning Research}, pages 7609--7618. PMLR.

\bibitem[Mutti and Tamar, 2024]{mutti2024testtimeregretminimizationmeta}
Mutti, M. and Tamar, A. (2024).
\newblock Test-time regret minimization in meta reinforcement learning.
\newblock {\em arXiv preprint arXiv:2406.02282}.

\bibitem[Ménard et~al., 2021]{menard_fast_2021}
Ménard, P., Domingues, O.~D., Jonsson, A., Kaufmann, E., Leurent, E., and Valko, M. (2021).
\newblock Fast active learning for pure exploration in reinforcement learning.
\newblock In {\em Proceedings of the 38th {International} {Conference} on {Machine} {Learning}}, pages 7599--7608. PMLR.
\newblock ISSN: 2640-3498.

\bibitem[Niu et~al., 2024]{niu_comprehensive_2024}
Niu, H., Hu, J., Zhou, G., and Zhan, X. (2024).
\newblock A comprehensive survey of cross-domain policy transfer for embodied agents.
\newblock In Larson, K., editor, {\em Proceedings of the Thirty-Third International Joint Conference on Artificial Intelligence, {IJCAI-24}}, pages 8197--8206. International Joint Conferences on Artificial Intelligence Organization.
\newblock Survey Track.

\bibitem[Niu et~al., 2023]{niu_h2o_2023}
Niu, H., Ji, T., Liu, B., Zhao, H., Zhu, X., Zheng, J., Huang, P., Zhou, G., Hu, J., and Zhan, X. (2023).
\newblock H2o+: An improved framework for hybrid offline-and-online rl with dynamics gaps.
\newblock {\em arXiv preprint arXiv:2309.12716}.

\bibitem[Niu et~al., 2022]{niu_when_2023}
Niu, H., sharma, s., Qiu, Y., Li, M., Zhou, G., HU, J., and Zhan, X. (2022).
\newblock When to trust your simulator: Dynamics-aware hybrid offline-and-online reinforcement learning.
\newblock In Koyejo, S., Mohamed, S., Agarwal, A., Belgrave, D., Cho, K., and Oh, A., editors, {\em Advances in Neural Information Processing Systems}, volume~35, pages 36599--36612. Curran Associates, Inc.

\bibitem[O\textquotesingle~Donoghue, 2021]{odonoghue2022variationalbayesianreinforcementlearning}
O\textquotesingle~Donoghue, B. (2021).
\newblock Variational bayesian reinforcement learning with regret bounds.
\newblock In Ranzato, M., Beygelzimer, A., Dauphin, Y., Liang, P., and Vaughan, J.~W., editors, {\em Advances in Neural Information Processing Systems}, volume~34, pages 28208--28221. Curran Associates, Inc.

\bibitem[Panaganti and Kalathil, 2022]{panaganti22a}
Panaganti, K. and Kalathil, D. (2022).
\newblock Sample complexity of robust reinforcement learning with a generative model.
\newblock In {\em International Conference on Artificial Intelligence and Statistics (AISTATS)}, pages 9582--9602.

\bibitem[Peng et~al., 2018]{8460528}
Peng, X.~B., Andrychowicz, M., Zaremba, W., and Abbeel, P. (2018).
\newblock Sim-to-real transfer of robotic control with dynamics randomization.
\newblock In {\em 2018 IEEE International Conference on Robotics and Automation (ICRA)}, pages 3803--3810.

\bibitem[Qiu et~al., 2021]{qiu2022rewardfreerlkernelneural}
Qiu, S., Ye, J., Wang, Z., and Yang, Z. (2021).
\newblock On reward-free rl with kernel and neural function approximations: Single-agent mdp and markov game.
\newblock In Meila, M. and Zhang, T., editors, {\em Proceedings of the 38th International Conference on Machine Learning}, volume 139 of {\em Proceedings of Machine Learning Research}, pages 8737--8747. PMLR.

\bibitem[Rashidinejad et~al., 2021]{rashidinejad2023bridgingofflinereinforcementlearning}
Rashidinejad, P., Zhu, B., Ma, C., Jiao, J., and Russell, S. (2021).
\newblock Bridging offline reinforcement learning and imitation learning: A tale of pessimism.
\newblock In Ranzato, M., Beygelzimer, A., Dauphin, Y., Liang, P., and Vaughan, J.~W., editors, {\em Advances in Neural Information Processing Systems}, volume~34, pages 11702--11716. Curran Associates, Inc.

\bibitem[Ren et~al., 2021]{ren2021nearly}
Ren, T., Li, J., Dai, B., Du, S.~S., and Sanghavi, S. (2021).
\newblock Nearly horizon-free offline reinforcement learning.
\newblock {\em Advances in neural information processing systems}, 34.

\bibitem[Serrano et~al., 2023]{serrano2023similaritybasedknowledgetransfercrossdomain}
Serrano, S.~A., Martinez-Carranza, J., and Sucar, L.~E. (2023).
\newblock Similarity-based knowledge transfer for cross-domain reinforcement learning.
\newblock {\em arXiv preprint arXiv:2312.03764}.

\bibitem[Shi et~al., 2023]{NEURIPS2023_fc8ee7c7}
Shi, L., Li, G., Wei, Y., Chen, Y., Geist, M., and Chi, Y. (2023).
\newblock The curious price of distributional robustness in reinforcement learning with a generative model.
\newblock In Oh, A., Naumann, T., Globerson, A., Saenko, K., Hardt, M., and Levine, S., editors, {\em Advances in Neural Information Processing Systems}, volume~36, pages 79903--79917. Curran Associates, Inc.

\bibitem[Silver et~al., 2017]{silver_mastering_2017}
Silver, D., Schrittwieser, J., Simonyan, K., Antonoglou, I., Huang, A., Guez, A., Hubert, T., Baker, L., Lai, M., Bolton, A., et~al. (2017).
\newblock Mastering the game of go without human knowledge.
\newblock {\em nature}, 550(7676):354--359.

\bibitem[Sodhani et~al., 2021]{sodhani2021multitaskreinforcementlearningcontextbased}
Sodhani, S., Zhang, A., and Pineau, J. (2021).
\newblock Multi-task reinforcement learning with context-based representations.
\newblock In Meila, M. and Zhang, T., editors, {\em Proceedings of the 38th International Conference on Machine Learning}, volume 139 of {\em Proceedings of Machine Learning Research}, pages 9767--9779. PMLR.

\bibitem[Song et~al., 2023]{song_hybrid_2023}
Song, Y., Zhou, Y., Sekhari, A., Bagnell, J.~A., Krishnamurthy, A., and Sun, W. (2023).
\newblock Hybrid rl: Using both offline and online data can make rl efficient.
\newblock {\em arXiv preprint arXiv:2210.06718}.

\bibitem[Talebi and Maillard, 2018]{talebi2018variance}
Talebi, M.~S. and Maillard, O.-A. (2018).
\newblock Variance-aware regret bounds for undiscounted reinforcement learning in mdps.
\newblock In Janoos, F., Mohri, M., and Sridharan, K., editors, {\em Proceedings of Algorithmic Learning Theory}, volume~83 of {\em Proceedings of Machine Learning Research}, pages 770--805. PMLR.

\bibitem[Thomas and Brunskill, 2016]{thomas2016data}
Thomas, P. and Brunskill, E. (2016).
\newblock Data-efficient off-policy policy evaluation for reinforcement learning.
\newblock In {\em International Conference on Machine Learning}, pages 2139--2148. PMLR.

\bibitem[Uehara et~al., 2020]{uehara2020minimax}
Uehara, M., Huang, J., and Jiang, N. (2020).
\newblock Minimax weight and {Q}-function learning for off-policy evaluation.
\newblock In {\em International Conference on Machine Learning}, pages 9659--9668. PMLR.

\bibitem[Wagenmaker et~al., 2022]{wagenmaker2022rewardfreerlharderrewardaware}
Wagenmaker, A.~J., Chen, Y., Simchowitz, M., Du, S., and Jamieson, K. (2022).
\newblock Reward-free {RL} is no harder than reward-aware {RL} in linear {M}arkov decision processes.
\newblock In Chaudhuri, K., Jegelka, S., Song, L., Szepesvari, C., Niu, G., and Sabato, S., editors, {\em Proceedings of the 39th International Conference on Machine Learning}, volume 162 of {\em Proceedings of Machine Learning Research}, pages 22430--22456. PMLR.

\bibitem[Wainwright, 2019]{wainwright2019variancereducedqlearningminimaxoptimal}
Wainwright, M.~J. (2019).
\newblock Variance-reduced $q$-learning is minimax optimal.
\newblock {\em arXiv preprint arXiv:1906.04697}.

\bibitem[Wang et~al., 2024]{wang2024samplecomplexityofflinedistributionally}
Wang, H., Shi, L., and Chi, Y. (2024).
\newblock Sample complexity of offline distributionally robust linear markov decision processes.
\newblock {\em arXiv preprint arXiv:2403.12946}.

\bibitem[Wang et~al., 2017]{wang2017learningreinforcementlearn}
Wang, J.~X., Kurth-Nelson, Z., Tirumala, D., Soyer, H., Leibo, J.~Z., Munos, R., Blundell, C., Kumaran, D., and Botvinick, M. (2017).
\newblock Learning to reinforcement learn.
\newblock {\em arXiv preprint arXiv:1611.05763}.

\bibitem[Wang et~al., 2020]{wang2020rewardfreereinforcementlearninglinear}
Wang, R., Du, S.~S., Yang, L., and Salakhutdinov, R.~R. (2020).
\newblock On reward-free reinforcement learning with linear function approximation.
\newblock In Larochelle, H., Ranzato, M., Hadsell, R., Balcan, M., and Lin, H., editors, {\em Advances in Neural Information Processing Systems}, volume~33, pages 17816--17826. Curran Associates, Inc.

\bibitem[Wang et~al., 2023]{onlinepricing}
Wang, Y., Zheng, Z., and Shen, M. (2023).
\newblock Online pricing with polluted offline data.
\newblock {\em SSRN Electronic Journal}.

\bibitem[Wen et~al., 2024]{wen_contrastive_2024}
Wen, X., Bai, C., Xu, K., Yu, X., Zhang, Y., Li, X., and Wang, Z. (2024).
\newblock Contrastive representation for data filtering in cross-domain offline reinforcement learning.
\newblock {\em arXiv preprint arXiv:2405.06192}.

\bibitem[Xie et~al., 2021]{xie_policy_2022}
Xie, T., Jiang, N., Wang, H., Xiong, C., and Bai, Y. (2021).
\newblock Policy finetuning: Bridging sample-efficient offline and online reinforcement learning.
\newblock In Ranzato, M., Beygelzimer, A., Dauphin, Y., Liang, P., and Vaughan, J.~W., editors, {\em Advances in Neural Information Processing Systems}, volume~34, pages 27395--27407. Curran Associates, Inc.

\bibitem[Xiong et~al., 2016]{xiong2016combiningdeepreinforcementlearning}
Xiong, X., Wang, J., Zhang, F., and Li, K. (2016).
\newblock Combining deep reinforcement learning and safety based control for autonomous driving.
\newblock {\em arXiv preprint arXiv:1612.00147}.

\bibitem[Xu et~al., 2021]{xu2021unified}
Xu, T., Yang, Z., Wang, Z., and Liang, Y. (2021).
\newblock A unified off-policy evaluation approach for general value function.
\newblock {\em arXiv preprint arXiv:2107.02711}.

\bibitem[Yang et~al., 2021]{yang2021q}
Yang, K., Yang, L., and Du, S. (2021).
\newblock Q-learning with logarithmic regret.
\newblock In {\em International Conference on Artificial Intelligence and Statistics}, pages 1576--1584. PMLR.

\bibitem[Yang et~al., 2020]{yang2020off}
Yang, M., Nachum, O., Dai, B., Li, L., and Schuurmans, D. (2020).
\newblock Off-policy evaluation via the regularized {L}agrangian.
\newblock {\em Advances in Neural Information Processing Systems}, 33:6551--6561.

\bibitem[Ye et~al., 2023]{ye2023powerpretraininggeneralizationrl}
Ye, H., Chen, X., Wang, L., and Du, S.~S. (2023).
\newblock On the power of pre-training for generalization in {RL}: Provable benefits and hardness.
\newblock In Krause, A., Brunskill, E., Cho, K., Engelhardt, B., Sabato, S., and Scarlett, J., editors, {\em Proceedings of the 40th International Conference on Machine Learning}, volume 202 of {\em Proceedings of Machine Learning Research}, pages 39770--39800. PMLR.

\bibitem[Yin et~al., 2020]{yin2020near}
Yin, M., Bai, Y., and Wang, Y.-X. (2020).
\newblock Near optimal provable uniform convergence in off-policy evaluation for reinforcement learning.
\newblock {\em arXiv preprint arXiv:2007.03760}.

\bibitem[You et~al., 2022]{pmlr-v180-you22a}
You, H., Yang, T., Zheng, Y., Hao, J., and Taylor, Matthew, E. (2022).
\newblock Cross-domain adaptive transfer reinforcement {learning} based on state-action correspondence.
\newblock In Cussens, J. and Zhang, K., editors, {\em Proceedings of the Thirty-Eighth Conference on Uncertainty in Artificial Intelligence}, volume 180 of {\em Proceedings of Machine Learning Research}, pages 2299--2309. PMLR.

\bibitem[Zanette and Brunskill, 2019]{zanette2019tighter}
Zanette, A. and Brunskill, E. (2019).
\newblock Tighter problem-dependent regret bounds in reinforcement learning without domain knowledge using value function bounds.
\newblock In {\em International Conference on Machine Learning}, pages 7304--7312. PMLR.

\bibitem[Zanette et~al., 2020]{NEURIPS2020_87736972}
Zanette, A., Lazaric, A., Kochenderfer, M.~J., and Brunskill, E. (2020).
\newblock Provably efficient reward-agnostic navigation with linear value iteration.
\newblock In Larochelle, H., Ranzato, M., Hadsell, R., Balcan, M., and Lin, H., editors, {\em Advances in Neural Information Processing Systems}, volume~33, pages 11756--11766. Curran Associates, Inc.

\bibitem[Zhang et~al., 2024]{10444921}
Zhang, G., Feng, L., Wang, Y., Li, M., Xie, H., and Tan, K.~C. (2024).
\newblock Reinforcement learning with adaptive policy gradient transfer across heterogeneous problems.
\newblock {\em IEEE Transactions on Emerging Topics in Computational Intelligence}, 8(3):2213--2227.

\bibitem[Zhang et~al., 2020a]{zhang2020model}
Zhang, K., Kakade, S., Basar, T., and Yang, L. (2020a).
\newblock Model-based multi-agent {RL} in zero-sum {M}arkov games with near-optimal sample complexity.
\newblock {\em Advances in Neural Information Processing Systems}, 33.

\bibitem[Zhang and Zanette, 2023]{zhang_policy_2023}
Zhang, R. and Zanette, A. (2023).
\newblock Policy finetuning in reinforcement learning via design of experiments using offline data.
\newblock In {\em Advances in Neural Information Processing Systems}, volume~36, pages 59953--59995. Curran Associates, Inc.

\bibitem[Zhang et~al., 2021a]{pmlr-v139-zhang21e}
Zhang, Z., Du, S., and Ji, X. (2021a).
\newblock Near optimal reward-free reinforcement learning.
\newblock In Meila, M. and Zhang, T., editors, {\em Proceedings of the 38th International Conference on Machine Learning}, volume 139 of {\em Proceedings of Machine Learning Research}, pages 12402--12412. PMLR.

\bibitem[Zhang et~al., 2021b]{zhang2020reinforcement}
Zhang, Z., Ji, X., and Du, S. (2021b).
\newblock Is reinforcement learning more difficult than bandits? a near-optimal algorithm escaping the curse of horizon.
\newblock In Belkin, M. and Kpotufe, S., editors, {\em Proceedings of Thirty Fourth Conference on Learning Theory}, volume 134 of {\em Proceedings of Machine Learning Research}, pages 4528--4531. PMLR.

\bibitem[Zhang et~al., 2020b]{zhang2020almost}
Zhang, Z., Zhou, Y., and Ji, X. (2020b).
\newblock Almost optimal model-free reinforcement learning via reference-advantage decomposition.
\newblock {\em Advances in Neural Information Processing Systems}, 33.

\bibitem[Zhou et~al., 2021]{pmlr-v130-zhou21d}
Zhou, Z., Zhou, Z., Bai, Q., Qiu, L., Blanchet, J., and Glynn, P. (2021).
\newblock Finite-sample regret bound for distributionally robust offline tabular reinforcement learning.
\newblock In Banerjee, A. and Fukumizu, K., editors, {\em Proceedings of The 24th International Conference on Artificial Intelligence and Statistics}, volume 130 of {\em Proceedings of Machine Learning Research}, pages 3331--3339. PMLR.

\bibitem[Zhu et~al., 2023]{zhu2023transferlearningdeepreinforcement}
Zhu, Z., Lin, K., Jain, A.~K., and Zhou, J. (2023).
\newblock Transfer learning in deep reinforcement learning: A survey.
\newblock {\em arXiv preprint arXiv:2009.07888}.

\end{thebibliography}


\appendix
\section{Proof of the Minimax Lower Bound}\label{appendix:lower-bound}
\subsection{Preliminaries and Notations}
We consider the following transfer setting: for a given source MDP $\mathcal{M}_{\mathrm{src}}=(\mathcal{S},\mathcal{A},H,p_{\mathrm{src}},r,\rho)$, the target MDP $\mathcal{M}_{\mathrm{tar}}=(\mathcal{S},\mathcal{A},H,p_{\mathrm{tar}},r,\rho)$ is similar to the source MDP in the sense that:
\begin{equation*}
    \max_{s,a\in\sa}\dtv{p_{\mathrm{tar}}(\cdot \mid s, a)}{p_{\mathrm{src}}(\cdot \mid s, a)}\le \alpha.
\end{equation*}
We assume that the target MDP transition $p_{\mathrm{tar}}$ is unknown to us. Thus all the possible target MDPs compose the following set $\mathcal{M}_{\alpha}$, which is given by:
\begin{equation*}
    \mathcal{M}_{\alpha}\triangleq \left\{\mathcal{M}=(\mathcal{S},\mathcal{A},H,p,r,\rho) \mid \max_{s,a\in\sa}\dtv{p(\cdot \mid s, a)}{p_{\mathrm{src}}(\cdot \mid s, a)}\le \alpha\right\}.
\end{equation*}
We use $\mathbb{E}_{\mathcal{M}_{\mathrm{tar}}}$ to denote the expectation with respect to the randomness during algorithm execution in the target MDP $\mathcal{M}_{\mathrm{tar}}$. 

The proof of \cref{theorem:lower-bound} follows a similar construction as in \cite{xie_policy_2022,rashidinejad2023bridgingofflinereinforcementlearning,lattimore2012pacboundsdiscountedmdps}. The new technical challenge in our setting is to bound the information gain of the data from $\mathcal{M}_{\mathrm{src}}$ with different dynamics. Specifically,
\begin{itemize}
    \item Although the minimax lower bound result in \cite{rashidinejad2023bridgingofflinereinforcementlearning} considers different dynamics, they construct the MDP class based on the offline occupancy measures, which is quite different from our construction. We instead put more emphasis on the source and target environmental gaps in our construction.
    \item \cite{xie_policy_2022,rashidinejad2023bridgingofflinereinforcementlearning,lattimore2012pacboundsdiscountedmdps} do not assume access to an additional source offline dataset. In contrast, in our setting, we need to control the information gain and bias from this additional source dataset.
\end{itemize}
As an overview of the proof, we construct a set of hard instance MDPs as the source MDP and the possible target MDPs, and demonstrate that the maximum sample efficiency gain from $\mathcal{M}_{\mathrm{src}}$ can be bounded via a change-of-measure approach.
\subsection{Proof of \cref{theorem:lower-bound}}
\begin{proof}
\begin{figure}[htbp]
    \centering
    \includegraphics[width=0.5\linewidth]{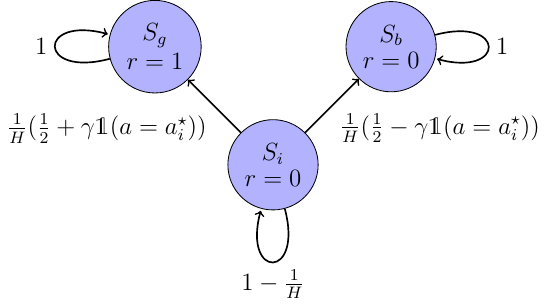}
    \caption{Hard MDPs}
    \label{fig:hard-instance}
\end{figure}
We construct a set of MDPs with $S+2$ states, $A$ actions and $H$ steps, which are replicas of the hard MDP presented in \cref{fig:hard-instance}. Without loss of generality, we assume $S\ge 1$ and $H\ge 3$. The action space is $[A]$. For the state space, each hard MDP contains $S$ bandit states denoted by $\{s_i\}_{i\in[S]}$, one good state $s_g$ and one bad state $s_b$. The initial state distribution is a uniform distribution over $\{s_i\}_{i\in[S]}$, given by $\rho(s_i)=1/S$, $\forall i \in[S]$.

For each hard MDP, at each bandit state $s_i$, the transition probability is given by:
\begin{equation*}
\left\{
\begin{aligned}
  & p(s_i \mid  s_i, a) = 1 - \frac{1}{H},\,\forall\,a\in [A], \\
  & p(s_g \mid  s_i, a) = \frac{1}{H}(\frac{1}{2}+\gamma\indic{a=a_i^\star}),\,\forall\,a\in[A], \\
  & p(s_b \mid  s_i, a) = \frac{1}{H}(\frac{1}{2} - \gamma\indic{a=a_i^\star}),\,\forall\,a\in[A],
\end{aligned}
\right.
\end{equation*}
where $\gamma\in[0,1/3]$ is a constant to be specified later, and $a_i^\star\in[A]$ is the unique optimal action at each bandit state $s_i$. We denote each hard MDP by $\mathcal{M}_{\ba^\star}$, where $\ba^\star\in[A]^S$ denotes the optimal action vector at each bandit state. Therefore, we have $[A]^S$ different hard MDPs with different optimal actions. The agent receives zero reward at the bandit states, that is, $r(s_i,a)=0$, $\forall a\in [A]$. 

Apart from the bandit states, $s_g$ and $s_b$ are both absorbing states with $p(s_g\mid s_g,a)=p(s_b\mid s_b,a)=1$, $\forall a\in [A]$. The agent receive reward $1$ at $s_g$ and zero reward at $s_b$. That is, $r(s_g,a)=1$, $r(s_b,a)=0$, $\forall a\in [A]$, $\forall i\in [S]$. 

Furthermore, we define a special MDP $\mathcal{M}_{\mathbf{0}}$ with no optimal actions. Specifically, the transition probability at bandit states in $\mathcal{M}_{\mathbf{0}}$ is given by:
\begin{equation*}
\left\{
\begin{aligned}
  & p(s_i \mid  s_i, a) = 1 - \frac{1}{H},\,\forall\,a\in [A], \\
  & p(s_g \mid  s_i, a) = \frac{1}{2H},\,\forall\,a\in[A], \\
  & p(s_b \mid  s_i, a) = \frac{1}{2H},\,\forall\,a\in[A],
\end{aligned}
\right.
\end{equation*}
while others remain the same as the hard MDPs. We choose this special MDP as our source MDP $\mathcal{M}_{\mathrm{src}}=\mathcal{M}_{\mathbf{0}}$. By the definition of $\mathcal{M}_{\ba^\star}$, we clearly see that when $\gamma\le \frac{48\varepsilon}{H}$, each hard instance $\mathcal{M}_{\ba^\star}$ belong to the possible target MDP set $\mathcal{M}_\alpha$ since:
\begin{align*}
    \max_{s,a\in\sa} \dtv{p_{\mathbf{0}}(\cdot\mid s,a)}{p_{\ba^\star}(\cdot\mid s,a)}=\frac{\gamma}{H}\le \frac{48\varepsilon}{H^2}\le \alpha.
\end{align*}
To prove \cref{theorem:lower-bound}, we only need to show that \cref{theorem:lower-bound} holds on these hard MDPs. That is, we restrict ourselves to the subcase where:
\begin{align*}
    \mathcal{M}_{\mathrm{tar}}\in \{\mathcal{M}_{\ba^\star}\}\subset \mathcal{M}_{\alpha}.
\end{align*} 
In the following steps, we condider the average suboptimality gap over $\{\mathcal{M}_{\ba^\star}\}$ and link this gap to the number of samples from the target MDP. The key step is to bound the information gain from the source MDP via a change-of-measure approach.

To begin with, we consider the uniform priori $\nu$ over $\ba^\star$(i.e., the hard MDPs), given by $\nu(\ba^\star)=1/A^S$, $\forall \ba^\star\in[A]^S$. For any algorithm ALG, we aim to link the following average suboptimality gap with the number of samples from the target MDP:
\begin{equation*}
    \mathbb{E}_{\ba^\star\sim\nu}\mathbb{E}_{\ba^\star}\left[  V_{1, M_{\ba^\star}}^\star - V_{1, M_{\ba^\star}}^{\hat{\pi}} \right],
\end{equation*}
where $\mathbb{E}_{\ba^\star}$ is taken with respect to the execution of ALG in $\{\mathcal{M}_{\ba^\star}\}$.
\paragraph{Decompose suboptimality gaps.}Let $\hat{\pi}(\cdot\mid \cdot)$ denotes the output policy of ALG. First, note that the state distribution $d^\pi_h(s_i)=1/S\cdot (1-1/H)^{h-1}\triangleq d_h(s_i)$ ($\forall (h,s)\in[H]\times[S]$) does not depend on the policy $\pi$. Therefore, we have
\begin{align*}
  & \quad V_{1, M_{\ba^\star}}^\star - V_{1, M_{\ba^\star}}^{\hat{\pi}} \\
  & = \sum_{h=1}^H \sum_{i=1}^S d^{\hat{\pi}}_h(s_i) \cdot \left[ \frac{1}{H}\left(\frac{1}{2}+\gamma\right) - \frac{1}{2H}\right] \cdot (1-\hat{\pi}_h(a^\star_{i}\mid s_i)) \cdot (H-h) \\
  & = \sum_{h=1}^H \sum_{i=1}^S \frac{1}{S}\left(1 - \frac{1}{H}\right)^{h-1}(1-h/H) \gamma \cdot (1-\hat{\pi}_h(a^\star_{i}\mid s_i)). 
\end{align*}
Taking expectation with respect to the algorithm execution within the MDP $M_{\ba^\star}$, we get
\begin{align*}
  \mathbb{E}_{\ba^\star}\left[  V_{1, M_{\ba^\star}}^\star - V_{1, M_{\ba^\star}}^{\hat{\pi}} \right] &= \sum_{h=1}^H \sum_{i=1}^S \frac{1}{S}\left(1 - \frac{1}{H}\right)^{h-1}(1-h/H)\gamma \cdot\mathbb{E}_{\ba^\star}(1-\hat{\pi}_h(a^\star_{i}\mid s_i))  \\
  & = \sum_{h=1}^H \sum_{i=1}^S \frac{1}{S}\left(1 - \frac{1}{H}\right)^{h-1}(1-h/H)\gamma \cdot \mathbb{E}_{\ba^\star}(1-\hat{\pi}_h(a^\star_{i}\mid s_i)) \\
  & \ge \frac{\gamma}{3S}\cdot \sum_{h=1}^H \sum_{i=1}^S(1-h/H) \mathbb{E}_{\ba^\star}(1-\hat{\pi}_h(a^\star_{i}\mid s_i)).\tag{$(1-1/H)^{h-1}\ge 1/e\ge 1/3$}
\end{align*}
This gives us:
\begin{equation*}
    \mathbb{E}_{\ba^\star}\left[  V_{1, M_{\ba^\star}}^\star - V_{1, M_{\ba^\star}}^{\hat{\pi}} \right]\ge\frac{\gamma}{3S}\cdot \sum_{h=1}^H \sum_{i=1}^S(1-h/H) \mathbb{E}_{\ba^\star}(1-\hat{\pi}_h(a^\star_{i}\mid s_i)).
\end{equation*}
\paragraph{Bound information gain.}Let $L_{h,i}(\hat{\pi},\ba)\triangleq \mathbb{E}_{\ba^\star}(1-\hat{\pi}_h(a^\star_{i}\mid s_i))$ denotes the loss function. Therefore, we have:
\begin{equation}\label{eq:sub-to-loss}
    \mathbb{E}_{\ba^\star\sim\nu}\mathbb{E}_{\ba^\star}\left[  V_{1, M_{\ba^\star}}^\star - V_{1, M_{\ba^\star}}^{\hat{\pi}} \right]\ge\frac{\gamma}{3S}\cdot \sum_{h=1}^H \sum_{i=1}^S(1-h/H)\mathbb{E}_{\ba^\star\sim\nu}L_{h,i}(\hat{\pi},\ba).
\end{equation}
Let $N_{\ba^\star}(s_i,a,s')$ denote the visitation of $(s_i,a,s')$ in $\mathcal{M}_{\ba^\star}$ and $N_{\mathbf{0}}(s_i,a,s')$ denote the visitation of $(s_i,a,s')$ in $\mathcal{M}_{\mathbf{0}}$. Since the only data that reveal information about $a_i^\star$ is $N_{\ba^\star}(s_i,\cdot,\cdot)$ and $N_{\mathbf{0}}(s_i,\cdot,\cdot)$, the visitation count at $s_i$, the posterior of $a_i^\star$ depends only on $N_{\ba^\star}(s_i,\cdot,\cdot)$ and $N_{\mathbf{0}}(s_i,\cdot,\cdot)$, the sufficient statistics for this posterior. Suppose the output policy $\hat{\pi}_h(a_i^\star,s_i)=f(N_{\mathbf{0}},N_{\ba^\star})$ is given by some measure function $f:(N_{\mathbf{0}},N_{\ba^\star})\mapsto[0,1]$. By \citet[Theorem 1.1 of Section 4]{lehmann2006theory}, we have:
\begin{align*}
  \mathbb{E}_{\ba^\star\sim \nu}\left[L_{h,i}(\hat{\pi}, \ba^\star)\right] \ge \inf_{f} \mathbb{E}_{\ba^\star\sim \nu}\mathbb{E}_{N_{\ba^\star}\sim (p_{\ba^\star}, \mathrm{ALG}), N_{\mathbf{0}}\sim (p_{\mathbf{0}}, \mathrm{ALG})}(1-f(N_{\mathbf{0}},N_{\ba^\star})).
\end{align*}
We use $\{N_{\ba^\star}(s_i,a,s')\}|_{p,\mathrm{ALG}}$ to denote the probability measure of random variable $\{N_{\ba^\star}(s_i,a,s')\}$ when the transition probability is $p$ and the executing algorithm is ALG. By a change of measure approach, we have:
\begin{align*}
    &\quad\mathbb{E}_{\ba^\star\sim \nu}\left[L_{h,i}(\hat{\pi}, \ba^\star)\right] \ge \inf_{f} \mathbb{E}_{\ba^\star\sim \nu}\mathbb{E}_{N_{\ba^\star}\sim (p_{\ba^\star}, \mathrm{ALG}), N_{\mathbf{0}}\sim (p_{\mathbf{0}}, \mathrm{ALG})}(1-f(N_{\mathbf{0}},N_{\ba^\star}))  \\
    &\ge \inf_{f} \mathbb{E}_{\ba^\star\sim \nu}\mathbb{E}_{N_{\ba^\star}\sim (p_{\mathbf{0}}, \mathrm{ALG}), N_{\mathbf{0}}\sim (p_{\mathbf{0}}, \mathrm{ALG})}(1-f(N_{\mathbf{0}},N_{\ba^\star})) \\
    &\qquad -\frac{1}{2}\mathbb{E}_{\ba^\star\sim \nu} \mathbb{E}_{N_{\mathbf{0}}\sim (p_{\mathbf{0}}, \mathrm{ALG})}\sup_{f}\sum_{\mathbf{n}\in\mathbb{N}^{A\times S}}|f(N_{\mathbf{0}},\mathbf{n})(\mathbb{P}_{N_{\ba^\star}\sim(p_{\ba^\star},\mathrm{ALG})}[N_{\ba^\star}=\mathbf{n}]-\mathbb{P}_{N_{\ba^\star}\sim(p_{\mathbf{0}},\mathrm{ALG})}[N_{\ba^\star}=\mathbf{n}])|\tag{change of measure}\\
    &\ge \inf_{f} \mathbb{E}_{\ba^\star\sim \nu}\mathbb{E}_{N_{\ba^\star}\sim (p_{\mathbf{0}}, \mathrm{ALG}), N_{\mathbf{0}}\sim (p_{\mathbf{0}}, \mathrm{ALG})}(1-f(N_{\mathbf{0}},N_{\ba^\star})) \\
    &\qquad -\mathbb{E}_{\ba^\star\sim \nu} \mathrm{TV}\left(\{N_{\ba^\star}(s_i,a,s')\}|_{p_\mathbf{0},\mathrm{ALG}},\{N_{\ba^\star}(s_i,a,s')\}|_{p_{\ba^\star},\mathrm{ALG}}\right) \tag{definition of total variation}\\
    &= \inf_{f} \mathbb{E}_{N_{\ba^\star}\sim (p_{\mathbf{0}}, \mathrm{ALG}), N_{\mathbf{0}}\sim (p_{\mathbf{0}}, \mathrm{ALG})}\mathbb{E}_{\ba^\star\sim \nu}(1-f(N_{\mathbf{0}},N_{\ba^\star})) \\
    &\qquad -\mathbb{E}_{\ba^\star\sim \nu} \mathrm{TV}\left(\{N_{\ba^\star}(s_i,a,s')\}|_{p_\mathbf{0},\mathrm{ALG}},\{N_{\ba^\star}(s_i,a,s')\}|_{p_{\ba^\star},\mathrm{ALG}}\right) \tag{$f$ is bounded and measurable, Fubini's theorem}\\
    &= \inf_{f} \mathbb{E}_{N_{\ba^\star}\sim (p_{\mathbf{0}}, \mathrm{ALG}), N_{\mathbf{0}}\sim (p_{\mathbf{0}}, \mathrm{ALG})}\frac{1}{A}\sum_{k=1}^A[(1-f(N_{\mathbf{0}},N_{\ba^\star}))\mid a_i^\star=k] \\
    &\qquad -\mathbb{E}_{\ba^\star\sim \nu} \mathrm{TV}\left(\{N_{\ba^\star}(s_i,a,s')\}|_{p_\mathbf{0},\mathrm{ALG}},\{N_{\ba^\star}(s_i,a,s')\}|_{p_{\ba^\star},\mathrm{ALG}}\right) \tag{law of total expectation}\\
    &= \frac{A-1}{A}-\mathbb{E}_{\ba^\star\sim \nu} \mathrm{TV}\left(\{N_{\ba^\star}(s_i,a,s')\}|_{p_\mathbf{0},\mathrm{ALG}},\{N_{\ba^\star}(s_i,a,s')\}|_{p_{\ba^\star},\mathrm{ALG}}\right),
    \end{align*}
    where in the last equality we utilize the fact that $\sum_{k=1}^A[f(N_{\mathbf{0}},N_{\ba^\star})\mid a_i^\star=k]=\sum_{k=1}^A\hat{\pi}_h(k,s_i)=1$. This demonstrates that the information gain from the source MDP is bounded on average.
\paragraph{Link the average suboptimality gap to the sample complexity.}Since we have already linked the average loss function to the total variation distance, the rest we need to do is to link the total variation distance to the number of samples. To achieve this, by \citet[Lemma 15.1]{lattimore2020bandit}, we have:
    \begin{align*}
    &\quad\mathbb{E}_{\ba^\star\sim \nu}\left[L_{h,i}(\hat{\pi}, \ba^\star)\right]\ge \frac{A-1}{A}-\mathbb{E}_{\ba^\star\sim \nu}\sqrt{\frac{1}{2}\mathrm{KL}\left(\{N_{\ba^\star}(s_i,a,s')\}|_{p_\mathbf{0},\mathrm{ALG}},\{N_{\ba^\star}(s_i,a,s')\}|_{p_{\ba^\star},\mathrm{ALG}}\right)}\tag{Pinsker's inequality}\\
    &\ge \frac{A-1}{A}-\mathbb{E}_{\ba^\star\sim \nu}\sqrt{\frac{1}{2}\sum_{a=1}^A\mathbb{E}_{p_{\mathbf{0},\mathrm{ALG}}}[N_{\ba^\star}(s_i,a)]\cdot\mathrm{KL}(p_{\mathbf{0}}(\cdot\mid s_i,a),p_{\ba^\star}(\cdot\mid s_i,a))} \tag{\citet[Lemma 15.1]{lattimore2020bandit}}\\
    &\ge \frac{A-1}{A}-\mathbb{E}_{\ba^\star\sim \nu}\sqrt{\frac{1}{2}\mathbb{E}_{p_{\mathbf{0},\mathrm{ALG}}}[N_{\ba^\star}(s_i,a_i^\star)]\cdot\mathrm{KL}(p_{\mathbf{0}}(\cdot\mid s_i,a_i^\star),p_{\ba^\star}(\cdot\mid s_i,a_i^\star))} \tag{$p_{\mathbf{0}}$ and $p_{\ba^\star}$ only differ in $a_i^\star$ at $s_i$}\\
    &\ge \frac{A-1}{A}-\mathbb{E}_{\ba^\star\sim \nu}\sqrt{\mathbb{E}_{p_{\mathbf{0},\mathrm{ALG}}}[N_{\ba^\star}(s_i,a_i^\star)]\cdot\frac{2\gamma^2}{H}} \tag{$\mathrm{KL}(p_{\mathbf{0}}(\cdot\mid s_i,a),p_{\ba^\star}(\cdot\mid s_i,a))=\frac{1}{2H}\log(\frac{1}{1-4\gamma^2})\le \frac{4\gamma^2}{H}$ for $\gamma\le 1/3$}\\
    &\ge \frac{1}{2} -\sqrt{\frac{1}{A}\sum_{a=1}^A\mathbb{E}_{p_{\mathbf{0},\mathrm{ALG}}}[N_{\ba^\star}(s_i,a)]\cdot\frac{2\gamma^2}{H}} \tag{definition of $\nu$ and Jensen inequality}\\
    &= \frac{1}{2}-\sqrt{\frac{2\gamma^2}{HA}\mathbb{E}_{p_{\mathbf{0},\mathrm{ALG}}}[N_{\ba^\star}(s_i)]}.
\end{align*}
Therefore, with \cref{eq:sub-to-loss} we can link the average suboptimality gap to the number of samples from the target MDP $\mathcal{M}_{\ba^\star}$. 
\begin{align*}
    \mathbb{E}_{\ba^\star\sim\nu}\mathbb{E}_{\ba^\star}\left[  V_{1, M_{\ba^\star}}^\star - V_{1, M_{\ba^\star}}^{\hat{\pi}} \right]&\ge\frac{\gamma}{3S}\cdot \sum_{h=1}^H \sum_{i=1}^S(1-h/H)\mathbb{E}_{\ba^\star\sim\nu}L_{h,i}(\hat{\pi},\ba).\\
    &\ge \frac{\gamma(H-1)}{12}-\frac{\gamma(H-1)}{6}\sum_{i=1}^S\sqrt{\frac{2\gamma^2}{HA}\mathbb{E}_{p_{\mathbf{0},\mathrm{ALG}}}[N_{\ba^\star}(s_i)]} \\
    &\ge \frac{\gamma(H-1)}{6}\left(\frac{1}{2}-\sqrt{\frac{2\gamma^2}{SHA}\sum_{i=1}^S\mathbb{E}_{p_{\mathbf{0},\mathrm{ALG}}}[N_{\ba^\star}(s_i)]}\right) \\
    &\ge\frac{\gamma(H-1)}{6}(\frac{1}{2}-\sqrt{\frac{2\gamma^2n}{HSA}}),\tag{$\sum_{i=1}^S\mathbb{E}_{p_{\mathbf{0},\mathrm{ALG}}}[N_{\ba^\star}(s_i)]\le n$} 
\end{align*}
where $n$ denotes the number of samples from the target MDP $\mathcal{M}_{\ba^\star}$. Recall that $\gamma\le 48\varepsilon/H$, we take $\gamma=48\varepsilon/H\le 1/3$, for any $\varepsilon\le 1/48$, as long as $\sqrt{\frac{2\gamma^2n}{HSA}}\le \frac{1}{4}$, i.e.,
\begin{equation*}
    n\le \frac{HSA}{32\gamma^2}=\frac{H^3SA}{32\cdot 48^2\varepsilon^2},
\end{equation*}
we have the average suboptimality gap greater than $\varepsilon$:
\begin{equation*}
    \mathbb{E}_{\ba^\star\sim\nu}\mathbb{E}_{\ba^\star}\left[  V_{1, M_{\ba^\star}}^\star - V_{1, M_{\ba^\star}}^{\hat{\pi}} \right]\ge 2\varepsilon(H-1)/H\ge \varepsilon. \tag{$H\ge 2$}
\end{equation*}
Therefore, for any algorithm ALG, there exists an $\ba^\star\in[A]^S$, as long as $n\le\frac{H^3SA}{32\cdot 48^2\varepsilon^2}$, we have:
\begin{equation*}
    \mathbb{E}_{\ba^\star}\left[  V_{1, M_{\ba^\star}}^\star - V_{1, M_{\ba^\star}}^{\hat{\pi}} \right]\ge\varepsilon.
\end{equation*}
This ends the proof.
\end{proof}
\section{Proof of the Upper Bound}

\subsection{Preliminaries and Notations}
\paragraph{Empirical MDP.} Let $\{(s^t_h,a^t_h,(s^t_h)')\}_{h\in[H]}$ denote the observed transitions in the target MDP in the $t$-th episode. For any state-action pair $(s,a)\in \sa$, we let $\nt\triangleq \sum_{\tau=1}^t\sum_{h=1}^H\indic{(s^\tau_h,a^\tau_h)=(s,a)}$ denote the visitation count for $(s,a)$ in the first $t$ episodes in the target MDP, and correspondingly $n^{t}(s,a,s')\triangleq \sum_{\tau=1}^t\sum_{h=1}^H\indic{(s^\tau_h,a^\tau_h,(s^\tau_h)')=(s,a,s')}$. With these definitions, we can define the empirical transitions as:
\begin{equation*}
    \hat{p}_{\mathrm{tar}}^t(s'\mid s,a)\triangleq\frac{n^t(s,a,s')}{\nt}\,\text{if }\nt>0,
\end{equation*}
and $\hat{p}_{\mathrm{tar}}^t(s'\mid s,a)\triangleq 1/S$ otherwise. Besides, let $\hat{\mathcal{M}}_{\mathrm{tar}}^t\triangleq(\mathcal{S},\mathcal{A},H,\hat{p}_{\mathrm{tar}}^t,r,\rho)$ denote the empirical MDP. We denote the Q-function of a policy $\pi$ evaluated on $\hat{\mathcal{M}}_{\mathrm{tar}}$ by $Q_h^{\hat{p}_{\mathrm{tar}}^t,\pi}$, and the value function by $V_h^{\hat{p}_{\mathrm{tar}}^t,\pi}$.
\subsection{Proof of \cref{lem:sample-recognization}}\label{appendix:recognization}
\subsubsection{Good Events}
For \cref{alg:rf}, we now start by considering good events inspired by \citet{menard_fast_2021}.
 We consider the following good event $F^{\mathrm{RF}}$:
\begin{align*}
F_1 &\triangleq \left\{ \forall t \in \mathbb{N}^+, \forall (s, a) \in \sa : \mathrm{KL}(\hat{p}_{\mathrm{tar}}^t(\cdot \mid s, a), p_{\mathrm{tar}}(\cdot \mid s, a)) \leq \frac{g_1(n^t(s, a), \delta)}{n^t(s, a)} \right\},\\
F_2 &\triangleq \left\{ \forall t \in \mathbb{N}^+, \forall (s, a) \in \sa : n^t(s, a) \geq \frac{1}{2} \overline{n}_{\mathrm{tar}}^t(s, a) - g_3(\delta) \right\},\\
F_3 &\triangleq\left\{\forall (s, a) \in \sa:\dtv{p_{\mathrm{src}}(\cdot\mid s,a)}{\hat{p}_{\mathrm{src}}(\cdot\mid s,a)}\le  \beta/4\right\}, \\
F^{\mathrm{RF}}&\triangleq \bigcap_{i=1}^3 F_i,
\end{align*}
where the bonus functions are defined as follows:
\begin{align}
    g_1(n,\delta)&\triangleq\log(\frac{6SAH}{\delta})+S\log(8e(n+1)),\label{eq:g_1}\\
    g_3(\delta)&\triangleq\log(\frac{6SA}{\delta}).\label{eq:g_3}
\end{align}
Let $p_{h,\mathrm{tar}}^t(s,a)$ denote the probability of visiting $(s,a)$ at step $h$ in the $t$-th episode under $p_{\mathrm{tar}}$. Furthermore, let $\overline{n}^t_{\mathrm{tar}}(s,a)\triangleq \sum_{i=1}^t\sum_{h=1}^Hp_{h,\mathrm{tar}}^t(s,a)$ denote the cummulative probability of visiting $(s,a)$ in the first $t$ episodes in \cref{alg:rf}.
\begin{lemma}\label{lem:good-event1}
    We have $\mathbb{P}(F^{\mathrm{RF}})\ge 1-\delta/2$.
\end{lemma}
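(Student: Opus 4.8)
The plan is to bound $\mathbb{P}((F^{\mathrm{RF}})^c) = \mathbb{P}(F_1^c \cup F_2^c \cup F_3^c)$ via a union bound, showing each individual bad event $F_i^c$ occurs with probability at most $\delta/6$, so that $\mathbb{P}(F^{\mathrm{RF}}) \ge 1 - 3 \cdot \delta/6 = 1 - \delta/2$. The three events are structurally different, so I would handle them separately, and the main work is identifying the correct concentration inequality for each.

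For $F_1$, the goal is a uniform-in-time bound on the KL divergence between the empirical and true target transitions. This is precisely the self-normalized deviation inequality for categorical distributions used in \citet{menard_fast_2021}: for a fixed $(s,a)$, the quantity $n^t(s,a)\cdot\mathrm{KL}(\hat{p}^t_{\mathrm{tar}}(\cdot\mid s,a), p_{\mathrm{tar}}(\cdot\mid s,a))$ is controlled, uniformly over all $t$, by a bound of the form $\log(1/\delta') + S\log(8e(n^t(s,a)+1))$ with high probability — this is exactly why $g_1$ is defined as it is, with the $S\log(8e(n+1))$ term accounting for the $S$-dimensional simplex and the time-uniform martingale/stopping argument (typically a method-of-mixtures or Laplace-type argument). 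I would invoke this concentration result (e.g.\ the bound underlying Lemma of \citet{menard_fast_2021}), then take a union bound over the $SA$ state-action pairs, allocating failure probability $\delta/(6SA)$ to each so the total is $\delta/6$; the factor $6SAH/\delta$ inside $g_1$ absorbs these union-bound and horizon factors.

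For $F_2$, the task is to relate the realized visitation counts $n^t(s,a)$ to their expected counts $\overline{n}^t_{\mathrm{tar}}(s,a) = \sum_{i}\sum_h p^t_{h,\mathrm{tar}}(s,a)$. Since $n^t(s,a)$ is a sum of indicators whose conditional expectations are the per-step visitation probabilities, it forms a supermartingale-type object, and the inequality $n^t(s,a) \ge \tfrac12 \overline{n}^t_{\mathrm{tar}}(s,a) - g_3(\delta)$ is the standard Bernstein/Freedman-style lower-tail concentration for such counts (again a time-uniform version, as in \citet{menard_fast_2021}), with $g_3(\delta) = \log(6SA/\delta)$ chosen so the per-pair failure probability is $\delta/(6SA)$ and the union bound over $\sa$ yields $\delta/6$. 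For $F_3$, this is the simplest: it is a one-shot (not time-uniform) TV-distance concentration bound for the fixed empirical source transition $\hat{p}_{\mathrm{src}}$ against $p_{\mathrm{src}}$, which holds because $\mathcal{D}_{\mathrm{src}}$ contains $\widetilde{\Omega}(S/\beta^2)$ samples per $(s,a)$ — by a standard $L_1$/TV deviation bound for empirical distributions over $S$ outcomes, $n_{\mathrm{src}}(s,a) = \widetilde{\Omega}(S/\beta^2)$ samples suffice to guarantee $\dtv{p_{\mathrm{src}}}{\hat{p}_{\mathrm{src}}} \le \beta/4$ with probability $1 - \delta/(6SA)$ per pair, and the union bound gives $\delta/6$.

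The main obstacle is the time-uniformity required for $F_1$ and $F_2$: because \cref{alg:rf} adaptively chooses policies and runs for a random (data-dependent) number of episodes, the bounds must hold simultaneously for all $t \in \mathbb{N}^+$, not just a fixed $t$. A naive union bound over $t$ fails, so the argument must rely on a martingale stopping-time / method-of-mixtures device — which is exactly what produces the $\log(n+1)$ (or $\log(8e(n+1))$) dependence on the count inside $g_1$ rather than a $\log t$ term. I would lean on the corresponding concentration lemma from \citet{menard_fast_2021} as a black box for these two events, verifying only that our bonus definitions $g_1$ and $g_3$ match the required forms, and then assemble the three pieces through the final union bound.
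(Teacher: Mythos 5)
Your proposal is correct and follows essentially the same route as the paper: the paper's proof is exactly a union bound over the three events, invoking the time-uniform KL deviation inequality for categorical distributions (Proposition 1 of Jonsson et al., as in \citet{menard_fast_2021}) for $F_1$, a Ville's-inequality/supermartingale concentration for Bernoulli visitation counts for $F_2$, and a one-shot $L_1$ concentration for discrete distributions (Hsu--Kakade--Zhang) combined with the $\widetilde{\Omega}(S/\beta^2)$ source-sample assumption for $F_3$, each allocated failure probability $\delta/6$. Your identification of the concentration tools, the per-pair allocation $\delta/(6SA)$, and the role of time-uniformity for $F_1$ and $F_2$ all match the paper's argument.
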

\begin{proof}
    By \cref{lem:kl-divergence}, we have $\mathbb{P}(F_1)\ge 1-\delta/6$. By \cref{lem:bernoulli}, we have $\mathbb{P}(F_2)\ge 1-\delta/6$. By \cref{lem:concentration-discrete} and the assumption on $\mathcal{D}_{\mathrm{src}}$ in \cref{lem:sample-recognization}, we have $\mathbb{P}(F_3)\ge 1-\delta/6$. By union bound, we have $\mathbb{P}(F)=\mathbb{P}(\bigcap_{i=1}^3 F_i)\ge 1-\delta/2$.
\end{proof}
\subsubsection{Key lemmas}
We adapt the reward-free exploration framework to directly control the transition estimation errors, enabling us to identify the shifted region $\mathcal{B}$ within finite samples. Specifically, we consider a special class $\mathcal{R}$ of reward function $r$ given by:
\begin{align*}
    \mathcal{R}\triangleq\{r:[H]\times\sa\times\mathcal{S} \mapsto[0,1]\mid \exists \text{ some } h^\star\in[H], \text{ if } h\neq h^\star, r_h(\cdot,\cdot,\cdot)=0\}.
\end{align*}
Concretely, any reward function $r\in \mathcal{R}$ not only depends on the current state-action, but also on possible next states. Moreover, $r$ is non-zero only at one step $h^\star$ (possibly different for different $r\in\mathcal{R}$). For $r\in\mathcal{R}$, for any policy $\pi$ and transition $p$, the Q-functions and value functions are defined as follows:
\begin{align*}
    Q_h^{p,\pi}(s,a)&\triangleq \sum_{s'}p(s'\mid s,a)(r_h(s,a,s')+V_{h+1}^{p,\pi}(s')),\,\forall (s,a)\in\sa, h\in[H], \\
    V_h^{p,\pi}(s)&\triangleq \max_{a}Q_h^{p,\pi}(s,a),\,\forall s\in\mathcal{S},h\in[H], \\
    Q_{H+1}^{p,\pi}(s,a)&\triangleq 0,V_{H+1}^{p,\pi}(s)\triangleq 0,\,\forall(s,a)\in\sa.
\end{align*}
The specific definition of $\mathcal{R}$ here helps us control the high-dimensional transition estimation errors for any $(s,a)$ and remove an $H$ factor from the sample complexity result. To achieve this goal, we define the uncertainty function (with $W_{H+1}^t(s,a)=0$):
\begin{equation}
\label{eq:uncertainty}
    W_h^{t}(s,a)=\min\left(1,\frac{4Hg_1(\nt,\delta)}{\nt}+\hat{p}_{\mathrm{tar}}^t\max_{a'\in\mathcal{A}}W_{h+1}^t(s,a)\right).
\end{equation}
We define the error term as:
\begin{equation*}
    e_h^{\pi,t}(s,a,r)\triangleq |Q_h^{p_{\mathrm{tar}},\pi}(s,a,r)-\hat{Q}_h^{\hat{p}_{\mathrm{tar}}^t,\pi}(s,a,r)|.
\end{equation*}
We first show that $W_1^{t}(s,a)$ can be utilized to establish an upper bound on $e_1^{\pi,t}(s,a,r)$ in the following lemma.
\begin{lemma}\label{lem:error-W}
    For any policy $\pi$ and any $r\in\mathcal{R}$, in any episode $t$ in \cref{alg:rf}, w.p. at least $1-\delta/2$, we have:
    \begin{equation*}
        [\rho \pi_1  e_1^{\pi,t}](r)\le 3\sqrt{\rho \pi_1  W_1^{t}}+\rho \pi_1  W_1^{t},
    \end{equation*}
\end{lemma}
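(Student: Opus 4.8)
The plan is to bound $e_1^{\pi,t}$ by propagating the one‑step transition‑estimation error backward through the Bellman recursion, and then to recast the accumulated error into the claimed $\sqrt{\cdot}+(\cdot)$ form using the law of total variance together with Cauchy--Schwarz. Throughout I would condition on the good event $F^{\mathrm{RF}}$, which holds with probability at least $1-\delta/2$ by \cref{lem:good-event1}; note that this event is reward‑independent, so the final bound will hold uniformly over all $r\in\mathcal{R}$ and all $\pi$ on this event. In particular, on $F_1$ we have $\dkl{\hat p_{\mathrm{tar}}^t(\cdot\mid s,a)}{p_{\mathrm{tar}}(\cdot\mid s,a)}\le g_1(\nt,\delta)/\nt$ for every $(s,a)\in\sa$ and every $t$. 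Fix an arbitrary $r\in\mathcal{R}$, nonzero only at a single step $h^\star$; this forces $V_h^{p_{\mathrm{tar}},\pi},V_h^{\hat p_{\mathrm{tar}}^t,\pi}\in[0,1]$ and makes the trajectory return lie in $[0,1]$, and it is precisely this boundedness (rather than the usual $[0,H]$ range) that lets the final bound avoid an extra factor of $H$.

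For the per‑step analysis I would use the telescoping identity
\[
Q_h^{p_{\mathrm{tar}},\pi}-Q_h^{\hat p_{\mathrm{tar}}^t,\pi}
=\big(p_{\mathrm{tar}}-\hat p_{\mathrm{tar}}^t\big)\big(r_h+V_{h+1}^{p_{\mathrm{tar}},\pi}\big)
+\hat p_{\mathrm{tar}}^t\big(V_{h+1}^{p_{\mathrm{tar}},\pi}-V_{h+1}^{\hat p_{\mathrm{tar}}^t,\pi}\big),
\]
so that $e_h^{\pi,t}\le \big|(p_{\mathrm{tar}}-\hat p_{\mathrm{tar}}^t)(r_h+V_{h+1}^{p_{\mathrm{tar}},\pi})\big| + \hat p_{\mathrm{tar}}^t\max_{a'}e_{h+1}^{\pi,t}$. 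The first summand is the statistical error: converting the KL bound from $F_1$ into a Bernstein/Bennett inequality gives $\big|(p_{\mathrm{tar}}-\hat p_{\mathrm{tar}}^t)(r_h+V_{h+1}^{p_{\mathrm{tar}},\pi})\big|\lesssim \sqrt{\mathrm{Var}_{\hat p_{\mathrm{tar}}^t}(V_{h+1}^{p_{\mathrm{tar}},\pi})\,g_1(\nt,\delta)/\nt}+Hg_1(\nt,\delta)/\nt$, where the lower‑order coefficient is inflated to match the $4Hg_1/\nt$ bonus in the definition \eqref{eq:uncertainty} of $W_h^t$. Unrolling the recursion and taking the $\rho\pi_1$‑expectation then yields $\rho\pi_1 e_1^{\pi,t}\le \sum_{h=1}^H\mathbb{E}_{\hat p_{\mathrm{tar}}^t,\pi}\big[\sqrt{\mathrm{Var}_{\hat p_{\mathrm{tar}}^t}(V_{h+1}^{p_{\mathrm{tar}},\pi})\,g_1/\nt}\big]+\rho\pi_1 W_1^t$, where the second summand is identified with $\rho\pi_1 W_1^t$ by comparing the unrolled lower‑order terms against the recursive definition of $W$.

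It remains to turn the first sum into $3\sqrt{\rho\pi_1 W_1^t}$. Here I would apply Cauchy--Schwarz across the horizon, $\sum_h\mathbb{E}\big[\sqrt{\mathrm{Var}\cdot g_1/\nt}\big]\le\sqrt{\sum_h\mathbb{E}[\mathrm{Var}_{\hat p_{\mathrm{tar}}^t}(V_{h+1}^{p_{\mathrm{tar}},\pi})]}\cdot\sqrt{\sum_h\mathbb{E}[g_1/\nt]}$, bound the first factor by the law of total variance (the summed conditional variances equal the variance of the return under $\hat p_{\mathrm{tar}}^t$, hence $\le 1$ since the return lies in $[0,1]$), and identify the second factor with a constant multiple of $\sqrt{\rho\pi_1 W_1^t}$ through the bonus portion of $W$. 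I expect the main obstacle to be exactly this self‑bounding step: the variances are of the \emph{true} value $V_{h+1}^{p_{\mathrm{tar}},\pi}$ while only the empirical transition $\hat p_{\mathrm{tar}}^t$ is observable, so I would first replace $\mathrm{Var}_{\hat p_{\mathrm{tar}}^t}(V_{h+1}^{p_{\mathrm{tar}},\pi})$ by $\mathrm{Var}_{\hat p_{\mathrm{tar}}^t}(V_{h+1}^{\hat p_{\mathrm{tar}}^t,\pi})$ up to an $e_{h+1}^{\pi,t}$ correction, and then carefully verify that the $4H$ inflation in the bonus of $W$ absorbs these corrections so that no spurious factor of $H$ leaks into the square‑root term, thereby delivering the stated constant $3$.
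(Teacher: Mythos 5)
Your proposal follows the same overall architecture as the paper's proof: decompose the Bellman error recursively, control the one-step statistical error via Bernstein transportation on the event $F_1$, unroll, and then use Cauchy--Schwarz across the horizon to produce the $3\sqrt{\rho\pi_1 W_1^t}$ term while absorbing the lower-order $Hg_1/\nt$ terms into $\rho\pi_1 W_1^t$ via an induction comparing the unrolled bound against the recursive definition of $W$. The one place you genuinely diverge is the treatment of the leading square-root term. The paper does \emph{not} keep the variance inside the square root: since the construction of $\mathcal{R}$ forces $V_{h+1}^{\pi}\in[0,1]$, it bounds $\revar(V_{h+1}^{\pi})\le 1$ pointwise, reducing the statistical error to $3\sqrt{g_1(\nt,\delta)/\nt}$, and then writes $\sqrt{g_1/\nt}=\sqrt{\tfrac{1}{H}\cdot\tfrac{Hg_1}{\nt}}$ so that Cauchy--Schwarz against the normalized occupancy $\sum_h\sum_{s,a}\hat p_h^{\pi,t}/H=1$ directly yields $3\sqrt{\rho\pi_1\hat W_1^{\pi,t}}$. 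Your route --- keeping $\mathrm{Var}_{\hat p^t}(\cdot)$, substituting $V^{\hat p^t,\pi}$ for $V^{p,\pi}$ inside the variance, and invoking the law of total variance to bound $\sum_h\mathbb{E}[\mathrm{Var}]\le 1$ --- can be made to work, but it buys nothing here (the $[0,1]$ range of returns already does all the work that the variance bookkeeping would do in the $[0,H]$ setting) and it costs you an extra approximation step whose $e_{h+1}^{\pi,t}$ corrections must then be re-absorbed. Also be careful with your second Cauchy--Schwarz factor: $\sum_h\mathbb{E}[g_1/\nt]$ is \emph{not} bounded by a multiple of $\rho\pi_1 W_1^t$ without capping, since $g_1/\nt$ blows up for small $\nt$ while $W$ is truncated at $1$; you need the paper's min trick $3\sqrt{g_1/\nt}\le 3\sqrt{\tfrac1H(\tfrac{Hg_1}{\nt}\wedge\tfrac1H)}+3H\tfrac{g_1}{\nt}$ (or an equivalent case split) so that the uncapped regime is routed into the $4Hg_1/\nt$ bonus rather than the square-root term.
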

where $[\rho \pi_1  e_1^{\pi,t}](r)=\sum_{s,a}\rho(s)\pi_1(a\mid s)e_1^{\pi,t}](s,a,r)$. The proof of \cref{lem:error-W} can be found in \cref{appendix:errow-W}. Next, we show that by choosing $\pi_1^{t+1}(s)=\arg\max_{a\in\mathcal{A}}W_1^{t}(s,a)$, $W_1^t$ can be used to upper bound $\dtv{p_{\mathrm{tar}}}{\hat{p}^t_{\mathrm{tar}}}$ uniformly for any $(s,a)\in\sa$, which helps us derive the stopping criterion for \cref{alg:rf}.
\begin{lemma}\label{lem:TV-upper}
    Under \cref{definition:reachability}, for any $\delta\in(0,1)$, w.p. $1-\delta/2$, for any $t\in\mathbb{N}$, any $(s,a)\in\mathcal{B}$, there exists a policy $\pi$ and reward $r\in\mathcal{R}$, such that:
    \begin{align*}
        \quad\dtv{p_{\mathrm{tar}}(\cdot\mid s,a)}{\hat{p}^t_{\mathrm{tar}}(\cdot\mid s,a)}\le \frac{2}{\sigma} [\rho \pi_1  e_1^{\pi,t}](r)\le \frac{2}{\sigma} (3\sqrt{\rho\max_{a\in\mathcal{A}}W_1^t}+\rho\max_{a\in\mathcal{A}}W_1^t),
    \end{align*}
    where $\rho\max_{a\in\mathcal{A}}W_1^t=\sum_{s\in\mathcal{S}}\rho(s)\max_{a\in\mathcal{A}}W_1^t(s,a)$.
\end{lemma}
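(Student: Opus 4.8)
The plan is to fix an arbitrary $(s_0,a_0)\in\mathcal{B}$ and episode $t$ and to manufacture, out of the abstract reward-free machinery of \cref{lem:error-W}, an explicit policy and two auxiliary reward functions in $\mathcal{R}$ whose value-function errors expose the single-step transition gap $\dtv{p_{\mathrm{tar}}(\cdot\mid s_0,a_0)}{\hat p^t_{\mathrm{tar}}(\cdot\mid s_0,a_0)}$. First I would take the Neyman--Pearson test function $f^\star=\indic{p_{\mathrm{tar}}(\cdot\mid s_0,a_0)\ge\hat p^t_{\mathrm{tar}}(\cdot\mid s_0,a_0)}$, so that $\sum_{s'}f^\star(s')\big(p_{\mathrm{tar}}(s'\mid s_0,a_0)-\hat p^t_{\mathrm{tar}}(s'\mid s_0,a_0)\big)$ equals the target TV distance exactly. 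By \cref{definition:reachability} I then pick a policy $\pi$ and a step $h^\star$ with $p_{h^\star}^{\pi}(s_0,a_0)\ge\sigma$ in $\mathcal{M}_{\mathrm{tar}}$, and define two rewards supported only at step $h^\star$, namely $r^{(1)}_{h^\star}(s,a,s')=f^\star(s')\indic{(s,a)=(s_0,a_0)}$ and $r^{(2)}_{h^\star}(s,a,s')=\indic{(s,a)=(s_0,a_0)}$, both of which lie in $\mathcal{R}$.

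The key computation is that, because each reward fires only once and only at $(s_0,a_0)$, the value under any transition model $p$ factorizes into a visitation term times a one-step term: writing $w^p\triangleq p_{h^\star}^{\pi}(s_0,a_0)$ for the reach probability under $p$ and $v^p\triangleq \sum_{s'}p(s'\mid s_0,a_0)f^\star(s')$, I would show $V_1^{p,\pi}(\rho;r^{(1)})=w^p v^p$ and $V_1^{p,\pi}(\rho;r^{(2)})=w^p$. Using $V_1^{p,\pi}(\rho)=\sum_{s,a}\rho(s)\pi_1(a\mid s)Q_1^{p,\pi}(s,a)$, the triangle inequality gives $[\rho\pi_1 e_1^{\pi,t}](r)\ge |V_1^{p_{\mathrm{tar}},\pi}(\rho;r)-V_1^{\hat p^t_{\mathrm{tar}},\pi}(\rho;r)|$ for each reward $r$.

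I would then decompose the $r^{(1)}$-value gap as $w^{p_{\mathrm{tar}}}\big(v^{p_{\mathrm{tar}}}-v^{\hat p^t_{\mathrm{tar}}}\big)+\big(w^{p_{\mathrm{tar}}}-w^{\hat p^t_{\mathrm{tar}}}\big)v^{\hat p^t_{\mathrm{tar}}}$. The first summand equals $w^{p_{\mathrm{tar}}}\dtv{p_{\mathrm{tar}}(\cdot\mid s_0,a_0)}{\hat p^t_{\mathrm{tar}}(\cdot\mid s_0,a_0)}$ by the choice of $f^\star$, while the nuisance summand is charged to $r^{(2)}$: since $|w^{p_{\mathrm{tar}}}-w^{\hat p^t_{\mathrm{tar}}}|=|V_1^{p_{\mathrm{tar}},\pi}(\rho;r^{(2)})-V_1^{\hat p^t_{\mathrm{tar}},\pi}(\rho;r^{(2)})|\le[\rho\pi_1 e_1^{\pi,t}](r^{(2)})$ and $v^{\hat p^t_{\mathrm{tar}}}\le 1$, a reverse triangle inequality yields $\sigma\,\dtv{\cdot}{\cdot}\le w^{p_{\mathrm{tar}}}\dtv{\cdot}{\cdot}\le[\rho\pi_1 e_1^{\pi,t}](r^{(1)})+[\rho\pi_1 e_1^{\pi,t}](r^{(2)})$. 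Taking $r$ to be whichever of $r^{(1)},r^{(2)}$ carries the larger error gives the first claimed bound $\dtv{\cdot}{\cdot}\le\frac{2}{\sigma}[\rho\pi_1 e_1^{\pi,t}](r)$, where the factor $2$ is precisely the price of eliminating the visitation mismatch. The second inequality then follows by applying \cref{lem:error-W} to this $r$ and bounding $\rho\pi_1 W_1^t\le\rho\max_{a}W_1^t$, since $\pi_1(\cdot\mid s)$ is a probability vector and $W_1^t\ge 0$.

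I expect the visitation mismatch --- the fact that the reach probability $w^p$ itself depends on the transition model being estimated --- to be the main obstacle, since it blocks a naive single-reward argument from isolating the one-step TV gap. The $r^{(2)}$ device resolves this by routing the mismatch back through the same reward-free error bound, but to make the two bounds simultaneous one must verify that \cref{lem:error-W} holds uniformly over all rewards in $\mathcal{R}$ and all policies on a single good event, so that applying it to both $r^{(1)}$ and $r^{(2)}$ does not inflate the failure probability beyond $\delta/2$.
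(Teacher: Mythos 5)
Your proposal is correct and follows essentially the same route as the paper: the paper's $r'$ and $r''$ are exactly your $r^{(2)}$ and $r^{(1)}$ (with $r''$ built from the same Neyman--Pearson set $\{s' : p_{\mathrm{tar}}(s'\mid s,a)>\hat p^t_{\mathrm{tar}}(s'\mid s,a)\}$), the nuisance visitation-mismatch term is charged to the constant reward in the same way, and the factor $2/\sigma$ arises identically from reachability plus taking the worse of the two errors. Your closing concern about uniformity is handled in the paper exactly as you anticipate: \cref{lem:error-W} holds deterministically for all $\pi$ and all $r\in\mathcal{R}$ on the single good event $F^{\mathrm{RF}}$ of probability at least $1-\delta/2$, so no union bound over rewards is needed.
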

\begin{proof}
    In this lemma, since only $\mathcal{M}_{\mathrm{tar}}$ is considered, we drop the subscript ``$\mathrm{tar}$'' for clarity. First we build an upper bound on $[\rho \pi_1  e_1^{\pi,t}](r)$ for any $\pi$ and $r\in\mathcal{R}$. By \cref{lem:error-W}, if we take $\pi_1^{t+1}(s)=\arg\max_{a\in\mathcal{A}}W_1^{t}(s,a)$, we can obtain that:
    \begin{equation}
        [\rho \pi_1  e_1^{\pi,t}](r)\le 3\sqrt{\rho \pi_1^{t+1}  W_1^{t}}+\pi^{t+1}_1\rho W_1^{t}.
    \end{equation}
    Now we build the connection between $ [\rho \pi_1  e_1^{\pi,t}](r)$ and $\dtv{p}{\hat{p}^t}$. First by the definition of $Q_1^{p,\pi}$, for $r\in\mathcal{R}$ we have:
    \begin{align*}
        \rho \pi_1  Q_1^{p,\pi}&=\mathbb{E}_{\pi}[\sum_{h=1}^H r_h(s_h,a_h,s_{h+1})] \\
        &=\mathbb{E}_{\pi}[r_{h^\star}(s_{h^\star},a_{h^\star},s_{h^\star+1})] \tag{only $r_{h^\star}$ is non-zero} \\
        &=\sum_{s,a,s'}p_{h^\star}^\pi(s,a)p(s'\mid s,a)r_{h^\star}(s,a,s'),
    \end{align*}
    where the last equality follows by recalling $p_{h^\star}^\pi(s,a)$ is the probability of reaching $(s,a)$ at step $h^\star$ under policy $\pi$. Therefore, by triangle inequality, we have:
    \begin{align}
    \rho \pi_1  e_1^{\pi,t}(r)&\ge|\rho \pi_1  Q_1^{p,\pi}-\rho \pi_1  Q_1^{\hat{p}^t,\pi}| \nonumber\\ 
    &=|\sum_{s,a,s'}p_{h^\star}^\pi(s,a)p(s'\mid s,a)r_{h^\star}(s,a,s')-\sum_{s,a,s'}\hat{p}_{h^\star}^{\pi,t}(s,a)\hat{p}^t(s'\mid s,a)r_{h^\star}(s,a,s')|,\label{eq:e-r-pi}
    \end{align}
        In the following step, we fix a given $(s,a)$ for the analysis. 
    For a given $(s,a)$, we first take $(\pi',h^\star)=\arg\max_{\pi,h}p_{h}^\pi(s,a)$, and choose $r'\in\mathcal{R}$ as:
    \[
    r'_h(\overline{s},\overline{a},\cdot)=\left\{
    \begin{array}{ll}
        1 & \text{if } \overline{s}=s,\overline{a}=a,h=h^\star, \\
        0 & \text{otherwise} .
    \end{array}
    \right.
    \]
    Substituting $r'$ and $\pi'$ into \cref{eq:e-r-pi} yields that:
    \begin{equation}\label{eq:pi1}
        |p_{h^\star}^{\pi'}(s,a)-\hat{p}_{h^\star}^{\pi',t}(s,a)|\le \rho \pi'_1  e_1^{\pi',t}(r').
    \end{equation}
    Then we choose the reward function $r''\in\mathcal{R}$ to be:
    \[
    r''_h(\overline{s},\overline{a},\overline{s}')=\left\{
    \begin{array}{ll}
        1 & \text{if } \overline{s}=s,\overline{a}=a,h=h^\star,p(\overline{s}'\mid s,a)>\hat{p}^t(\overline{s}'\mid s,a), \\
        0 & \text{otherwise} .
    \end{array}
    \right.
    \]
    Substituting $r''$ and $\pi'$ into \cref{eq:e-r-pi} yields that:
    \begin{align*}
        \rho \pi'_1  e_1^{\pi',t}(r'') &\ge\left|p_{h^\star}^{\pi'}(s,a)\sum_{s'}p(s'\mid s,a)r''_{h^\star}(s,a,s')-\hat{p}_{h^\star}^{\pi',t}(s,a)\sum_{s'}\hat{p}^t(s'\mid s,a)r''_{h^\star}(s,a,s')\right|\\
        &= \left|p_{h^\star}^{\pi'}(s,a)(\sum_{s'}p(s'\mid s,a)r''_{h^\star}(s,a,s')-\hat{p}^{t}(s'\mid s,a)r''_{h^\star}(s,a,s')) \right.\\
        &\qquad-\left.(\hat{p}_{h^\star}^{\pi',t}(s,a)-p_{h^\star}^{\pi'}(s,a))\sum_{s'}\hat{p}^{t}(s'\mid s,a)r''_{h^\star}(s,a,s')\right|\\
        &\ge p_{h^\star}^{\pi'}(s,a)|\sum_{s'}p(s'\mid s,a)r''_{h^\star}(s,a,s')-\hat{p}^{t}(s'\mid s,a)r''_{h^\star}(s,a,s')|\\
        &\qquad-|p_{h^\star}^{\pi'}(s,a)-\hat{p}_{h^\star}^{\pi',t}(s,a)||\hat{p}^tr''_{h^\star}(s,a)| \tag{triangle inequality}\\
        &= p_{h^\star}^{\pi'}(s,a)\sum_{s':p(s'\mid s,a)>\hat{p}^{t}(s'\mid s,a)}(p(s'\mid s,a)-\hat{p}^{t}(s'\mid s,a))\\
        &\qquad-|p_{h^\star}^{\pi'}(s,a)-\hat{p}_{h^\star}^{\pi',t}(s,a)||\hat{p}^tr''_{h^\star}(s,a)| \tag{Definition of $r''$}\\
        &= p_{h^\star}^{\pi'}(s,a)\dtv{p(\cdot\mid s,a)}{\hat{p}^t(\cdot\mid s,a)}-|p_{h^\star}^{\pi'}(s,a)-\hat{p}_{h^\star}^{\pi',t}(s,a)||\hat{p}^tr''_{h^\star}(s,a)| \\
        &\ge \sigma\dtv{p(\cdot\mid s,a)}{\hat{p}^t(\cdot\mid s,a)}-\rho \pi'_1  e_1^{\pi',t}(r'), \tag{\cref{definition:reachability}. \cref{eq:pi1}}
    \end{align*}
    Thus, we obtain that:
    \begin{align*}
        \dtv{p(\cdot\mid s,a)}{\hat{p}^t(\cdot\mid s,a)}&\le \frac{1}{\sigma}(\rho \pi'_1  e_1^{\pi',t}(r')+\rho \pi'_1  e_1^{\pi',t}(r''))\\
        &\le \frac{2}{\sigma}\max_{r\in\{r',r''\}}\rho \pi'_1  e_1^{\pi',t}(r) \\
        &\le\frac{2}{\sigma} (3\sqrt{\rho \pi_1^{t+1}  W_1^{t}}+\rho \pi_1^{t+1}  W_1^{t}), \tag{\cref{lem:error-W}}
    \end{align*}
    where $\pi_1^{t+1}(s)=\arg\max_{a\in\mathcal{A}}W_1^{t}(s,a)$ as we said before. This ends the proof and yields the choice of the stopping criterion of \cref{alg:rf}.
\end{proof}
\subsubsection{Proof of \cref{lem:sample-recognization}}
\begin{proof}
As a overview of the proof, we first show that the stopping criterion can guarantee that we have $\hat{\mathcal{B}}=\mathcal{B}$. Then we solve for the upper bound of the stopping time $\tau$ to get the sample complexity of \cref{alg:rf}.
\paragraph{Identify the shift successfully.}By the stopping criterion in \cref{alg:rf} and \cref{lem:TV-upper}, we have for any $(s,a)\in\sa$:
    \begin{equation*}
        \dtv{p_{\mathrm{tar}}(\cdot\mid s,a)}{\hat{p}^t_{\mathrm{tar}}(\cdot\mid s,a)}\le\frac{2}{\sigma} (3\sqrt{\rho \pi_1^{t+1}  W_1^{t}}+\rho \pi_1^{t+1}  W_1^{t})\le \beta/4.
    \end{equation*}
    On event $F^{\mathrm{RF}}$, we have:
    \begin{align*}
        \dtv{p_{\mathrm{src}}(\cdot\mid s,a)}{\hat{p}_{\mathrm{src}}(\cdot\mid s,a)}\le \beta/4,
    \end{align*}
    which implies that if $\dtv{\hat{p}_{\mathrm{tar}}^{t}(\cdot\mid s,a)}{\hat{p}_{\mathrm{src}}(\cdot\mid s,a)}> \beta/2$, we have:
    \begin{align*}
        \dtv{p_{\mathrm{src}}(\cdot\mid s,a)}{p_{\mathrm{tar}}(\cdot\mid s,a)}&\ge\dtv{\hat{p}_{\mathrm{tar}}^{t}(\cdot\mid s,a)}{\hat{p}_{\mathrm{src}}(\cdot\mid s,a)}-\dtv{p_{\mathrm{tar}}(\cdot\mid s,a)}{\hat{p}_{\mathrm{tar}}^t(\cdot\mid s,a)} \\
        &\quad- \dtv{p_{\mathrm{src}}(\cdot\mid s,a)}{\hat{p}_{\mathrm{src}}(\cdot\mid s,a)}\tag{triangle inequality} \\
        &>\beta/2 -\beta/4-\beta/4=0.
    \end{align*}
    This means that $\hat{\mathcal{B}}\subseteq \mathcal{B}$. On the contrary, if $\dtv{p_{\mathrm{src}}(\cdot\mid s,a)}{p_{\mathrm{tar}}(\cdot\mid s,a)}\ge \beta$, we have:
    \begin{align*}
        \dtv{\hat{p}^{t}_{\mathrm{tar}}(\cdot\mid s,a)}{\hat{p}_{\mathrm{src}}(\cdot\mid s,a)}&\ge \dtv{p_{\mathrm{src}}(\cdot\mid s,a)}{p_{\mathrm{tar}}(\cdot\mid s,a)}-\dtv{p_{\mathrm{tar}}(\cdot\mid s,a)}{\hat{p}_{\mathrm{tar}}^t(\cdot\mid s,a)} \\
        &\quad- \dtv{p_{\mathrm{src}}(\cdot\mid s,a)}{\hat{p}_{\mathrm{src}}(\cdot\mid s,a)}\tag{triangle inequality} \\
        &\ge \beta-\beta/4-\beta/4=\beta/2,
    \end{align*}
    which implies that $ \mathcal{B}\subseteq\hat{\mathcal{B}}$. Thus, by \cref{lem:good-event1}, we have $ \mathcal{B}=\hat{\mathcal{B}}$ w.p. $1-\delta/2$. This ends the proof of the first part.
    
\paragraph{Solve for the sample complexity.}Next we solve for the sample complexity \cref{alg:rf}. In the following steps, since only $\mathcal{M}_{\mathrm{tar}}$ is considered, we drop the subscript `tar' for clarity.

As an overview of this part, we first upper bound $\rho \pi_1^{t+1}  W_1$ with the weighted sum of exploration bonus under the true visitation probability. Further, we control this term by change the visitation count to the cumulative visitation probability. Summing these upper bounds over $t< \tau$, we get an inequality for $\tau$. Solving this inequality, we finally get an upper bound of $\tau$, and in turn the sample complexity.

To begin with, we establish an upper bound on $W_h^t(s,a)$ for all $(s,a,h)$. If $\nt>0$, we have:
    \begin{align*}
        W_h^t(s,a)&\le 4\frac{Hg_1(\nt,\delta)}{\nt}+\hat{p}^{t}\max_{a'\in\mathcal{A}}W_{h+1}^t(s,a) \\
        &= 4\frac{Hg_1(\nt,\delta)}{\nt}+p\pi_{h+1}^{t+1}W_{h+1}^t(s,a)+(\hat{p}^t-p)\pi_{h+1}^{t+1}W_{h+1}^t(s,a) \\
        &\le 4\frac{Hg_1(\nt,\delta)}{\nt}+p\pi_{h+1}^{t+1}W_{h+1}^t(s,a)+\sqrt{2\revar(\pi_{h+1}^{t+1}W_{h+1}^t)(s,a)\frac{g_1(\nt,\delta)}{\nt}}+\frac{2}{3}\frac{g_1(\nt,\delta)}{\nt} \tag{\cref{lem:trans}} \\
        &\le 5\frac{Hg_1(\nt,\delta)}{\nt}+p\pi_{h+1}^{t+1}W_{h+1}^t(s,a)+\sqrt{2\frac{1}{H}p(\pi_{h+1}^{t+1}W_{h+1}^t)(s,a)H\frac{g_1(\nt,\delta)}{\nt}} \tag{$\revar(\pi_{h+1}^{t+1}W_{h+1}^t)(s,a)\le p\pi_{h+1}^{t+1}W_{h+1}^t(s,a)$}\\
        &\le 7H\frac{g_1(\nt,\delta)}{\nt}+(1+\frac{1}{H})p\pi_{h+1}^{t+1}W_{h+1}^t(s,a). \tag{$\sqrt{xy}\le x+y$}
    \end{align*}
    Since $W_h^t(s,a)\le 1$, we have for all $\nt\ge 0$:
    \begin{equation*}
         W_h^t(s,a)\le 7H(\frac{g_1(\nt,\delta)}{\nt}\land 1)+(1+\frac{1}{H})p\pi_{h+1}^{t+1}W_{h+1}^t(s,a). 
    \end{equation*}
    Apply this inequality recursively, we get:
    \begin{align*}
        \rho \pi_1^{t+1}  W_1^t\le 7eH\sum_{h=1}^H\sum_{s,a}p_{h}^{t+1}(s,a)(\frac{g_1(\nt,\delta)}{\nt}\land 1).\tag{$(1+\frac{1}{H})^h\le e$}
    \end{align*}
    Suppose the stopping time is $\tau$, by the stopping criterion and \cref{eq:error-upper-bound} we have for $t<\tau$,
    \begin{align*}
        \sigma\beta/8\le 9\sqrt{eH\sum_{h=1}^H\sum_{(s,a)\in\sa}p_h^{t+1}(\frac{g_1(\nt,\delta)}{\nt} \land 1)}+7eH\sum_{h=1}^H\sum_{(s,a)\in\sa}p_h^{t+1}\frac{g_1(\nt,\delta)}{\nt}\land 1.
    \end{align*}
    Summing over all $t<\tau$, we obtain that,
    \begin{align*}
        (\tau-1)\sigma\beta&\le 72\sum_{t=1}^{\tau-1}\sqrt{eH\sum_{h=1}^H\sum_{(s,a)\in\sa}p_h^{t+1}(\frac{g_1(\nt,\delta)}{\nt} \land 1)}+56eH\sum_{t=1}^{\tau-1}\sum_{h=1}^H\sum_{s,a}p_h^{t+1}\frac{g_1(\nt,\delta)}{\nt}\land 1 \\
        &\le 72\sqrt{eH(\tau-1)\sum_{t=1}^{\tau-1}\sum_{h=1}^H\sum_{(s,a)\in\sa}p_h^{t+1}(\frac{g_1(\nt,\delta)}{\nt} \land 1)}\\
        &\qquad+56eH\sum_{t=1}^{\tau-1}\sum_{h=1}^H\sum_{(s,a)\in\sa}p_h^{t+1}\frac{g_1(\nt,\delta)}{\nt}\land 1 \tag{Cauchy-Schwarz inequality}\\
        &\le 72\sqrt{eH\tau\sum_{t=1}^{\tau-1}\sum_{h=1}^H\sum_{(s,a)\in\sa}p_h^{t+1}(\frac{g_1(\overnt,\delta)}{\overnt\lor 1} )}+56eH\sum_{t=1}^{\tau-1}\sum_{h=1}^H\sum_{(s,a)\in\sa}p_h^{t+1}\frac{g_1(\overnt,\delta)}{\overnt\lor 1} \tag{\cref{lem:occupancy}} \\
        &\le 72\sqrt{eH\tau g_1(\tau-1,\delta)\sum_{t=1}^{\tau-1}\sum_{s,a}\frac{\overline{n}^{t+1}(s,a)-\overnt}{\overnt\lor 1}}+56eHg_1(\tau-1,\delta)\sum_{t=1}^{\tau-1}\sum_{s,a}\frac{\overline{n}^{t+1}(s,a)-\overnt}{\overnt\lor 1}\\
        &\le 144\sqrt{eH\tau g_1(\tau-1,\delta)SA\log(\tau+1)}+224eHSAg_1(\tau-1,\delta)\log(\tau+1) \tag{\cref{lem:summation}} \\
        &\le 144\sqrt{eH\tau SA(\log(\frac{6SAH}{\delta}\log(8e
        \tau)+S\log^2(8e\tau)))}+224eHSA(\log(\frac{6SAH}{\delta}\log(8e
        \tau)+S\log^2(8e\tau))) \tag{$\log(\tau+1)\le\log(8e\tau)$}.
    \end{align*} 
    Solving this inequality by \cref{lem:solveineq}, we obtain that:
    \begin{equation*}
        \tau\le C_1\frac{HSA}{(\sigma\beta)^2},
    \end{equation*}
    where $C_1>0$ only contains log factors. Therefore, \cref{alg:rf} can identify $\mathcal{B}$ using at most $C_1\frac{H^2SA}{(\sigma\beta)^2}$ from the target MDP. This concludes the proof.
    \end{proof}
    \begin{remark}
        Note that directly applying the proof technique in \citet{menard_fast_2021} or any other sample-efficient algorithms cannot establish bounds on the estimation error of transitions and in turn identify the shifted region. This can be illustrated through the example in \cref{fig:RF-example}. Besides, in terms of sample complexity, simply apply the proof technique in \citet{menard_fast_2021} will result in an additional $H$ factor in the sample complexity result, while we improve upon this in our problem by considering a more general reward functions.
    \end{remark}
\begin{figure}
    \centering
    \includegraphics[width=0.5\linewidth]{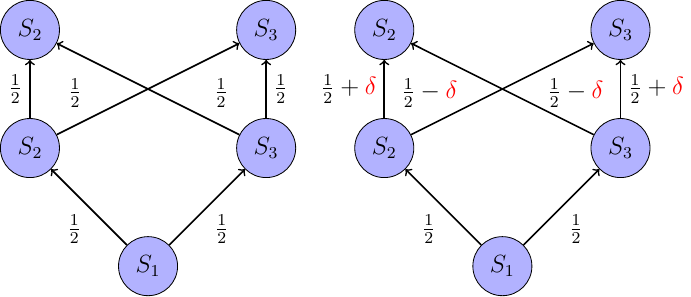}
    \caption{The MDP on the left is the real MDP, and the MDP on the right is the empirical MDP. For any reward function $r(\cdot,\cdot)$ and any policy $\pi$, the simulation error $|Q_1-\hat{Q}_1|$ is zero, while the estimation error can be significant with $\delta$.}
    \label{fig:RF-example}
\end{figure}
\subsection{Proof of \cref{theorem:sample-complexity}}\label{appendix:main-result}
\subsubsection{Good Events}
Since the sample complexity of the first step in \cref{alg:hybrid} is already studied in \cref{lem:sample-recognization}, here we mainly focus on the episodes in the second step (i.e., \cref{alg:ucbvi}). We use the index $t$ to denote the number of episodes in the second step of \cref{alg:hybrid}. For \cref{alg:hybrid}, we consider the following good event $F^{\mathrm{Hybrid}}$:
\begin{align*}
F_4 &\triangleq \left\{ \forall t \in \mathbb{N}^+, \forall (s, a) \in \sa : \mathrm{KL}(\tpt(\cdot \mid s, a), p_{\mathrm{tar}}(\cdot \mid s, a)) \leq \frac{g_1(\tnt(s,a), \delta)}{\tnt(s, a)} \right\},\\
F_5 &\triangleq \left\{ \forall t \in \mathbb{N}^+, \forall (s, a) \in \hat{\mathcal{B}} : \tnt(s, a) \geq \frac{1}{2} \overline{n}_{\mathrm{tar}}^t(s, a) - g_3(\delta) \right\},\\
F_6 &\triangleq \bigg\{
\forall t \in \mathbb{N}^+, \forall h \in [H], \forall (s, a) \in \sa, \\
& \ \ \ \ \left| (\tpt - p_{\mathrm{tar}}) V_{h+1}^{p_{\mathrm{tar}},\star}(s, a) \right| \leq \min \bigg( H, \sqrt{2 \mathrm{Var}_{p_{\mathrm{tar}}}(V_{h+1}^{p_{\mathrm{tar}},\star})(s, a) \frac{g_2(\tnt(s, a), \delta)}{\tnt(s, a)}} + \frac{3H g_2(\tnt(s, a), \delta)}{\tnt(s, a)} \bigg) 
\bigg\},\\
F^{\mathrm{Hybrid}} &\triangleq F^{\mathrm{RF}}\bigcap\bigg( \bigcap_{i=4}^6 F_i\bigg),\end{align*}
where the bonus functions $g_1$ and $g_3$, event $F^{\mathrm{RF}}$, are defined in \cref{appendix:recognization}, and $g_2$ is defined as follows:
\begin{align*}
    g_2(n,\delta)&\triangleq\log(\frac{6SAH}{\delta})+\log(8e(n+1)).
\end{align*}
In the definition of $F^{\mathrm{Hybrid}}$,
\[
\tnt(s,a) \triangleq
\begin{cases}
    n^t(s,a), & \text{for } (s,a)\in\hat{\mathcal{B}}, \\
    n_{\mathrm{src}}(s,a), & \text{for } (s,a)\notin\hat{\mathcal{B}},
\end{cases}
\]

\[
\tpt(\cdot\mid s,a) \triangleq
\begin{cases}
    \hat{p}_{\mathrm{tar}}^t(\cdot\mid s,a), & \text{for } (s,a)\in\hat{\mathcal{B}}, \\
    \hat{p}_{\mathrm{src}}(\cdot\mid s,a), & \text{for } (s,a)\notin\hat{\mathcal{B}},
\end{cases}
\]
where $\nt$ denotes the visitation count in \cref{alg:ucbvi}.
Let $p_{h,\mathrm{tar}}^t(s,a)$ denote the probability of visiting $(s,a)$ at step $h$ in the $t$-th episode under the true dynamics. Furthermore, let $\overline{n}^t_{\mathrm{tar}}(s,a)\triangleq \sum_{i=1}^t\sum_{h=1}^Hp_{h,\mathrm{tar}}^t(s,a)$ denote the cumulative probability of visiting $(s,a)$ in the first $t$ episodes. 
\begin{lemma}
    We have $\mathbb{P}(F^{\mathrm{Hybrid}})\ge 1-\delta$.
\end{lemma}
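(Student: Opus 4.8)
The plan is to mirror the union-bound argument of \cref{lem:good-event1}, since $F^{\mathrm{Hybrid}}=F^{\mathrm{RF}}\cap F_4\cap F_5\cap F_6$ and $\mathbb{P}(F^{\mathrm{RF}})\ge 1-\delta/2$ is already available. It therefore suffices to show $\mathbb{P}(F_i^c\cap F^{\mathrm{RF}})\le\delta/6$ for each $i\in\{4,5,6\}$, after which the decomposition $(F^{\mathrm{Hybrid}})^c=(F^{\mathrm{RF}})^c\cup(F^{\mathrm{RF}}\cap F_4^c)\cup(F^{\mathrm{RF}}\cap F_5^c)\cup(F^{\mathrm{RF}}\cap F_6^c)$ gives $\mathbb{P}((F^{\mathrm{Hybrid}})^c)\le\mathbb{P}((F^{\mathrm{RF}})^c)+\sum_{i=4}^6\mathbb{P}(F_i^c\cap F^{\mathrm{RF}})\le \delta/2+3(\delta/6)=\delta$. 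The only genuine complication relative to \cref{lem:good-event1} is that $\tpt$ and $\tnt$ stitch together online target data on $\hat{\mathcal{B}}$ and offline source data on $\sa\setminus\hat{\mathcal{B}}$, and the partition $\hat{\mathcal{B}}$ is itself random.

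The starting point is the observation that on $F^{\mathrm{RF}}$ the shifted region is identified exactly, $\hat{\mathcal{B}}=\mathcal{B}$ (\cref{lem:sample-recognization}); consequently $p_{\mathrm{src}}(\cdot\mid s,a)=p_{\mathrm{tar}}(\cdot\mid s,a)$ for every $(s,a)\notin\hat{\mathcal{B}}$, so the offline estimate $\hat{p}_{\mathrm{src}}$ is an \emph{unbiased} empirical estimate of the target transition on the non-shifted region. This is exactly what licenses substituting source data for online target samples there. I would treat $F_5$ first: it concerns only $(s,a)\in\hat{\mathcal{B}}$, where $\tnt=\nt$ is the online count of \cref{alg:ucbvi}, and because counts are updated \emph{only} inside $\hat{\mathcal{B}}$, the estimate is decoupled from the identification phase. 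Hence $F_5$ is the exact analogue of $F_2$ and \cref{lem:bernoulli} gives $\mathbb{P}(F_5^c\cap F^{\mathrm{RF}})\le\delta/6$.

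For $F_4$ and $F_6$ I would split the guarantee over $\{(s,a)\in\hat{\mathcal{B}}\}$ and $\{(s,a)\notin\hat{\mathcal{B}}\}$. On $\hat{\mathcal{B}}$ one applies the same tools as in \cref{lem:good-event1}: the time-uniform KL bound of \cref{lem:kl-divergence} (whose $g_1$ bonus already pays for the $S$-dimensional simplex covering and the union over $t$) for $F_4$, and a one-dimensional Bernstein inequality applied to the fixed, $[0,H]$-bounded function $V_{h+1}^{p_{\mathrm{tar}},\star}$---producing the variance term $\tarvar(V_{h+1}^{p_{\mathrm{tar}},\star})$ together with the $g_2$ bonus---for $F_6$. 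On $\sa\setminus\hat{\mathcal{B}}$ both $\tpt=\hat{p}_{\mathrm{src}}$ and $\tnt=n_{\mathrm{src}}$ are independent of $t$, so the ``for all $t$'' requirement collapses to a single concentration event for the fixed source empirical transition, centered at $p_{\mathrm{src}}=p_{\mathrm{tar}}$ by the observation above; the same KL and Bernstein bounds then apply verbatim. Allocating the $\delta/6$ budget across the two regions and the two directions of each union yields $\mathbb{P}(F_4^c\cap F^{\mathrm{RF}})\le\delta/6$ and $\mathbb{P}(F_6^c\cap F^{\mathrm{RF}})\le\delta/6$.

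I expect the main obstacle to be the measure-theoretic bookkeeping around the data-dependent random set $\hat{\mathcal{B}}$: because $\tpt$ and $\tnt$ are \emph{defined through} $\hat{\mathcal{B}}$, one must ensure that conditioning on $F^{\mathrm{RF}}$ (to force $\hat{\mathcal{B}}=\mathcal{B}$) does not bias the online concentration events. This is resolved by two structural facts---the offline source dataset is independent of all online randomness, and \cref{alg:ucbvi} updates $\nt$ and $\hat{p}_{\mathrm{tar}}^t$ only inside $\hat{\mathcal{B}}$, which severs the dependence between the exploration phase and the identification phase. Once this decoupling is stated explicitly, every constituent bound reduces to a standard concentration inequality and the union bound closes the argument.
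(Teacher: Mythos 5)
Your proposal is correct and follows essentially the same route as the paper: condition on $F^{\mathrm{RF}}$ so that $\hat{\mathcal{B}}=\mathcal{B}$ (hence $\hat{p}_{\mathrm{src}}$ is an unbiased estimate of $p_{\mathrm{tar}}$ off the shifted region), apply \cref{lem:kl-divergence}, \cref{lem:bernoulli}, and \cref{lem:bernstein} to $F_4$, $F_5$, $F_6$ respectively, and close with a union bound. Your complement decomposition versus the paper's product of conditional probabilities is an immaterial difference, and your explicit remarks on the decoupling of $\hat{\mathcal{B}}$ from the online concentration are a welcome elaboration of what the paper leaves implicit.
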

\begin{proof}
    By \cref{lem:sample-recognization}, on event $F^{\mathrm{RF}}$, we have $\hat{\mathcal{B}}=\mathcal{B}$, which implies that for $(s,a)\notin\hat{\mathcal{B}}$, $\tpt(\cdot\mid s,a)=\hat{p}_{\mathrm{src}}(\cdot\mid s,a)$ is the unbiased estimate of $p_{\mathrm{tar}}(\cdot\mid s,a)$, for $(s,a)\in\hat{\mathcal{B}}$, $\tpt(\cdot\mid s,a)=\hat{p}^t_{\mathrm{tar}}(\cdot\mid s,a)$, thus by \cref{lem:kl-divergence} we have $\mathbb{P}(F_4\mid F^{\mathrm{RF}})\ge 1-\delta/6$.\\
    By \cref{lem:bernoulli}, we have w.p. $1-\delta/6$,
    \begin{equation*}
        \tnt(s, a)=n^t(s,a) \geq \frac{1}{2} \overline{n}_{\mathrm{tar}}^t(s, a) - \log(\frac{6|\hat{\mathcal{B}}|}{\delta})\ge \frac{1}{2} \overline{n}_{\mathrm{tar}}^t(s, a) - \log(\frac{6SA}{\delta}),\,\forall t \in \mathbb{N}, \forall (s, a) \in \hat{\mathcal{B}},
    \end{equation*}
    thus we have $\mathbb{P}(F_5\mid F^{\mathrm{RF}})\ge 1-\delta/6$. \\
    Similarly, on event $F^{\mathrm{RF}}$, we have $\hat{\mathcal{B}}=\mathcal{B}$, which implies that for $(s,a)\notin\hat{\mathcal{B}}$, $\tpt(\cdot\mid s,a)=\hat{p}_{\mathrm{src}}(\cdot\mid s,a)$ is the unbiased estimate of $p_{\mathrm{tar}}(\cdot\mid s,a)$, for $(s,a)\in\hat{\mathcal{B}}$, $\tpt(\cdot\mid s,a)=\hat{p}^t_{\mathrm{tar}}(\cdot\mid s,a)$, thus by \cref{lem:bernstein} we have $\mathbb{P}(F_6\mid F^{\mathrm{RF}})\ge 1-\delta/6$. By union bound, we have $\mathbb{P}(F^{\mathrm{Hybrid}})=\mathbb{P}(\bigcap_{i=4}^6 F_i\mid F^{\mathrm{RF}})\cdot\P(F^{\mathrm{RF}})\ge 1-\delta$.
\end{proof}
\begin{remark}
    $F_4$ differs from $F_1$ in that $F_4$ considers the sequence of $\tpt$ in the execution of \cref{alg:ucbvi}, while $F_1$ considers the sequence of $\hat{p}^t$ in the execution of \cref{alg:rf}. A similar observation holds for $F_2$ and $F_5$.
\end{remark}
\subsubsection{Key Lemmas}
We define the following confidence intervals for the optimal value functions,
\begin{align}
\overline{Q}_h^t(s, a) &\triangleq \min \left( 
    H, r(s, a) + 3 \sqrt{\mathrm{Var}_{\tpt} (\overline{V}_{h+1}^t)(s, a) \frac{g_2(\tnt(s, a), \delta)}{\tnt(s, a)}} + 14H^2 \frac{g_1(\tnt(s, a), \delta)}{\tnt(s, a)} \right.\nonumber\\
    &\qquad \left. + \frac{1}{H} \tpt (\overline{V}_{h+1}^t - \underline{V}_{h+1}^t)(s, a) + \tpt \overline{V}_{h+1}^t(s, a) \right), \forall\, (s,a)\in\sa,\label{eq:overlineQ}\\
\overline{V}_h^t(s) &\triangleq \max_{a \in \mathcal{A}} \overline{Q}_h^t(s, a), \\
\overline{V}_{H+1}^t(s) &\triangleq 0, \\
\underline{Q}_h^t(s, a) &\triangleq \max \left( 
    0, r(s, a) - 3 \sqrt{\mathrm{Var}_{\tpt} (\overline{V}_{h+1}^t)(s, a) \frac{g_2(\tnt(s, a), \delta)}{\tnt(s, a)}} - 14H^2 \frac{g_1(\tnt(s, a), \delta)}{\tnt(s, a)} \right. \nonumber\\
    &\qquad \left. - \frac{1}{H} \tpt (\overline{V}_{h+1}^t - \underline{V}_{h+1}^t)(s, a) + \tpt \underline{V}_{h+1}^t(s, a) \right), \forall\, (s,a)\in\sa,\\
\underline{V}_h^t(s) &\triangleq \max_{a \in \mathcal{A}} \underline{Q}_h^t(s, a), \\
\underline{V}_{H+1}^t(s) &\triangleq 0.
\end{align}
\begin{lemma}[Algorithm ensures optimism]
\label{lem:optimism}
On event $F^{\mathrm{Hybrid}}$, we have that for all $t$, all $ h \in [H] $, and any $(s, a)\in\sa$,
\begin{equation}
\label{eq:optimism-q}
\underline{Q}_h^t(s, a) \leq Q_h^{p_{\mathrm{tar}},\star}(s, a) \leq \overline{Q}_h^t(s, a),
\end{equation}
and
\begin{equation}
\label{eq:optimism-v}
\underline{V}_h^t(s) \leq V_h^{p_{\mathrm{tar}},\star}(s) \leq \overline{V}_h^t(s).    
\end{equation}

\end{lemma}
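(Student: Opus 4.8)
The plan is to prove both chains of inequalities simultaneously by backward induction on the step index $h$, from $h=H+1$ down to $h=1$, working throughout on the good event $F^{\mathrm{Hybrid}}$. The base case $h=H+1$ is immediate, since $\overline{V}_{H+1}^t=\underline{V}_{H+1}^t=V_{H+1}^{p_{\mathrm{tar}},\star}=0$. The structural observation that lets the source data enter the argument without introducing bias is that on $F^{\mathrm{RF}}$ we have $\hat{\mathcal{B}}=\mathcal{B}$ by \cref{lem:sample-recognization}; hence for every $(s,a)\notin\hat{\mathcal{B}}$ the transitions coincide, $p_{\mathrm{src}}(\cdot\mid s,a)=p_{\mathrm{tar}}(\cdot\mid s,a)$, so $\tpt(\cdot\mid s,a)=\hat{p}_{\mathrm{src}}(\cdot\mid s,a)$ remains a consistent estimator of $p_{\mathrm{tar}}(\cdot\mid s,a)$. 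Consequently the concentration guarantees $F_4$ and $F_6$ hold for $\tpt$ relative to $p_{\mathrm{tar}}$ uniformly over $\sa$, regardless of whether a given pair draws on online or offline samples, so the induction can treat the shifted region and its complement uniformly.

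For the inductive step (upper bound), I would fix $(s,a)$ and decompose the Bellman target of the optimal Q-function as
\[
Q_h^{p_{\mathrm{tar}},\star}(s,a)=r(s,a)+\tpt V_{h+1}^{p_{\mathrm{tar}},\star}(s,a)+(p_{\mathrm{tar}}-\tpt)V_{h+1}^{p_{\mathrm{tar}},\star}(s,a).
\]
By the induction hypothesis $V_{h+1}^{p_{\mathrm{tar}},\star}\le\overline{V}_{h+1}^t$, so $\tpt V_{h+1}^{p_{\mathrm{tar}},\star}\le\tpt\overline{V}_{h+1}^t$, matching the leading transition term of \cref{eq:overlineQ}. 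The fluctuation $(p_{\mathrm{tar}}-\tpt)V_{h+1}^{p_{\mathrm{tar}},\star}$ is then controlled by the Bernstein-type event $F_6$, which yields a bound of the form $\sqrt{2\,\tarvar(V_{h+1}^{p_{\mathrm{tar}},\star})(s,a)\,g_2(\tnt(s,a),\delta)/\tnt(s,a)}+3Hg_2(\tnt(s,a),\delta)/\tnt(s,a)$. It then remains to verify that the explicit bonus terms in \cref{eq:overlineQ} dominate this fluctuation; the outer $\min(H,\cdot)$ clipping is harmless because $Q_h^{p_{\mathrm{tar}},\star}\le H$ always holds.

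The main obstacle is the variance conversion: the Bernstein bound carries the \emph{true} variance $\tarvar(V_{h+1}^{p_{\mathrm{tar}},\star})$, whereas the bonus in \cref{eq:overlineQ} is stated via the \emph{empirical} variance $\tmvar(\overline{V}_{h+1}^t)$. I would bridge these in two steps: (i) an empirical-variance concentration inequality to pass from $\tarvar(V_{h+1}^{p_{\mathrm{tar}},\star})$ to $\tmvar(V_{h+1}^{p_{\mathrm{tar}},\star})$ up to an additive $O(H^2 g_1/\tnt)$ error, and (ii) the elementary standard-deviation bound $\sqrt{\tmvar(V_{h+1}^{p_{\mathrm{tar}},\star})}\le\sqrt{\tmvar(\overline{V}_{h+1}^t)}+\sqrt{\tpt(\overline{V}_{h+1}^t-V_{h+1}^{p_{\mathrm{tar}},\star})^2}$, where the cross term is at most $\sqrt{H\,\tpt(\overline{V}_{h+1}^t-\underline{V}_{h+1}^t)}$ using $0\le\overline{V}_{h+1}^t-V_{h+1}^{p_{\mathrm{tar}},\star}\le\overline{V}_{h+1}^t-\underline{V}_{h+1}^t\le H$ from the sandwich hypothesis. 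Applying an AM-GM step of the shape $\sqrt{xy}\le x/H+Hy$ to this cross term is exactly what produces the correction $\frac{1}{H}\tpt(\overline{V}_{h+1}^t-\underline{V}_{h+1}^t)$ in the confidence bound, with the residual pieces absorbed into the $14H^2 g_1/\tnt$ term; I expect the delicate part to be tracking the constants $3$, $14H^2$, and $1/H$ so that the induction closes self-consistently. The lower bound $\underline{Q}_h^t$ follows by the symmetric argument, using instead $V_{h+1}^{p_{\mathrm{tar}},\star}\ge\underline{V}_{h+1}^t$ and clipping at $0$, and the value-function inequalities \cref{eq:optimism-v} follow from the Q-function ones by taking $\max_{a}$ on both sides.
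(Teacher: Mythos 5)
Your proposal is correct and follows essentially the same route as the paper's proof: backward induction with the same Bellman decomposition, the event $F_6$ Bernstein bound on $(\tpt-p_{\mathrm{tar}})V_{h+1}^{p_{\mathrm{tar}},\star}$, the two-step variance conversion from $\tarvar(V_{h+1}^{p_{\mathrm{tar}},\star})$ to $\tmvar(\overline{V}_{h+1}^t)$ plus a cross term absorbed via AM--GM into $\frac{1}{H}\tpt(\overline{V}_{h+1}^t-\underline{V}_{h+1}^t)$ and the $14H^2 g_1/\tnt$ slack, and a symmetric argument for the lower bound. You also correctly identify the key structural point that $\hat{\mathcal{B}}=\mathcal{B}$ on $F^{\mathrm{RF}}$ is what makes the source-based estimates unbiased for $p_{\mathrm{tar}}$ outside the shifted region, which is exactly how the paper justifies that $F_4$ and $F_6$ hold uniformly over $\sa$.
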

The proof of \cref{lem:optimism} can be found in \cref{appendix:optimism}. We further define the following value functions to bound $V_h^{p_{\mathrm{tar}},\pi^{t+1}}$:
\begin{align*}
\tilde{Q}_h^t(s, a) &\triangleq \min \left(r(s,a)+p_{\mathrm{tar}}\tilde{V}_h^t(s,a),\max \left( 
    0, r(s, a) - 3 \sqrt{\mathrm{Var}_{\tpt} (\overline{V}_{h+1}^t)(s, a) \frac{g_2(\tnt(s, a), \delta)}{\tnt(s, a)}} \right. \right.  \\
    &\qquad\left. \left. - 14H^2 \frac{g_1(\tnt(s, a), \delta)}{\tnt(s, a)}  - \frac{1}{H} \tpt (\overline{V}_{h+1}^t - \underline{V}_{h+1}^t)(s, a) + \tpt \tilde{V}_{h+1}^t(s, a) ) \right)\right), \forall\, (s,a)\in\sa,\\
\tilde{V}_h^t(s) &\triangleq \pi_{h}^{t+1} \tilde{Q}_h^t(s, a), \\
\tilde{V}_{H+1}^t(s) &\triangleq 0.
\end{align*}
With this definition, we are ready to lower bound $V_h^{p_{\mathrm{tar}},\pi^{t+1}}$ in the following lemma.
\begin{lemma}
\label{lem:lower-estimate}
On event $F^{\mathrm{Hybrid}}$, we have that for all \((s, a, h)\),
\begin{align*}
\tilde{Q}_h^t(s, a) &\leq \min\left( \underline{Q}_h^t(s, a), Q_h^{p_{\mathrm{tar}},\pi^{t+1}}(s, a) \right), \\
\tilde{V}_h^t(s) &\leq \min\left( \underline{V}_h^t(s), V_h^{p_{\mathrm{tar}},\pi^{t+1}}(s) \right).
\end{align*}
\end{lemma}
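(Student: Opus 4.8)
The plan is to prove both inequalities simultaneously by backward induction on $h$, from $h=H+1$ down to $h=1$, exploiting the fact that the exploration-bonus and variance terms appearing in $\tilde{Q}_h^t$ and in $\underline{Q}_h^t$ are \emph{identical}: both use $\overline{V}_{h+1}^t$ inside the variance $\tmvar(\overline{V}_{h+1}^t)$ and inside the correction $\frac{1}{H}\tpt(\overline{V}_{h+1}^t-\underline{V}_{h+1}^t)$, so the only place where the two quantities differ is the final term $\tpt\tilde{V}_{h+1}^t$ versus $\tpt\underline{V}_{h+1}^t$. The base case $h=H+1$ is immediate, since $\tilde{V}_{H+1}^t=\underline{V}_{H+1}^t=V_{H+1}^{p_{\mathrm{tar}},\pi^{t+1}}=0$.

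For the inductive step I would assume $\tilde{V}_{h+1}^t(s)\le\min(\underline{V}_{h+1}^t(s),V_{h+1}^{p_{\mathrm{tar}},\pi^{t+1}}(s))$ for all $s$, and first establish $\tilde{Q}_h^t\le\underline{Q}_h^t$. Since $\tilde{Q}_h^t$ is a minimum whose second argument is exactly the $\underline{Q}_h^t$-expression but with $\tpt\underline{V}_{h+1}^t$ replaced by $\tpt\tilde{V}_{h+1}^t$, and since $\tpt$ is a probability measure so that $\tilde{V}_{h+1}^t\le\underline{V}_{h+1}^t$ yields $\tpt\tilde{V}_{h+1}^t\le\tpt\underline{V}_{h+1}^t$, monotonicity of $x\mapsto\max(0,\,\cdot+x)$ and of $\min(\cdot,\cdot)$ gives $\tilde{Q}_h^t(s,a)\le\underline{Q}_h^t(s,a)$. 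Next I would establish $\tilde{Q}_h^t\le Q_h^{p_{\mathrm{tar}},\pi^{t+1}}$: because $\tilde{Q}_h^t(s,a)\le r(s,a)+p_{\mathrm{tar}}\tilde{V}_{h+1}^t(s,a)$ (the first argument of the defining minimum, built from the \emph{true} transition $p_{\mathrm{tar}}$), the hypothesis $\tilde{V}_{h+1}^t\le V_{h+1}^{p_{\mathrm{tar}},\pi^{t+1}}$ together with monotonicity of $p_{\mathrm{tar}}$ gives $\tilde{Q}_h^t(s,a)\le r(s,a)+p_{\mathrm{tar}}V_{h+1}^{p_{\mathrm{tar}},\pi^{t+1}}(s,a)=Q_h^{p_{\mathrm{tar}},\pi^{t+1}}(s,a)$, the last equality being the Bellman evaluation equation for $\pi^{t+1}$ under the true dynamics. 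Combining the two gives the claimed $Q$-inequality.

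Finally I would pass from $Q$ to $V$. Since $\tilde{V}_h^t(s)=\pi_h^{t+1}\tilde{Q}_h^t(s,\cdot)$ is a convex combination, the bound $\tilde{Q}_h^t\le\underline{Q}_h^t$ gives $\tilde{V}_h^t(s)\le\pi_h^{t+1}\underline{Q}_h^t(s,\cdot)\le\max_{a}\underline{Q}_h^t(s,a)=\underline{V}_h^t(s)$, where the middle step uses that a convex combination is dominated by the maximum; this is the single spot where I must note that $\pi^{t+1}$ is greedy with respect to $\overline{Q}_h^t$ rather than $\underline{Q}_h^t$, which is harmless because only the max bound is needed. Likewise $\tilde{Q}_h^t\le Q_h^{p_{\mathrm{tar}},\pi^{t+1}}$ and $V_h^{p_{\mathrm{tar}},\pi^{t+1}}(s)=\pi_h^{t+1}Q_h^{p_{\mathrm{tar}},\pi^{t+1}}(s,\cdot)$ give $\tilde{V}_h^t(s)\le V_h^{p_{\mathrm{tar}},\pi^{t+1}}(s)$, closing the induction.

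I do not expect a serious obstacle: the construction of $\tilde{Q}_h^t$ as a minimum of the true-dynamics backup $r+p_{\mathrm{tar}}\tilde{V}_{h+1}^t$ and the pessimistic empirical backup is precisely engineered so that each argument of the minimum yields one of the two target inequalities, and all bonus terms cancel from the comparison. The only points demanding care are (i) tracking that the variance and difference-correction terms are built from $\overline{V}_{h+1}^t$ and $\underline{V}_{h+1}^t$, hence common to both $\tilde{Q}_h^t$ and $\underline{Q}_h^t$ and not subject to the induction on $\tilde{V}$, and (ii) invoking monotonicity for the genuine probability measures $\tpt$ and $p_{\mathrm{tar}}$ so that every pointwise inequality survives integration.
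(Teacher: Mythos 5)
Your proof is correct and follows essentially the same backward induction as the paper's: the two arguments of the $\min$ defining $\tilde{Q}_h^t$ deliver the two target inequalities, with the bonus and correction terms common to $\tilde{Q}_h^t$ and $\underline{Q}_h^t$ cancelling from the comparison. Your passage from $Q$ to $V$ is in fact slightly more careful than the paper's write-up, which uses $\tilde{V}_h^t(s)=\max_a\tilde{Q}_h^t(s,a)$ despite defining $\tilde{V}_h^t$ as the $\pi_h^{t+1}$-average; your convex-combination argument is the one consistent with the stated definition and reaches the same conclusion.
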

\begin{proof}
    We proceed by induction. For the base case, we have $\tilde{Q}_{H+1}^t(s,a)=\underline{Q}_{H+1}^t(s,a)=Q_{H+1}^{p_{\mathrm{tar}},\pi^{t+1}}(s,a)=0$, so the claim trivially holds. Assuming the statement holds for step $h+1$, then for step $h$, we first prove that $\tilde{Q}_{h}^t(s,a)\le Q_h^{p_{\mathrm{tar}},\pi^{t+1}}(s, a)$. With the induction hypothesis, we have:
    \begin{equation*}
        Q_h^{p_{\mathrm{tar}},\pi^{t+1}}(s, a)-\tilde{Q}_{h}^t(s,a)\ge p_{\mathrm{tar}}(V_{h+1}^{p_{\mathrm{tar}},\pi^{t+1}}-\tilde{V}_{h+1}^t)(s,a)\ge 0.
    \end{equation*}
    Then by definition, we have:
    \begin{equation*}
        \tilde{V}_h^t(s)=\max_{a\in\mathcal{A}}\tilde{Q}_{h}^t(s,a)\le \max_{a\in\mathcal{A}}Q_h^{p_{\mathrm{tar}},\pi^{t+1}}(s, a)=V_{h}^{p_{\mathrm{tar}},\pi^{t+1}}(s).
    \end{equation*}
    This ends the proof of the first part. For the second part, we have: 
    \begin{align}
    \tilde{Q}_h^t(s,a)&\le\max \left( 
    0, r(s, a) - 3 \sqrt{\mathrm{Var}_{\tpt} (\overline{V}_{h+1}^t)(s, a) \frac{g_2(\tnt(s, a), \delta)}{\tnt(s, a)}} \right.  \nonumber \\
    &\qquad\left.  - 14H^2 \frac{g_1(\tnt(s, a), \delta)}{\tnt(s, a)}  - \frac{1}{H} \tpt (\overline{V}_{h+1}^t - \underline{V}_{h+1}^t)(s, a) + \tpt \tilde{V}_{h+1}^t(s, a) ) \right) \nonumber \\
    &\le\max \left( 
    0, r(s, a) - 3 \sqrt{\mathrm{Var}_{\tpt} (\overline{V}_{h+1}^t)(s, a) \frac{g_2(\tnt(s, a), \delta)}{\tnt(s, a)}} \right.  \nonumber\\
    &\qquad\left.  - 14H^2 \frac{g_1(\tnt(s, a), \delta)}{\tnt(s, a)}  - \frac{1}{H} \tpt (\overline{V}_{h+1}^t - \underline{V}_{h+1}^t)(s, a) + \tpt \underline{V}_{h+1}^t(s, a) ) \right) \nonumber \\
    &=\underline{Q}_h^{t}(s,a).
    \end{align}
    Then by definition we have:
    \begin{equation*}
        \tilde{V}_h^t(s)=\max_{a\in\mathcal{A}}\tilde{Q}_{h}^t(s,a)\le \max_{a\in\mathcal{A}}\underline{Q}_h^t(s, a)=\underline{V}_h^t(s).
    \end{equation*}
This ends the proof.
\end{proof}
\subsubsection{Proof of \cref{theorem:sample-complexity}}
\paragraph{Case 1: $\sigma\beta\ge\sqrt{\frac{S}{H}}\varepsilon$.}
Since by \cref{lem:sample-recognization}, we know that we can identify $\mathcal{B}$ using $\widetilde{O}(H^2SA/(\sigma\beta)^2)$ samples from the target MDP, so we focus on the sample complexity of the second step of \cref{alg:hybrid} (i.e., \cref{alg:ucbvi}). As an overview, we first show that the stopping criterion in \cref{alg:ucbvi} can yield an $\varepsilon$-optimal policy for the target MDP, then we study its sample complexity. 

\paragraph{Optimality guarantee.}We first define function $G_h^t$ to upper bound the performance gap $V_1^{p_{\mathrm{tar}},*}(\rho)-V_1^{p_{\mathrm{tar}},\pi^{t+1}}(\rho)$. Specifically, let $G_{H+1}^t\triangleq 0$ for all $(s,a)\in \mathcal{S}\times\mathcal{A}$, and for all $h\in [H]$,
\begin{align}
G_h^t(s, a) \triangleq& \min \left( 
    H, 6 \sqrt{ \mathrm{Var}_{\tpt} (\overline{V}_{h+1}^t)(s, a) \frac{g_2(\tnt(s, a), \delta)}{\tnt(s, a)}} 
    + 35H^2 \frac{g_1(\tnt(s, a), \delta)}{\tnt(s, a)}\right. \nonumber\\
    &\left. + ( 1 + \frac{3}{H}) \tpt \pi_{h+1}^{t+1} G_{h+1}^t (s,a)\right),\,\forall (s,a)\in \sa, \label{eq:value-gap}
\end{align} 
We show in \cref{lem:value-gap-bound} that $G_1^t$ can be used to upper bound the optimality gap. The proof of \cref{lem:value-gap-bound} can be found in \cref{appendix:value-gap-bound}.
\begin{lemma}\label{lem:value-gap-bound}
    On event $F^{\mathrm{UCBVI}}$, for all t,
\begin{equation}
    V_1^{p_{\mathrm{tar}},\star}(\rho)-V_1^{p_{\mathrm{tar}},\pi^{t+1}}(\rho)\le\rho \pi_1^{t+1}  G_1^t.
\end{equation}
\end{lemma}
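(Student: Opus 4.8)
The plan is to sandwich the true optimality gap between the optimistic upper value $\overline V_1^t$ and the pessimistic surrogate $\tilde V_1^t$, and then to prove by backward induction that $G_h^t$ dominates the resulting gap pointwise. On $F^{\mathrm{UCBVI}}$ --- the good event governing \cref{alg:ucbvi}, under which optimism (\cref{lem:optimism}), the lower estimate (\cref{lem:lower-estimate}), and the underlying KL/Bernstein concentration bounds are all in force --- \cref{lem:optimism} gives $V_1^{p_{\mathrm{tar}},\star}(\rho)\le\overline V_1^t(\rho)$, while \cref{lem:lower-estimate} gives $\tilde V_1^t(s)\le V_1^{p_{\mathrm{tar}},\pi^{t+1}}(s)$. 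Chaining these two facts,
\[
V_1^{p_{\mathrm{tar}},\star}(\rho)-V_1^{p_{\mathrm{tar}},\pi^{t+1}}(\rho)\le\overline V_1^t(\rho)-\tilde V_1^t(\rho),
\]
so it suffices to establish $\overline V_1^t(\rho)-\tilde V_1^t(\rho)\le\rho\pi_1^{t+1}G_1^t$. Note that this reduction is purely structural and never references the provenance of $\tpt$; it relies only on the confidence guarantees packaged into $F^{\mathrm{UCBVI}}$.

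The core is a backward induction on $h$ establishing the pointwise bound $\overline Q_h^t(s,a)-\tilde Q_h^t(s,a)\le G_h^t(s,a)$ for every $(s,a)\in\sa$. Because $\pi^{t+1}$ is greedy for $\overline Q_h^t$ we have $\overline V_h^t(s)=\pi_h^{t+1}\overline Q_h^t(s)$, and by construction $\tilde V_h^t(s)=\pi_h^{t+1}\tilde Q_h^t(s)$; hence $\overline V_h^t-\tilde V_h^t=\pi_h^{t+1}(\overline Q_h^t-\tilde Q_h^t)$, so the pointwise $Q$-bound propagates to $\overline V_h^t-\tilde V_h^t\le\pi_h^{t+1}G_h^t$, which at $h=1$ averaged under $\rho$ closes the reduction above. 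The base case $h=H+1$ is trivial, and the crude bound $0\le\tilde Q_h^t\le\overline Q_h^t\le H$ handles the outer $\min(H,\cdot)$ in $G_h^t$, so it remains to dominate the interior expression. In the generic unclipped case, subtracting the definitions makes the $\pm 3\sqrt{\tmvar(\overline V_{h+1}^t)g_2/\tnt}$ bonuses combine into the $6\sqrt{\cdot}$ term of $G_h^t$, the $\pm 14H^2g_1/\tnt$ terms into $28H^2g_1/\tnt$, and the two $\tfrac1H\tpt(\overline V_{h+1}^t-\underline V_{h+1}^t)$ corrections into $\tfrac2H\tpt(\overline V_{h+1}^t-\underline V_{h+1}^t)$; using $\tilde V_{h+1}^t\le\underline V_{h+1}^t$ (\cref{lem:lower-estimate}) I would upgrade $\overline V_{h+1}^t-\underline V_{h+1}^t$ to $\overline V_{h+1}^t-\tilde V_{h+1}^t=\pi_{h+1}^{t+1}(\overline Q_{h+1}^t-\tilde Q_{h+1}^t)$ and invoke the induction hypothesis, yielding the propagation $(1+\tfrac2H)\tpt\pi_{h+1}^{t+1}G_{h+1}^t$, comfortably inside the $(1+\tfrac3H)$ and $35H^2$ budget of $G_h^t$.

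The delicate step --- which I expect to be the main obstacle --- is the clipping branch where the outer $\min$ defining $\tilde Q_h^t$ selects the Bellman-consistent term $r+p_{\mathrm{tar}}\tilde V_{h+1}^t$, built from the \emph{true} kernel $p_{\mathrm{tar}}$ and carrying no negative bonus, rather than the bonus-based term. There $\tilde Q_h^t$ is strictly smaller and the gap strictly larger, so this branch cannot be reduced to the interior one, and the subtraction leaves a new cross term $(\tpt-p_{\mathrm{tar}})\tilde V_{h+1}^t$ mixing the empirical and true kernels. I would control it on $F^{\mathrm{UCBVI}}$ by splitting
\[
(\tpt-p_{\mathrm{tar}})\tilde V_{h+1}^t=(\tpt-p_{\mathrm{tar}})\overline V_{h+1}^t-(\tpt-p_{\mathrm{tar}})(\overline V_{h+1}^t-\tilde V_{h+1}^t).
\]
The first piece, controlled by a Bernstein inequality, contributes a further $\sqrt{2\tmvar(\overline V_{h+1}^t)g_2/\tnt}$ which, added to the single $3\sqrt{\cdot}$ already present in this branch (only one bonus survives, not two), still fits under the $6\sqrt{\cdot}$ of $G_h^t$; the second piece, via $\tmvar(\overline V_{h+1}^t-\tilde V_{h+1}^t)\le H\,\tpt(\overline V_{h+1}^t-\tilde V_{h+1}^t)$ followed by AM-GM, supplies the extra $\tfrac1H\tpt\pi_{h+1}^{t+1}G_{h+1}^t$ and $\mathcal O(H^2)g_1/\tnt$ that precisely justify inflating the constants of $G_h^t$ from $(1+\tfrac2H,\,28H^2)$ to $(1+\tfrac3H,\,35H^2)$. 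Verifying that both clipping branches stay below this single interior bound, so that the recursion for $G_h^t$ is never violated, is the crux; everything else is bookkeeping, after which averaging the $h=1$ pointwise inequality under $\rho\pi_1^{t+1}$ completes the proof.
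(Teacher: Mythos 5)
Your skeleton coincides with the paper's proof: the same reduction via \cref{lem:optimism} and \cref{lem:lower-estimate} to the pointwise claim $\overline{Q}_h^t-\tilde{Q}_h^t\le G_h^t$ proved by backward induction, the same two-branch analysis of the outer $\min$ defining $\tilde{Q}_h^t$, the same handling of the $\min(H,\cdot)$ clipping, and your bookkeeping in the unclipped branch (bonuses combining to $6\sqrt{\cdot}$, $28H^2$, $(1+\tfrac2H)$, sitting inside the $(1+\tfrac3H)$, $35H^2$ budget) matches the paper's Case 2. The gap is exactly where you predicted the crux would be, in the Bellman-consistent branch: you split $(\tpt-p_{\mathrm{tar}})\tilde{V}_{h+1}^t=(\tpt-p_{\mathrm{tar}})\overline{V}_{h+1}^t-(\tpt-p_{\mathrm{tar}})(\overline{V}_{h+1}^t-\tilde{V}_{h+1}^t)$ and bound the first piece ``by a Bernstein inequality'' at the $g_2$ rate. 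That step fails: $\overline{V}_{h+1}^t$ is built from the same data as $\tpt$, so no fresh concentration applies to it; the good event only provides a Bernstein bound at the $g_2$ rate for the \emph{fixed} function $V_{h+1}^{p_{\mathrm{tar}},\star}$ (event $F_6$). The only instrument valid for data-dependent functions is the transportation lemma (\cref{lem:trans}) under the KL event $F_4$, whose radius is $g_1(\tnt(s,a),\delta)/\tnt(s,a)$ and thus carries the extra $S\log(\cdot)$ factor; it yields a leading term of order $\sqrt{\tarvar(\overline{V}_{h+1}^t)\,g_1/\tnt}$, which does not fit inside the $6\sqrt{\tmvar(\overline{V}_{h+1}^t)\,g_2/\tnt}$ budget of $G_h^t$, and since $G_h^t$ has no additive slack it cannot be swallowed by the $35H^2g_1/\tnt$ term either (an AM-GM absorption of $\sqrt{\tmvar\,g_1/\tnt}$ leaves a residual of order $\tmvar/H^2$, i.e.\ a constant per step). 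Redefining $G_h^t$ with $g_1$ in the leading square root would restore the induction but degrade the final sample complexity by a factor of $S$.

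The paper avoids this by anchoring the cross term at $V^\star$ rather than at $\overline{V}$: it writes
\begin{equation*}
\tpt\overline{V}_{h+1}^t-p_{\mathrm{tar}}\tilde{V}_{h+1}^t=(\tpt-p_{\mathrm{tar}})V_{h+1}^{p_{\mathrm{tar}},\star}+\tpt(\overline{V}_{h+1}^t-\tilde{V}_{h+1}^t)+(p_{\mathrm{tar}}-\tpt)(V_{h+1}^{p_{\mathrm{tar}},\star}-\tilde{V}_{h+1}^t),
\end{equation*}
bounds the first term via the $F_6$ inequality already converted in the proof of \cref{lem:optimism} into $3\sqrt{\tmvar(\overline{V}_{h+1}^t)g_2/\tnt}+14H^2g_1/\tnt+\tfrac1H\tpt(\overline{V}_{h+1}^t-\underline{V}_{h+1}^t)$, and handles the residual by transportation plus \cref{lem:var-pq} and AM-GM, absorbing it as $\tfrac1H\tpt(V_{h+1}^{p_{\mathrm{tar}},\star}-\tilde{V}_{h+1}^t)+7H^2g_1/\tnt$; the $g_1$ factor appears only in lower-order positions, which is the whole point of the arrangement. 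Your argument can be repaired with one extra split, $(\tpt-p_{\mathrm{tar}})\overline{V}_{h+1}^t=(\tpt-p_{\mathrm{tar}})V_{h+1}^{p_{\mathrm{tar}},\star}+(\tpt-p_{\mathrm{tar}})(\overline{V}_{h+1}^t-V_{h+1}^{p_{\mathrm{tar}},\star})$, treating the second residual exactly as you treat yours (using $\overline{V}_{h+1}^t-V_{h+1}^{p_{\mathrm{tar}},\star}\le\overline{V}_{h+1}^t-\underline{V}_{h+1}^t$ by optimism) --- but that collapses your route back onto the paper's.
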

By \cref{lem:value-gap-bound}, the stopping criterion in \cref{alg:ucbvi} will yield an $\varepsilon$-optimal policy for the target MDP. This ends the proof of this part.
\paragraph{Sample complexity of \cref{alg:ucbvi}.}As an overview of the proof for sample complexity result, we first link $G_1^t(s,a)$ with the summation of functions of the visitation count $\tnt(s,a)$ for all $(s,a)\in\sa$, then we divide the summation into two parts. For $(s,a)\notin \hat{\mathcal{B}}$, we can control the summation by the assumption of offline data. For $(s,a)\in \hat{\mathcal{B}}$, we change the visitation count $n^t(s,a)$ into the cumulative probability of visiting $(s,a)$ in the first t episodes and further control this term. Finally, to get the sample complexity result, we sum the value gap upper bound $\rho \pi_1^{t+1}  G_1^t$ over all $t<\tau$ and get an inequality of the stopping time $\tau$. By solving the inequality, we get an upper bound of $\tau$, and in turn the sample complexity result.
\paragraph{Link $G_1^t(s,a)$ to $\tnt(s,a)$.}
To link $G_1^t(s,a)$ to $\tnt(s,a)$, we need to change the empirical transition into the true transition to make sure we can apply \cref{eq:value-gap} recursively. Then we replace the optimistic estimate $\overline{V}_{h+1}^t(s,a)$ with the true value function $V_{h+1}^{p_{\mathrm{tar}},\pi^t}$ to remove an $H$ factor. For the first part, we have:
\begin{align}
    |(\tpt-p)\pi_{h+1}^{t+1}G_{h+1}^t(s,a)|&\le \sqrt{2\tarvar(\pi_{h+1}^{t+1}G_{h+1}^t)(s,a)\frac{g_1(\tnt(s,a),\delta)}{\tnt(s,a)}}+\frac{2}{3}H\frac{g_1(\tnt(s,a),\delta)}{\tnt(s,a)} \nonumber \tag{\cref{lem:trans}} \\
    &\le \frac{1}{8H^2}\tarvar(\pi_{h+1}^{t+1}G_{h+1}^t)(s,a)+17H^2\frac{g_1(\tnt(s,a),\delta)}{\tnt(s,a)} \nonumber \tag{$\sqrt{xy}\le x+y$} \\
    &\le \frac{1}{8H}p_{\mathrm{tar}}\pi_{h+1}^{t+1}G_{h+1}^t(s,a)+17H^2\frac{g_1(\tnt(s,a),\delta)}{\tnt(s,a)} \nonumber \tag{$0\le \pi_{h+1}^{t+1}G_{h+1}^t(s,a)\le H$}.
\end{align}
For the second part we have:
\begin{align}
    \tmvar(\overline{V}_{h+1}^t)(s,a)&\le 2\tarvar(\overline{V}_{h+1}^t)(s,a)+4H^2\frac{g_1(\tnt(s,a),\delta)}{\tnt(s,a)} \nonumber \tag{\cref{lem:var-pq}} \\
    &\le 4\tarvar(V_{h+1}^{p_{\mathrm{tar}},\pi^{t+1}})(s,a)+4Hp_{\mathrm{tar}}(\overline{V}_{h+1}^t-V_{h+1}^{p_{\mathrm{tar}},\pi^{t+1}})(s,a)+4H^2\frac{g_1(\tnt(s,a),\delta)}{\tnt(s,a)} \nonumber \tag{\cref{lem:var-fg}} \\
    &\le 4\tarvar(V_{h+1}^{p_{\mathrm{tar}},\pi^{t+1}})(s,a)+4Hp_{\mathrm{tar}}\pi_{h+1}^{t+1}G_{h+1}^t(s,a)+4H^2\frac{g_1(\tnt(s,a),\delta)}{\tnt(s,a)} \nonumber \tag{\cref{lem:value-gap-bound}} 
\end{align}
Combining this, and by $\sqrt{x+y}\le \sqrt{x}+\sqrt{y}$ and $\sqrt{xy}\le x+y$, we have:
\begin{align}
    G_h^t(s,a)&\le 12 \sqrt{\tarvar(V_{h+1}^{p_{\mathrm{tar}},\pi^{t+1}})(s,a)\frac{g_2(\tnt(s,a),\delta)}{\tnt(s,a)}}+403H^2\frac{g_1(\tnt(s,a),\delta)}{\tnt(s,a)} \nonumber \\
    & \quad +(1+\frac{31}{8H}+\frac{1}{8H^2})p_{\mathrm{tar}}\pi_{h+1}^{t+1}G_{h+1}^t(s,a) \nonumber \\
    &\le 12 \sqrt{\tarvar(V_{h+1}^{p_{\mathrm{tar}},\pi^{t+1}})(s,a)\frac{g_2(\tnt(s,a),\delta)}{\tnt(s,a)}}+403H^2\frac{g_1(\tnt(s,a),\delta)}{\tnt(s,a)}+(1+\frac{4}{H})p_{\mathrm{tar}}\pi_{h+1}^{t+1}G_{h+1}^t(s,a) \nonumber
\end{align}
Since $0\le G_h^t(s,a)\le H$, then for all $\tnt(s,a)\ge 0$, the following inequality holds:
\begin{align}
    G_h^t(s,a)&\le 12 \sqrt{\tarvar(V_{h+1}^{p_{\mathrm{tar}},\pi^{t+1}})(s,a)(\frac{g_2(\tnt(s,a),\delta)}{\tnt(s,a)}\land 1)}+403H^2\frac{g_1(\tnt(s,a),\delta)}{\tnt(s,a)}\land 1 \nonumber \\
    &\qquad+(1+\frac{4}{H})p_{\mathrm{tar}}\pi_{h+1}^{t+1}G_{h+1}^t(s,a)
\end{align}
Applying this inequality recursively, and notice that $(1+\frac{1}{x})^x\le e$, we have:
\begin{align}
    \rho \pi_1^{t+1}  G_1^t&\le 12e^4 \sum_{h=1}^H\sum_{(s,a)\in\sa}p_{h,\mathrm{tar}}^{t+1}\sqrt{\tarvar(V_{h+1}^{\pi^{t+1}})(s,a)(\frac{g_2(\tnt(s,a),\delta)}{\tnt(s,a)}\land 1)} \nonumber \\
    &\qquad+403e^{4}H^2\sum_{h=1}^H\sum_{(s,a)\in\sa}p_{h,\mathrm{tar}}^{t+1}\frac{g_1(\tnt(s,a),\delta)}{\tnt(s,a)}\land 1 
\end{align}
Furthermore, by \cref{lem:tv} and Cauchy-Schwarz inequality, for $(s,a)\in\hat{\mathcal{B}}$, we obtain:
\begin{align}
    \sum_{h=1}^H\sum_{(s,a)\in \hat{\mathcal{B}}}&p_{h,\mathrm{tar}}^{t+1}\sqrt{\tarvar(V_{h+1}^{p_{\mathrm{tar}},\pi^{t+1}})(s,a)(\frac{g_2(\tnt(s,a),\delta)}{\tnt(s,a)}\land 1)} \nonumber \\
    &\le \sqrt{\sum_{h=1}^H\sum_{(s,a)\in \hat{\mathcal{B}}}p_{h,\mathrm{tar}}^{t+1}\tarvar(V_{h+1}^{p_{\mathrm{tar}},\pi^{t+1}})(s,a)}\sqrt{\sum_{h=1}^H\sum_{(s,a)\in \hat{\mathcal{B}}}p_{h,\mathrm{tar}}^{t+1}(\frac{g_2(\nt,\delta)}{\nt}\land 1)} \nonumber \tag{Cauchy-Swarchz inequality}\\
    &\le\sqrt{\sum_{h=1}^H\sum_{(s,a)\in\sa}p_{h,\mathrm{tar}}^{t+1}\tarvar(V_{h+1}^{p_{\mathrm{tar}},\pi^{t+1}})(s,a)}\sqrt{\sum_{h=1}^H\sum_{(s,a)\in \hat{\mathcal{B}}}p_{h,\mathrm{tar}}^{t+1}(\frac{g_2(\nt,\delta)}{\nt}\land 1)} \nonumber \\
    &=\sqrt{\mathbb{E}_{\pi^{t+1}}(\sum_{h=1}^H r_h-V_1^{p_{\mathrm{tar}},\pi^{t+1}})^2}\sqrt{\sum_{h=1}^H\sum_{(s,a)\in \hat{\mathcal{B}}}p_{h,\mathrm{tar}}^{t+1}(\frac{g_2(\nt,\delta)}{\nt}\land 1)} \nonumber \tag{\cref{lem:tv}}\\
    &\le H\sqrt{\sum_{h=1}^H\sum_{(s,a)\in \hat{\mathcal{B}}}p_{h,\mathrm{tar}}^{t+1}(\frac{g_2(\nt,\delta)}{\nt}\land 1)} 
\end{align}
Similarly, for $(s,a)\notin\hat{\mathcal{B}}$, we have:
\begin{align}
    \sum_{h=1}^H\sum_{(s,a)\notin \hat{\mathcal{B}}}&p_{h,\mathrm{tar}}^{t+1}\sqrt{\tarvar(V_{h+1}^{p_{\mathrm{tar}},\pi^{t+1}})(s,a)(\frac{g_2(\tnt(s,a),\delta)}{\tnt(s,a)}\land 1)} \nonumber \\
    &\le \sqrt{\sum_{h=1}^H\sum_{(s,a)\notin \hat{\mathcal{B}}}p_{h,\mathrm{tar}}^{t+1}\tarvar(V_{h+1}^{p_{\mathrm{tar}},\pi^{t+1}})(s,a)}\sqrt{\sum_{h=1}^H\sum_{(s,a)\notin \hat{\mathcal{B}}}p_{h,\mathrm{tar}}^{t+1}(\frac{g_2(\srcn,\delta)}{\srcn}\land 1)} \nonumber \tag{Cauchy-Swarchz inequality}\\
    &\le\sqrt{\sum_{h=1}^H\sum_{(s,a)\in\sa}p_{h,\mathrm{tar}}^{t+1}\tarvar(V_{h+1}^{p_{\mathrm{tar}},\pi^{t+1}})(s,a)}\sqrt{\sum_{h=1}^H\sum_{(s,a)\notin \hat{\mathcal{B}}}p_{h,\mathrm{tar}}^{t+1}(\frac{g_2(\srcn,\delta)}{\srcn}\land 1)} \nonumber \\
    &=\sqrt{\mathbb{E}_{\pi^{t+1}}(\sum_{h=1}^H r_h-V_1^{p_{\mathrm{tar}},\pi^{t+1}})^2}\sqrt{\sum_{h=1}^H\sum_{(s,a)\notin \hat{\mathcal{B}}}p_{h,\mathrm{tar}}^{t+1}(\frac{g_2(\srcn,\delta)}{\srcn}\land 1)} \nonumber \tag{\cref{lem:tv}}\\
    &\le H\sqrt{\sum_{h=1}^H\sum_{(s,a)\notin \hat{\mathcal{B}}}p_{h,\mathrm{tar}}^{t+1}(\frac{g_2(\srcn,\delta)}{\srcn}\land 1)} 
\end{align}
\paragraph{Control the summation separately.}
For $(s,a)\notin \hat{\mathcal{B}}$, by the assumption of \cref{theorem:sample-complexity} we have:
\begin{align}
    12e^4H\sqrt{\sum_{h=1}^H\sum_{(s,a)\notin \hat{\mathcal{B}}}p_{h,\mathrm{tar}}^{t+1}(\frac{g_2(\srcn,\delta)}{\srcn}\land 1)}&\le \frac{H}{4}\sqrt{H\frac{\varepsilon^2}{H^3}}\le \varepsilon/4 \nonumber \\
    403e^4H^2\sum_{h=1}^H\sum_{(s,a)\notin \hat{\mathcal{B}}}p_{h,\mathrm{tar}}^{t+1}\frac{g_1(\srcn,\delta)}{\srcn}\land 1&\le H^3\frac{\varepsilon^2}{4H^3}\le \varepsilon^2/4\le \varepsilon/4 \nonumber
\end{align}
For $(s,a)\in\hat{\mathcal{B}}$, we first change the visitation count into the cumulative visitation probability with \cref{lem:occupancy}:
\begin{align}
    12e^{4}H&\sqrt{\sum_{h=1}^H\sum_{(s,a)\in\hat{\mathcal{B}}}p_{h,\mathrm{tar}}^{t+1}(\frac{g_2(\nt,\delta)}{\nt}\land 1)}+403e^{4}H^2\sum_{h=1}^H\sum_{(s,a)\in\hat{\mathcal{B}}}p_{h,\mathrm{tar}}^{t+1}\frac{g_1(\nt,\delta)}{\nt}\land 1 \nonumber \\
    &\le 24e^{4}H\sqrt{\sum_{h=1}^H\sum_{(s,a)\in\hat{\mathcal{B}}}p_{h,\mathrm{tar}}^{t+1}\frac{g_2(\tarovernt,\delta)}{\tarovernt\lor 1}}+1612e^{4}H^2\sum_{h=1}^H\sum_{(s,a)\in\hat{\mathcal{B}}}p_{h,\mathrm{tar}}^{t+1}\frac{g_1(\tarovernt,\delta)}{\tarovernt \lor 1} \nonumber
\end{align}
Notice that $\sum_{h=1}^Hp_{h,\mathrm{tar}}^{t+1}(s,a)=\overline{n}^{t+1}(s,a)-\tarovernt$, thus we obtain:
\begin{align}
    \sum_{h=1}^H\sum_{(s,a)\in\hat{\mathcal{B}}}p_{h,\mathrm{tar}}^{t+1}\frac{g_2(\tarovernt,\delta)}{\tarovernt \lor 1}&=\sum_{(s,a)\in\hat{\mathcal{B}}}\frac{g_2(\tarovernt,\delta)(\overline{n}_{\mathrm{tar}}^{t+1}(s,a)-\tarovernt)}{\tarovernt \lor 1} \nonumber \\
    \sum_{h=1}^H\sum_{(s,a)\in\hat{\mathcal{B}}}p_{h,\mathrm{tar}}^{t+1}\frac{g_1(\tarovernt,\delta)}{\tarovernt \lor 1}&=\sum_{(s,a)\in\hat{\mathcal{B}}}\frac{g_1(\tarovernt,\delta)(\overline{n}_{\mathrm{tar}}^{t+1}(s,a)-\tarovernt)}{\tarovernt \lor 1} \nonumber 
\end{align}
\paragraph{Solving for the sample complexity.}
With the stopping criterion, we notice that for all $t< \tau$, we have:
\begin{align*}
    \varepsilon/2&\le 24 e^{4}H\sqrt{\sum_{(s,a)\in\hat{\mathcal{B}}}\frac{g_2(\tarovernt,\delta)(\overline{n}_{\mathrm{tar}}^{t+1}-\overline{n}_{\mathrm{tar}}^t)(s,a)}{\tarovernt \lor 1}}\\
    &\quad+1612e^{4}H^2\sum_{(s,a)\in\hat{\mathcal{B}}}\frac{g_1(\tarovernt,\delta)(\overline{n}_{\mathrm{tar}}^{t+1}-\overline{n}_{\mathrm{tar}}^t)(s,a)}{\tarovernt \lor 1} 
\end{align*}
Summing over $t<\tau$, we get:
\begin{align}
    (\tau-1)\varepsilon/2&\le 24 e^{4}H\sum_{t=1}^{\tau-1}\sqrt{\sum_{(s,a)\in\hat{\mathcal{B}}}\frac{g_2(\tarovernt,\delta)(\overline{n}_{\mathrm{tar}}^{t+1}-\overline{n}_{\mathrm{tar}}^t)(s,a)}{\tarovernt \lor 1}}\nonumber\\
    &\qquad+1612e^{4}H^2\sum_{t=1}^{\tau-1}\sum_{(s,a)\in\hat{\mathcal{B}}}\frac{g_1(\tarovernt,\delta)(\overline{n}_{\mathrm{tar}}^{t+1}-\overline{n}_{\mathrm{tar}}^t)(s,a)}{\tarovernt \lor 1} \nonumber \\
    &\le 24 e^{4}H\sqrt{\tau-1}\sqrt{\sum_{t=1}^{\tau-1}\sum_{(s,a)\in\hat{\mathcal{B}}}\frac{g_2(\tarovernt,\delta)(\overline{n}_{\mathrm{tar}}^{t+1}-\overline{n}_{\mathrm{tar}}^t)(s,a)}{\tarovernt \lor 1}} \nonumber \\
    &\qquad +1612e^{4}H^2\sum_{t=1}^{\tau-1}\sum_{(s,a)\in\hat{\mathcal{B}}}\frac{g_1(\tarovernt,\delta)(\overline{n}_{\mathrm{tar}}^{t+1}-\overline{n}_{\mathrm{tar}}^t)(s,a)}{\tarovernt \lor 1} \nonumber \tag{Cauchy-Schwarz inequality}\\
    &\le 24 e^{4}H\sqrt{\tau-1}\sqrt{\sum_{t=1}^{\tau-1}\sum_{(s,a)\in\hat{\mathcal{B}}}\frac{g_2(\tau-1,\delta)(\overline{n}_{\mathrm{tar}}^{t+1}-\overline{n}_{\mathrm{tar}}^t)(s,a)}{\tarovernt \lor 1}} \nonumber \\
    &\qquad +1612e^{4}H^2\sum_{t=1}^{\tau-1}\sum_{(s,a)\in\hat{\mathcal{B}}}\frac{g_1(\tau-1,\delta)(\overline{n}_{\mathrm{tar}}^{t+1}-\overline{n}_{\mathrm{tar}}^t)(s,a)}{\tarovernt \lor 1} \nonumber \tag{$g_1(n,\delta)$, $g_2(n,\delta)$ is increasing in $n$}\\
    &\le 48 e^{4}H\sqrt{|\hat{\mathcal{B}}|(\tau-1)g_2(\tau-1,\delta)\log(\tau+1)}+6448e^{4}H^2|\hat{\mathcal{B}}|g_1(\tau-1,\delta)\log(\tau+1) \nonumber \tag{\cref{lem:summation}} \\
    &\le 48 e^{4}H\sqrt{|\hat{\mathcal{B}}|\tau(\log(\frac{6SAH}{\delta})\log(8e\tau)+\log^2(8e\tau))}\nonumber\\
    &\qquad+6448e^{4}H^2|\hat{\mathcal{B}}|(\log(\frac{6HSA}{\delta})\log(8e\tau)+S\log^2(8e\tau)) \nonumber \tag{$\log(\tau+1)\le\log(8e\tau)$}
\end{align}
Solving this inequality by \cref{lem:solveineq}, we obtain that:
\begin{equation}
    \tau\le C_2\frac{H^2|\hat{\mathcal{B}}|}{\varepsilon^2},
\end{equation}
where $C_2>0$ contains only log factors.
By \cref{lem:sample-recognization}, when $\sigma\beta\le \sqrt{\frac{S}{H}\varepsilon}$, the number of total samples from $\mathcal{M}_{\mathrm{tar}}$ in \cref{alg:hybrid} to get a $\varepsilon$-optimal policy for $\mathcal{M}_{\mathrm{tar}}$ is:
\begin{align*}
            \widetilde{O}(\frac{H^2S^2A}{(\sigma\beta)^2}+\frac{H^3|\mathcal{B}|}{\varepsilon^2}).
\end{align*}
\paragraph{Case 2: $\sigma\beta\le \sqrt{\frac{S}{H}\varepsilon}$.}In this case, because we choose $\hat{\mathcal{B}}=\sa$, due to the above proof, the total sample complexity is:
\begin{align*}
    \widetilde{O}(\frac{H^3|\hat{\mathcal{B}}|}{\varepsilon^2})=\widetilde{O}(\frac{H^3SA}{\varepsilon^2}).
\end{align*}
Combining the two cases ends the proof.
\section{Auxiliary Proofs}\label{appendix:auxiliary}
\subsection{Proof of \cref{lem:error-W}}\label{appendix:errow-W}
\begin{proof}
In this lemma, since only $\mathcal{M}_{\mathrm{tar}}$ is considered, we drop the subscript "tar" for clarity. From our construction of reward class $\mathcal{R}$ we can deduce that $0\le V_h^\pi\le 1$. For $\nt>0$, we have:
\begin{align}
    e_h^{\pi,t}(s,a,r)&\triangleq |Q_h^\pi(s,a,r)-Q_h^{\pi,\hat{p}^t}(s,a,r)| \nonumber\\
    &\le |(p-\hat{p}^t)V_{h+1}^\pi|+\hat{p}^t|V_{h+1}^\pi-\hat{V}_{h+1}^{\pi,t}| +|(p-\hat{p}^t)r_h|(s,a)\nonumber\\
    &\le |(p-\hat{p}^t)V_{h+1}^\pi|+\hat{p}^t|V_{h+1}^\pi-\hat{V}_{h+1}^{\pi,t}| +\sqrt{\frac{g_1(\nt,\delta)}{2\nt}}\tag{$|(p-\hat{p}^t)r_h|(s,a)\le\dtv{\hat{p}^t}{p}\le \sqrt{\frac{1}{2}\mathrm{KL}(p,\hat{p}^t)}$}\nonumber\\
    &\le \sqrt{2\revar(V_{h+1}^\pi)(s,a,r)\frac{g_1(\nt,\delta)}{\nt}}+\frac{2}{3}\frac{g_1(\nt,\delta)}{\nt}+\hat{p}^t|V_{h+1}^\pi-\hat{V}_{h+1}^{\pi,t}|+\sqrt{\frac{g_1(\nt,\delta)}{2\nt}} \tag{\cref{lem:trans}} \\
    &\le 3\sqrt{\frac{g_1(\nt,\delta)}{\nt}}+\frac{2}{3}\frac{g_1(\nt,\delta)}{\nt}+\hat{p}^t|V_{h+1}^\pi-\hat{V}_{h+1}^{\pi,t}| \nonumber\\
    &\le 3\sqrt{\frac{1}{H}(\frac{Hg_1(\nt,\delta)}{\nt}\land \frac{1}{H})}+4H\frac{g_1(\nt,\delta)}{\nt}+\hat{p}^t\pi_{h+1}e_{h+1}^{\pi,t}(s,a,r), \nonumber
\end{align}
where in the last inequality we utilize that if $\frac{Hg_1(\nt,\delta)}{\nt}\ge\frac{1}{H}$, we have:
\begin{align*}
    \sqrt{\frac{g_1(\nt,\delta)}{\nt}}=\sqrt{\frac{1}{H^2}H^2\frac{g_1(\nt,\delta)}{\nt}}\le \frac{1}{H} H^2\frac{g_1(\nt,\delta)}{\nt}=H\frac{g_1(\nt,\delta)}{\nt}.
\end{align*}
Motivated by \citet{menard_fast_2021}, we further define $W_{h}^{\pi,t}$ recursively by $W_{H+1}^{\pi,t}(s,a)=0$ and:
\begin{equation*}
    W_h^{\pi,t}(s,a)\triangleq \min\left(1,\frac{4Hg_1(\nt,\delta)}{\nt} +\hat{p}^{t}\pi_{h+1}W_{h+1}^{\pi,t}(s,a)\right).
\end{equation*}
Besides, we define $Y_h^{\pi,t}(s,a)$ recursively by $Y_{H+1}^{\pi,t}(s,a)=0$ and:
\begin{equation*}
    Y_h^{\pi,t}(s,a,r)\triangleq 3\sqrt{\frac{1}{H}(\frac{Hg_1(\nt,\delta)}{\nt}\land \frac{1}{H})}+\hat{p}^t\pi_{h+1}Y_{h+1}^{\pi,t}(s,a).
\end{equation*}
With the above definitions, we can show by induction that $e_h^{\pi,t}(s,a,r)\le Y_h^{\pi,t}(s,a)+W_h^{\pi,t}(s,a)$. For the base case, the claim trivially holds. Assume the argument holds for step $h+1$, we have:
\begin{align*}
    e_h^{\pi,t}(s,a,r)&\le3\sqrt{\frac{1}{H}(\frac{Hg_1(\nt,\delta)}{\nt}\land \frac{1}{H})}+4H\frac{g_1(\nt,\delta)}{\nt}+\hat{p}^t\pi_{h+1}(Y_{h+1}^{\pi,t}(s,a)+W_{h+1}^{\pi,t}(s,a)). \tag{induction hypothesis}
\end{align*}
Because $e_h^{\pi,t}(s,a,r)\le 1$, we actually have:
\begin{align}
    e_h^{\pi,t}(s,a,r)&\le 3\sqrt{\frac{1}{H}(\frac{Hg_1(\nt,\delta)}{\nt}\land \frac{1}{H})}+W_h^{\pi,t}(s,a)+\hat{p}^t\pi_{h+1}Y_{h+1}^{\pi,t}(s,a)\nonumber\\
    &\le Y_h^{\pi,t}(s,a)+W_h^{\pi,t}(s,a).\label{eq:error-decomposition}
\end{align}
In the following step, the core idea is to obtain an upper bound on $[\rho \pi_1  e_1^{\pi,t}](r)$. By applying \cref{eq:error-decomposition}, we obtain that:
\begin{align}
    [\rho \pi_1  e_1^{\pi,t}](r)&\le \rho \pi_1  Y_1^{\pi,t}+\rho \pi_1  W_1^{\pi,t} \nonumber\\
    &\le 3\sum_{h=1}^H\sum_{(s,a)\in\sa}\hat{p}_h^{\pi,t}\sqrt{1/H(\frac{Hg_1(\nt,\delta)}{\nt} \land \frac{1}{H})} +\rho \pi_1  W_1^{\pi,t}\nonumber\\
    &\le 3\sqrt{\sum_{h=1}^H\sum_{(s,a)\in\sa}\hat{p}_h^{\pi,t}(s,a)/H}\sqrt{\sum_{h=1}^H\sum_{(s,a)\in\sa}\hat{p}_h^{\pi,t}(\frac{Hg_1(\nt,\delta)}{\nt}\land \frac{1}{H}) } +\rho \pi_1  W_1^{\pi,t} \tag{Cauchy-Schwarz inequality} \nonumber\\
    &= 3\sqrt{\sum_{h=1}^H\sum_{(s,a)\in\sa}\hat{p}_h^{\pi,t}(\frac{Hg_1(\nt,\delta)}{\nt} \land \frac{1}{H})}+\rho \pi_1  W_1^{\pi,t}.\label{eq:uncertainty-bound}
\end{align}
We recursively define $\hat{W}_h^{\pi,t}(s,a)$ by $\hat{W}_{H+1}^{\pi,t}(s,a)\triangleq 0$ and:
\begin{equation*}
    \hat{W}_h^{\pi,t}(s,a)\triangleq \frac{Hg_1(\nt,\delta)}{\nt} \land \frac{1}{H}+\hat{p}^{t}\pi_{h+1}\hat{W}_{h+1}^{\pi,t}(s,a).
\end{equation*}
By definition, we know that:
\begin{equation*}
    \rho \pi_1 \hat{W}_1^{\pi,t}=\sum_{h=1}^H\sum_{(s,a)\in\sa}\hat{p}_h^{\pi,t}\frac{Hg_1(\nt,\delta)}{\nt}\land \frac{1}{H}.
\end{equation*}
Thus \cref{eq:uncertainty-bound} can be rewritten as:
\begin{equation*}
    [\rho \pi_1  e_1^{\pi,t}](r)\le 3\sqrt{\rho \pi_1 \hat{W}_1^{\pi,t}}+\rho \pi_1  W_1^{\pi,t}.
\end{equation*}
We now show that $\hat{W}_h^{\pi,t}(s,a)\le W_{h}^{\pi,t}(s,a)\le W_h^t(s,a)$. For the base case, the claim trivially holds since $\hat{W}_{H+1}^{\pi,t}(s,a)= W_{H+1}^{\pi,t}(s,a)= W_{H+1}^t(s,a)=0$. Assume the claim holds for step $h+1$, and for step $h$ we have:
\begin{align*}
    W_h^{\pi,t}(s,a)&\le \min\left(1,\frac{4Hg_1(\nt,\delta)}{\nt}+\hat{p}^{t}\pi_{h+1}W_{h+1}^{t}(s,a)\right) \tag{induction hypothesis} \\
    &\le \min\left(1,\frac{4Hg_1(\nt,\delta)}{\nt} +\hat{p}^{t}\max_{a'\in\mathcal{A}}W_{h+1}^{t}(s,a)\right) \\
    &=W_h^t(s,a).
\end{align*}
Since by our construction, we have $\hat{W}_h^{\pi,t}(s,a)\le 1$, thus:
\begin{align*}
    \hat{W}_h^{\pi,t}(s,a)&\le \min\left(1,\frac{Hg_1(\nt,\delta)}{\nt} \land \frac{1}{H}+\hat{p}^{t}\pi_{h+1}\hat{W}_{h+1}^{\pi,t}(s,a)\right) \\
    &\le \min\left(1,\frac{4Hg_1(\nt,\delta)}{\nt}+\hat{p}^{t}\pi_{h+1}W_{h+1}^{\pi,t}(s,a)\right) \tag{induction hypothesis}\\
    &=W_h^{\pi,t}(s,a).
\end{align*}
Therefore by induction, we have:
\begin{equation*}
    \rho \pi_1 \hat{W}_1^{\pi,t}\le\rho \pi_1  W_1^{\pi,t}\le \rho \pi_1  W_1^{t}.
\end{equation*}
So we obtain that:
\begin{equation}
    [\rho \pi_1  e_1^{\pi,t}](r)\le 3\sqrt{\rho \pi_1  W_1^{t}}+\rho \pi_1  W_1^{t}. \label{eq:error-upper-bound}
\end{equation}
\end{proof}
\subsection{Proof of \cref{lem:optimism}}\label{appendix:optimism}
\begin{proof}
We proceed by induction. For the base case, we have $\overline{Q}_{H+1}^t(s,a)=Q_{H+1}^{p_{\mathrm{tar}},\star}(s,a)=\underline{Q}_{H+1}^t=0$, so \cref{eq:optimism-q} and \cref{eq:optimism-v} trivially hold. Assuming the statement hold for step $h+1$, then for step $h$, we have:
\begin{align}
    \overline{Q}_{h}^t(s,a)-Q_{h}^{p_{\mathrm{tar}},\star}(s,a)&=3\sqrt{\tmvar(\overline{V}_{h+1}^t)(s,a)\frac{g_2(\tnt(s,a),\delta)}{\tnt(s,a)}}+14H^2\frac{g_1(\tnt(s,a),\delta)}{\tnt(s,a)} \nonumber \\
    &\qquad+\tpt(\overline{V}_{h+1}^t-\underline{V}_{h+1}^t)(s,a)+\tpt(\overline{V}_{h+1}^t-V_{h+1}^{p_{\mathrm{tar}},\star})(s,a)+(\tpt-p_{\mathrm{tar}})V_{h+1}^{p_{\mathrm{tar}},\star}(s,a). \nonumber
\end{align}
Then by the definition of $F_4$, we have:
\begin{equation*}
    \left| (\tpt - p_{\mathrm{tar}}) V_{h+1}^{p_{\mathrm{tar}},\star}(s, a) \right| \leq\sqrt{2 \mathrm{Var}_{p_{\mathrm{tar}}}(V_{h+1}^{p_{\mathrm{tar}},\star})(s, a) \frac{g_2(\tnt(s, a), \delta)}{\tnt(s, a)}} + \frac{3H g_2(\tnt(s, a), \delta)}{\tnt(s, a)},
\end{equation*}
then by \cref{lem:var-pq} and \cref{lem:var-fg}, we can substitute the variance of optimal value function under  the true dynamics with the one under the empirical dynamics,
\begin{align}
    \mathrm{Var}_{p_{\mathrm{tar}}}(V_{h+1}^{p_{\mathrm{tar}},\star})&\le2\tmvar(V_{h+1}^{p_{\mathrm{tar}},\star})+4H^2\frac{g_1(\tnt(s,a),\delta)}{\tnt(s,a)} \nonumber \\
    &\le 4\tmvar(\overline{V}_{h+1})+4H^2\frac{g_1(\tnt(s,a),\delta)}{\tnt(s,a)}+4H\tpt(\overline{V}_{h+1}^t-V_{h+1}^{p_{\mathrm{tar}},\star}) \nonumber \\
    &\le 4\tmvar(\overline{V}_{h+1})+4H^2\frac{g_1(\tnt(s,a),\delta)}{\tnt(s,a)}+4H\tpt(\overline{V}_{h+1}^t-\underline{V}_{h+1}^t). \tag{Induction hypothesis} \nonumber
\end{align}
Notice that $g_1(n,\delta)\ge g_2(n,\delta)$, and by inequalities $\sqrt{x+y}\le\sqrt{x}+\sqrt{y}$ and $\sqrt{xy}\le x+y$ we have:
\begin{align}
    \left| (\tpt - p) V_{h+1}^{p_{\mathrm{tar}},\star}(s, a) \right|&\le\sqrt{8\tmvar(\overline{V}_{h+1}^t)\frac{g_2(\tnt(s,a),\delta)}{\tnt(s,a)}}+(3+2\sqrt{2})H\frac{g_1(\tnt(s,a),\delta)}{\tnt(s,a)} \nonumber\\
    &\qquad+\sqrt{8H^2\frac{g_1(\tnt(s,a),\delta)}{\tnt(s,a)}\frac{1}{H}\tpt(\overline{V}_{h+1}^t-\underline{V}_{h+1}^t)} \nonumber \\
    &\le 3\sqrt{\tmvar(\overline{V}_{h+1}^t)\frac{g_2(\tnt(s,a),\delta)}{\tnt(s,a)}}+14H^2\frac{g_1(\tnt(s,a),\delta)}{\tnt(s,a)}+\frac{1}{H}\tpt(\overline{V}_{h+1}^t-\underline{V}_{h+1}^t). \nonumber
\end{align}
Then with the induction hypothesis, we have:
\begin{equation*}
    \overline{Q}_{h}^t(s,a)-Q_{h}^{p_{\mathrm{tar}},\star}(s,a)\ge\tpt(\overline{V}_{h+1}^t-\underline{V}_{h+1}^t)(s,a)\ge 0
\end{equation*}
The proof for $\underline{Q}_{h}^t(s,a)\le Q_{h}^{p_{\mathrm{tar}},\star}(s,a)$ follows a similar procedure and we omit this proof for brevity. Then by definition, we also have:
\begin{align}
    \overline{V}_{h}^t(s)&=\max_{a\in\mathcal{A}}\overline{Q}_{h}^t(s,a)\ge \max_{a\in\mathcal{A}}Q_h^{p_{\mathrm{tar}},\star}(s,a)=V_h^{p_{\mathrm{tar}},\star}(s), \nonumber \\
    \underline{V}_{h}^t(s)&=\max_{a\in\mathcal{A}}\underline{Q}_{h}^t(s,a)\le \max_{a\in\mathcal{A}}Q_h^{p_{\mathrm{tar}},\star}(s,a)=V_h^{p_{\mathrm{tar}},\star}(s). \nonumber
\end{align}
This end the proof.
\end{proof}
\subsection{Proof of \cref{lem:value-gap-bound}}\label{appendix:value-gap-bound}
\begin{proof}
    By \cref{lem:lower-estimate}, we only need to show $\overline{Q}_h^t(s,a)-\tilde{Q}_h^t(s,a)\le G_h^t(s,a)$, and we proceed by induction. For the base case, the claim trivially holds because $\overline{Q}_{H+1}^t(s,a)=\tilde{Q}_{H+1}^t(s,a)=G_h^t(s,a)=0$. Assuming the claim holds for step $h+1$, then for step $h$, if $G_h^t(s,a)=H$, the claim still trivially holds. If $G_h^t(s,a)\neq H$, we discuss two cases.
    \paragraph{Case 1} $\tilde{Q}_h^t(s,a)=r(s,a)+p_{\mathrm{tar}}\tilde{V}_{h+1}^t(s,a)$, then we have:
    \begin{align}
        \overline{Q}_h^t(s,a)-\tilde{Q}_h^t(s,a)&\le 3 \sqrt{\mathrm{Var}_{\tpt} (\overline{V}_{h+1}^t)(s, a) \frac{g_2(\tnt(s, a), \delta)}{\tnt(s, a)}} + 14H^2 \frac{g_1(\tnt(s, a), \delta)}{\tnt(s, a)} \nonumber \\
    &\qquad + \frac{1}{H} \tpt (\overline{V}_{h+1}^t - \underline{V}_{h+1}^t)(s, a) +(\tpt-p_{\mathrm{tar}})V_{h+1}^{p_{\mathrm{tar}},*}(s,a)\nonumber \\
    &\qquad+\hat{p}^t(\overline{V}_{h+1}^t-\tilde{V}_{h+1}^t)(s,a)+(p_{\mathrm{tar}}-\tpt)(V_{h+1}^{p_{\mathrm{tar}},\star}-\tilde{V}_{h+1}^t)(s,a). \nonumber
    \end{align}
    From the proof of \cref{lem:optimism}, we have:
    \begin{equation*}
        \left| (\tpt - p_{\mathrm{tar}}) V_{h+1}^{p_{\mathrm{tar}},\star}(s, a) \right|\le 3\sqrt{\tmvar(\overline{V}_{h+1}^t)\frac{g_2(\tnt(s,a),\delta)}{\tnt(s,a)}}+14H^2\frac{g_1(\tnt(s,a),\delta)}{\tnt(s,a)}+\frac{1}{H}\tpt(\overline{V}_{h+1}^t-\underline{V}_{h+1}^t). \nonumber
    \end{equation*}
    By \cref{lem:trans} and inequality $\sqrt{xy}\le x+y$ we have:
    \begin{align}
        |(p_{\mathrm{tar}}-\tpt)(V_{h+1}^{p_{\mathrm{tar}},*}-\tilde{V}_{h+1}^t)(s,a)|&\le\sqrt{\frac{2}{H^2}\tarvar(V_{h+1}^{p_{\mathrm{tar}},*}-\tilde{V}_{h+1}^t)(s,a)H^2\frac{g_1(\tnt(s,a),\delta)}{\tnt(s,a)}}+\frac{2}{3}H\frac{g_1(\tnt(s,a),\delta)}{\tnt(s,a)} \nonumber \\
        &\le \frac{1}{2H^2}\tarvar(V_{h+1}^{p_{\mathrm{tar}},*}-\tilde{V}_{h+1}^t)(s,a)+5H^2\frac{g_1(\tnt(s,a),\delta)}{\tnt(s,a)} \nonumber \\
        &\le \frac{1}{H^2}\tmvar(V_{h+1}^{p_{\mathrm{tar}},*}-\tilde{V}_{h+1}^t)(s,a)+7H^2\frac{g_1(\tnt(s,a),\delta)}{\tnt(s,a)} \nonumber  \tag{\cref{lem:var-pq}}\\
        &\le \frac{1}{H}\tpt(V_{h+1}^{p_{\mathrm{tar}},*}-\tilde{V}_{h+1}^t)(s,a)+7H^2\frac{g_1(\tnt(s,a),\delta)}{\tnt(s,a)} \nonumber \\
        &\le\frac{1}{H}\tpt(V_{h+1}^{p_{\mathrm{tar}},*}-\tilde{V}_{h+1}^t)(s,a)+7H^2\frac{g_1(\tnt(s,a),\delta)}{\tnt(s,a)}, \nonumber \tag{\cref{lem:optimism}}
    \end{align}
    then we have:
    \begin{align}
        \overline{Q}_h^t(s,a)-\tilde{Q}_h^t(s,a)&\le 6\sqrt{\mathrm{Var}_{\tpt} (\overline{V}_{h+1}^t)(s, a) \frac{g_2(\tnt(s, a), \delta)}{\tnt(s, a)}}+35H^2\frac{g_1(\tnt(s,a),\delta)}{\tnt(s,a)}+\frac{2}{H}\tpt (\overline{V}_{h+1}^t - \underline{V}_{h+1}^t)(s, a) \nonumber \\
        &\qquad+\frac{1}{H}\tpt(V_{h+1}^{p_{\mathrm{tar}},*}-\tilde{V}_{h+1}^t)(s,a)+\tpt(\overline{V}_{h+1}^t-\tilde{V}_{h+1}^t)(s,a) \nonumber \\
        &\le 6\sqrt{\mathrm{Var}_{\tpt} (\overline{V}_{h+1}^t)(s, a) \frac{g_2(\tnt(s, a), \delta)}{\tnt(s, a)}}+35H^2\frac{g_1(\tnt(s,a),\delta)}{\tnt(s,a)} \nonumber\\
        &\qquad+(1+\frac{3}{H})\tpt (\overline{V}_{h+1}^t - \tilde{V}_{h+1}^t)(s, a) \nonumber \tag{\cref{lem:lower-estimate}} \\
        &\le 6\sqrt{\mathrm{Var}_{\tpt} (\overline{V}_{h+1}^t)(s, a) \frac{g_2(\tnt(s, a), \delta)}{\tnt(s, a)}}+35H^2\frac{g_1(\tnt(s,a),\delta)}{\tnt(s,a)} \nonumber\\
        &\qquad+(1+\frac{3}{H})\tpt\pi_{h+1}^{t+1}G_{h+1}^t(s,a). \nonumber \tag{Induction hypothesis} \\ \nonumber
        &=G_h^t(s,a)
    \end{align}
    \paragraph{Case 2} $\tilde{Q}_h^t(s,a)\neq r(s,a)+p_{\mathrm{tar}}\tilde{V}_{h+1}^t(s,a)$, then we have:
    \begin{align}
        \overline{Q}_h^t(s,a)-\tilde{Q}_h^t(s,a)&\le 6\sqrt{\mathrm{Var}_{\tpt} (\overline{V}_{h+1}^t)(s, a) \frac{g_2(\tnt(s, a), \delta)}{\tnt(s, a)}}+28H^2\frac{g_1(\tnt(s,a),\delta)}{\tnt(s,a)} \nonumber \\
        &\qquad +\frac{2}{H}\tpt (\overline{V}_{h+1}^t - \underline{V}_{h+1}^t)(s, a)+\tpt (\overline{V}_{h+1}^t - \tilde{V}_{h+1}^t)(s, a) \nonumber \\
        &\le 6\sqrt{\mathrm{Var}_{\tpt} (\overline{V}_{h+1}^t)(s, a) \frac{g_2(\tnt(s, a), \delta)}{\tnt(s, a)}}+35H^2\frac{g_1(\tnt(s,a),\delta)}{\tnt(s,a)} \nonumber \\
        &\qquad (1+\frac{3}{H})\tpt (\overline{V}_{h+1}^t - \tilde{V}_{h+1}^t)(s, a) \nonumber \\
        & \le 6\sqrt{\mathrm{Var}_{\tpt} (\overline{V}_{h+1}^t)(s, a) \frac{g_2(\tnt(s, a), \delta)}{\tnt(s, a)}}+35H^2\frac{g_1(\tnt(s,a),\delta)}{\tnt(s,a)} \nonumber \tag{\cref{lem:lower-estimate}}\\
        &\qquad (1+\frac{3}{H})\tpt\pi_{h+1}^{t+1}G_{h+1}^t(s,a) \nonumber \tag{Induction hypothesis}\\
        &=G_h^t(s,a). \nonumber
    \end{align}
    Thus, we have:
    \begin{align}
        V_1^{p_{\mathrm{tar}},\star}(\rho)-V_1^{p_{\mathrm{tar}},\pi^{t+1}}(\rho)&\le \mathbb{E}_{s\sim \rho}(V_1^{p_{\mathrm{tar}},*}(s)-V_1^{p_{\mathrm{tar}},\pi^{t+1}}(s)) \nonumber \\
        &\le \mathbb{E}_{s\sim \rho}(\overline{V}_1^{t}(s)-\tilde{V}_1^{t}(s)) \nonumber  \tag{\cref{lem:optimism} and \cref{lem:lower-estimate}}\\
        &\le \rho\pi_1^{t+1}(\overline{Q}_1^{t}-\tilde{Q}_1^{t}) \nonumber \\
        &\le \rho\pi_1^{t+1}G_1^t. \nonumber 
    \end{align}
\end{proof}
\subsection{Auxiliary Lemmas}
\begin{lemma}[Visitation count to cumulative visitation probability]
\label{lem:occupancy}
On event $F^{\mathrm{RF}}$, $\forall (s, a) \in \sa$, 
\begin{equation*}
    \forall t \in \mathbb{N}^+, \quad \frac{g_1(\nt, \delta)}{\nt} \land 1 \leq 4 \frac{g_1(\tarovernt, \delta)}{\tarovernt\lor 1} .
\end{equation*}
\begin{equation*}
    \forall t \in \mathbb{N}^+, \quad \frac{g_2(\nt, \delta)}{\nt} \land 1 \leq 4 \frac{g_2(\tarovernt, \delta)}{\tarovernt\lor 1} .
\end{equation*}
On event $F^{\mathrm{Hybrid}}$, $\forall (s, a) \in \hat{\mathcal{B}}$, 
\begin{equation*}
    \forall t \in \mathbb{N}^+, \quad \frac{g_1(\tnt(s,a), \delta)}{\tnt(s,a)} \land 1 \leq 4 \frac{g_1(\tarovernt, \delta)}{\tarovernt\lor 1} .
\end{equation*}
\begin{equation*}
    \forall t \in \mathbb{N}^+, \quad \frac{g_2(\tnt(s, a), \delta)}{\tnt(s, a)} \land 1 \leq 4 \frac{g_2(\tarovernt, \delta)}{\tarovernt\lor 1} .
\end{equation*}
\end{lemma}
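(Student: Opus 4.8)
The plan is to exploit the fact that the confidence functions $g_1$ and $g_2$ grow only logarithmically in their first argument, so that the maps $n\mapsto g_1(n,\delta)/n$ and $n\mapsto g_2(n,\delta)/n$ are \emph{decreasing}. Writing $\phi_i(n)\triangleq (g_i(n,\delta)/n)\wedge 1$ for $i\in\{1,2\}$, each $\phi_i$ is then non-increasing on $[0,\infty)$: it equals the constant $1$ for small $n$, and once $g_i(n,\delta)/n$ dips below $1$ it is strictly decreasing. I would verify the decrease by differentiating, using $g_1'(n)=S/(n+1)$ together with $g_1(n,\delta)\ge S\log(8e)>S\ge n\,g_1'(n)$, so that $n g_1'(n)-g_1(n,\delta)<0$ and hence $(g_1/n)'<0$; the computation for $g_2$ is identical with $S$ replaced by $1$. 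This monotonicity is the engine of the whole proof.

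Granting monotonicity, the argument splits on the size of $\tarovernt$ relative to $4g_3(\delta)$. In the small-count regime $\tarovernt\le 4g_3(\delta)$, the left-hand side is at most $1$ by the truncation, while on the right I bound $\tarovernt\vee 1\le 4g_3(\delta)$ (using $g_3(\delta)=\log(6SA/\delta)>1$) and $g_1(\tarovernt,\delta)\ge g_1(0,\delta)=\log(6SAH/\delta)+S\log(8e)\ge g_3(\delta)$, so the right-hand side is at least $4g_3(\delta)/(4g_3(\delta))=1$; the inequality then holds trivially. In the large-count regime $\tarovernt>4g_3(\delta)$, the good event $F_2$ (for \cref{alg:rf}) gives $\nt\ge\tfrac12\tarovernt-g_3(\delta)>\tfrac14\tarovernt$, so $\nt\ge\tarovernt/4$. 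Since $\phi_1$ is non-increasing and $\tarovernt\vee 1=\tarovernt$, I chain
\[
\frac{g_1(\nt,\delta)}{\nt}\wedge 1=\phi_1(\nt)\le\phi_1\!\left(\tfrac{\tarovernt}{4}\right)\le\frac{g_1(\tarovernt/4,\delta)}{\tarovernt/4}=\frac{4\,g_1(\tarovernt/4,\delta)}{\tarovernt}\le\frac{4\,g_1(\tarovernt,\delta)}{\tarovernt\vee 1},
\]
where the last step uses monotonicity of $g_1$ in $n$. This is exactly the claimed bound, and the factor $4$ is seen to originate from the $\tfrac12\tarovernt-g_3(\delta)\ge\tfrac14\tarovernt$ step. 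The $g_2$ statement follows verbatim with $\phi_2$ in place of $\phi_1$. For the two statements on $F^{\mathrm{Hybrid}}$ restricted to $(s,a)\in\hat{\mathcal{B}}$, I run the identical argument, using that $\tnt(s,a)=n^t(s,a)$ on $\hat{\mathcal{B}}$ and invoking $F_5$ in place of $F_2$ to get $\tnt(s,a)\ge\tarovernt/4$ in the large-count regime; nothing else changes.

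The step I expect to be the main obstacle — and the reason a naive ``bound $1/\nt\le 4/\tarovernt$, then compare numerators'' argument fails — is that $\nt$ may \emph{exceed} $\tarovernt$, in which case $g_1(\nt,\delta)>g_1(\tarovernt,\delta)$ and one cannot simply replace the numerator by $g_1(\tarovernt,\delta)$. The resolution is to treat the truncated ratio $\phi_1$ as a single monotone object evaluated at the lower bound $\tarovernt/4$, rather than controlling $1/\nt$ and $g_1(\nt,\delta)$ separately. Establishing this monotonicity and checking the $g_1(0,\delta)\ge g_3(\delta)$ comparison that powers the small-count case are the only genuinely delicate points; the remainder is bookkeeping over the two regimes and the four ($g_1$/$g_2$, $F^{\mathrm{RF}}$/$F^{\mathrm{Hybrid}}$) instances.
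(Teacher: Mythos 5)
Your proposal is correct and follows essentially the same route as the paper: the same case split on whether $g_3(\delta)$ exceeds $\tfrac14\tarovernt$, the same use of $F_2$ (resp.\ $F_5$) to get $\nt\ge\tfrac14\tarovernt$ in the large-count regime, the same monotonicity of $x\mapsto g_i(x,\delta)/x$ to transfer the bound, and the same $g_i\ge g_3\ge 1$ comparison to dispose of the small-count regime. The only (welcome) difference is that you verify the monotonicity by differentiation, whereas the paper asserts it.
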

\begin{proof}
    We only prove the result for $g_1$, and the proof for $g_2$ is exactly the same. On event $F^{\mathrm{RF}}$, $\forall (s,a)\in \sa$, $\forall t\in \mathbb{N}^+$, we have
    \begin{equation*}
        \nt \geq \frac{1}{2} \tarovernt - g_3(\delta)
    \end{equation*}
    \paragraph{Case 1} If $g_3(\delta)\le \frac{1}{4}\tarovernt$, then
    \begin{equation*}
        \frac{g_1(\nt,\delta)}{\nt}\land 1\le \frac{g_1(\nt,\delta)}{\nt}\le \frac{g_1(\frac{1}{4}\tarovernt,\delta)}{\frac{1}{4}\tarovernt}\le 4\frac{g_1(\tarovernt,\delta)}{\tarovernt\lor 1}
    \end{equation*}
    Here the second inequality is due to that $\frac{g_1(x,\delta)}{x}$ is non-increasing for $x\ge 1$, and the third inequality is due to $g_1(x,\delta)$ is non-decreasing and $\tarovernt\ge 4g_3(\delta)\ge 1$.
    \paragraph{Case 2} If $g_3(\delta)> \frac{1}{4}\tarovernt$, similarly we can get:
    \begin{equation*}
        \frac{g_1(\nt,\delta)}{\nt}\land 1\le 1<4\frac{g_3(\delta)}{\tarovernt \lor 1}\le 4\frac{g_1(\tarovernt, \delta)}{\tarovernt\lor 1}
    \end{equation*}
    where we utilize the fact that $g_1(x,\delta)\ge g_3(\delta)\ge 1$ always holds for $x\ge 0$. Combine the two cases and we get the results. Similarly, on $F^{\mathrm{Hybrid}}$, because we also have:
    \begin{equation*}
        \tnt(s,a) \geq \frac{1}{2} \tarovernt - g_3(\delta)
    \end{equation*}
    So the same claim holds for $F^{\mathrm{Hybrid}}$. This ends the proof.
\end{proof}

\begin{lemma}[Concentration for Discrete Distributions, \citep{hsu2012spectralalgorithmlearninghidden}]\label{lem:concentration-discrete}
Let $z$ be a discrete random variable that takes values in $\{1,\cdots,d\}$, distributed according to p. We write $p$ as a vector where $q=[\mathrm{Pr}(z=j)]_{j=1}^d$. Assume we have $N$ iid samples, and our empirical estimate of $q$ is $[\hat{q}]_j=\sum_{i=1}^N\indic{z_i=j}/N$. We have for $\forall\varepsilon>0$:
\begin{equation*}
    \mathrm{Pr}\left(||\hat{q}-q||_{1}\ge \sqrt{d}(1/\sqrt{N}+\varepsilon)\right)\le e^{-N\varepsilon^2}.
\end{equation*}   
\end{lemma}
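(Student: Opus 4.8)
The plan is to prove this standard concentration bound (a known result cited to \citet{hsu2012spectralalgorithmlearninghidden}) via the bounded-differences (McDiarmid) inequality combined with a sharp second-moment estimate of the $\ell_1$ deviation. I would view $f(z_1,\dots,z_N)\triangleq\|\hat{q}-q\|_1$ as a function of the $N$ i.i.d.\ samples and bound its mean and its fluctuations separately.

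First I would control the mean. Using $\|x\|_1\le\sqrt{d}\,\|x\|_2$ together with Jensen's inequality, it suffices to estimate $\mathbb{E}\|\hat{q}-q\|_2^2$. Since $\hat{q}_j$ is an average of $N$ i.i.d.\ indicators, I would show
\begin{equation*}
  \mathbb{E}\|\hat{q}-q\|_2^2=\sum_{j=1}^d\mathrm{Var}(\hat{q}_j)=\frac{1}{N}\Big(1-\|q\|_2^2\Big)\le\frac{1}{N},
\end{equation*}
hence $\mathbb{E} f\le\sqrt{d}\,\sqrt{\mathbb{E}\|\hat{q}-q\|_2^2}\le\sqrt{d/N}$. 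This matches exactly the leading term $\sqrt{d}\,(1/\sqrt{N})$ inside the event, which is the reason the statement is phrased with that offset.

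Next I would establish the bounded-differences property: altering a single sample $z_i$ changes $\hat{q}$ in at most two coordinates, each by $1/N$, so by the reverse triangle inequality $f$ changes by at most $2/N$. Thus $f$ has bounded differences $c_i=2/N$ with $\sum_i c_i^2=4/N$, and McDiarmid's inequality gives, for any $t>0$,
\begin{equation*}
  \P\big(f\ge\mathbb{E} f+t\big)\le\exp\!\Big(-\frac{2t^2}{\sum_i c_i^2}\Big)=\exp\!\Big(-\frac{Nt^2}{2}\Big).
\end{equation*}
Combining the two estimates with $t=\sqrt{d}\,\varepsilon$ and $\mathbb{E} f\le\sqrt{d/N}$ yields $\P\big(f\ge\sqrt{d}(1/\sqrt{N}+\varepsilon)\big)\le\exp(-Nd\varepsilon^2/2)$.

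The one delicate point — and the step I expect to require the most care — is matching the stated constant $e^{-N\varepsilon^2}$ rather than the weaker $e^{-N\varepsilon^2/2}$ that a careless reading of McDiarmid would suggest. The resolution is pure bookkeeping on $d$: for $d=1$ the distribution is a point mass, so $f\equiv0$ and the claimed probability is $0$, trivially bounded; for $d\ge2$ one has $\exp(-Nd\varepsilon^2/2)\le\exp(-N\varepsilon^2)$. So the factor $\sqrt{d}$ in the event interacts with the $1/2$ from the bounded-differences bound precisely to recover the advertised exponent. Hence the bound holds in all cases, and no deeper argument than the moment bound plus McDiarmid is needed.
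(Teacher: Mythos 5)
The paper does not prove this lemma --- it is imported verbatim from the cited reference --- so there is no in-paper argument to compare against; I am assessing your proof on its own. Your argument is correct: the second-moment bound $\mathbb{E}\|\hat{q}-q\|_2^2=(1-\|q\|_2^2)/N\le 1/N$ gives $\mathbb{E}\|\hat{q}-q\|_1\le\sqrt{d/N}$ via Cauchy--Schwarz and Jensen, the bounded-differences constant $2/N$ for the $\ell_1$ deviation is right (changing one sample moves $\hat q$ by $1/N$ in at most two coordinates), and McDiarmid with $t=\sqrt{d}\,\varepsilon$ yields $\exp(-Nd\varepsilon^2/2)$, which your $d\ge 2$ versus $d=1$ case split correctly reduces to the advertised $e^{-N\varepsilon^2}$. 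This is essentially the standard proof. The only remark worth making is that the original source runs the same two steps in the other order: apply McDiarmid to $g=\|\hat q-q\|_2$, whose bounded-differences constant is $\sqrt{2}/N$, to get $\P(g\ge 1/\sqrt{N}+\varepsilon)\le e^{-N\varepsilon^2}$ directly, and only then multiply through by $\sqrt{d}$ via $\|x\|_1\le\sqrt{d}\|x\|_2$. That route recovers the exact constant without any case analysis on $d$; your route is equivalent and in fact slightly stronger for $d\ge 3$, at the cost of the bookkeeping you flagged. No gap.
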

Next, we introduce the deviation inequality for categorical distributions by \citet[Proposition 1]{jonsson2020planning}. Note that this result helps for the concentration of the events $F_1, F_4$-- which requires uniform control over all $t\in \mathbb{N}^+$,

\begin{lemma}[\text{\citet[Proposition 1]{jonsson2020planning}}]\label{lem:kl-divergence}
For all $ p \in \Sigma_m $ and for all $ \delta \in [0,1] $,
\begin{equation*}
    \mathbb{P} \left( \forall n \in \mathbb{N}^+, n \mathrm{KL}(\hat{p}_n, p) > \log(\frac{1}{\delta}) + (m-1) \log \left( e(1 + \frac{n}{m-1}) \right) \right) \ge 1-\delta.
\end{equation*}
\end{lemma}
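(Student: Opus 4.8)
The plan is to establish this as a \emph{time-uniform} (anytime-valid) concentration inequality for the empirical distribution of categorical data, using the method of mixtures, which is the standard route to such categorical-KL deviation bounds. I read the displayed inequality as the statement that, with probability at least $1-\delta$, the bound $n\,\mathrm{KL}(\hat{p}_n,p)\le\log(1/\delta)+(m-1)\log(e(1+n/(m-1)))$ holds simultaneously for all $n\in\mathbb{N}^+$; the inequality as typeset appears to have the direction reversed, and I would state and prove the anytime upper confidence bound. The crucial starting observation is a likelihood-ratio identity: for $X_1,X_2,\dots$ drawn i.i.d.\ from $p$ on $[m]$ and any candidate $q\in\Sigma_m$, the ratio $L_n(q)\triangleq\prod_{i=1}^n q(X_i)/p(X_i)$ is a nonnegative martingale with $\mathbb{E}[L_n(q)]=1$, and its maximiser over the simplex is the empirical distribution, so that $\sup_{q}L_n(q)=L_n(\hat{p}_n)=\exp(n\,\mathrm{KL}(\hat{p}_n,p))$. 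Thus bounding the empirical KL is exactly the problem of controlling a maximal likelihood ratio uniformly in $n$.

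Since the pointwise supremum over $q$ is not itself a martingale, the next step is to average the family $\{L_n(q)\}_q$ against a conjugate Dirichlet prior $\mu$ on $\Sigma_m$, defining the mixture $\bar{L}_n\triangleq\int_{\Sigma_m}L_n(q)\,d\mu(q)$. By Fubini's theorem $\bar{L}_n$ is again a nonnegative martingale with unit mean and $\bar{L}_0=1$, so Ville's maximal inequality for nonnegative supermartingales gives $\mathbb{P}(\exists n:\bar{L}_n\ge 1/\delta)\le\delta$. Hence, on an event of probability at least $1-\delta$, we have $\log\bar{L}_n<\log(1/\delta)$ for \emph{every} $n$ at once. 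This is precisely the mechanism that delivers the ``for all $n$'' quantifier without any union bound over sample sizes.

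It then remains to compare the mixture $\bar{L}_n$ to the peak $\sup_q L_n(q)$, i.e.\ to produce an explicit $c_n$ with $\bar{L}_n\ge c_n\sup_q L_n(q)$, since combining this with the previous display yields $n\,\mathrm{KL}(\hat{p}_n,p)\le\log(1/\delta)+\log(1/c_n)$. With a Dirichlet prior the integral $\bar{L}_n$ evaluates in closed form as a ratio of Gamma functions (a Dirichlet normalising-constant computation), concentrated around its mode $\hat{p}_n$; lower-bounding this ratio against its value at the mode is a Laplace-type estimate whose residual is a polynomial-in-$n$ factor, and I expect this $\Gamma$-function bookkeeping to be the main obstacle. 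The target correction has the shape of the log-covering number of the $(m-1)$-dimensional simplex at resolution $\sim 1/n$, namely $(1+n/(m-1))^{m-1}$, so the goal is to show $\log(1/c_n)\le(m-1)\log(e(1+n/(m-1)))$, with the delicate point being to pin the constant inside the logarithm to exactly $e$ via Stirling-type bounds.

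As a cross-check and an alternative, one can replace the mixture by a covering argument: cover $\Sigma_m$ by a net of size $O((1+n/(m-1))^{m-1})$, apply Ville's inequality to the single-$q$ martingale at each net point, union bound over the net, and absorb the discretisation error from approximating $\hat{p}_n$ by its nearest net point into the $\log e$ term. The subtlety there is that the net resolution needed to approximate the supremum grows with $n$, so this route also requires care to hold uniformly over all $n$ rather than at a fixed sample size; this is exactly the difficulty the martingale-mixture argument sidesteps, which is why I would adopt the mixture approach as primary.
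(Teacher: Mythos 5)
The paper does not prove this lemma; it imports it verbatim (modulo a typo that swaps the direction of the inequality/event, which you correctly flagged) from \citet[Proposition 1]{jonsson2020planning}, whose proof is precisely the likelihood-ratio-martingale argument you outline: $\sup_q L_n(q)=\exp(n\,\mathrm{KL}(\hat p_n,p))$, a Dirichlet mixture to restore the martingale property, Ville's inequality for the uniform-in-$n$ control, and a Gamma-function comparison of the mixture to its peak yielding the $(m-1)\log(e(1+n/(m-1)))$ term. Your plan is therefore essentially the same approach as the cited source, and the outline is correct, though the $\Gamma$-function lower bound on $c_n$ is asserted rather than carried out.
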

\begin{lemma}[Concentration for Bernoulli sequences]\label{lem:bernoulli}
Let $\{\mathcal{F}_i\}_{i=1}^n $ be a filtration and $ X_1, \ldots, X_n $ be a sequence of Bernoulli random variables with $ \mathbb{P}(X_i = 1 \mid \mathcal{F}_{i-1}) = P_i $ with $ P_i $ being $ \mathcal{F}_{i-1} $-measurable and $ X_i $ being $ \mathcal{F}_i $-measurable. It holds that
\begin{equation*}
    \mathbb{P} \left( \forall n, \sum_{t=1}^n X_t > \sum_{t=1}^n P_t / 2 -\log(\frac{1}{\delta})  \right) \ge 1-\delta
\end{equation*}
\end{lemma}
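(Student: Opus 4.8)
The plan is to prove this time-uniform lower-deviation bound through an exponential supermartingale combined with Ville's maximal inequality, which is the standard route to an ``anytime-valid'' concentration result. Write $S_n \triangleq \sum_{t=1}^n X_t$ and $M_n \triangleq \sum_{t=1}^n P_t$, with the convention $S_0 = M_0 = 0$. The central object is, for a parameter $\lambda > 0$ to be fixed later, the process
\[
L_n \triangleq \exp\!\left(-\lambda S_n + (1-e^{-\lambda}) M_n\right), \qquad L_0 = 1 .
\]

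First I would verify that $(L_n)_{n\ge 0}$ is a nonnegative supermartingale with respect to $(\mathcal{F}_n)$. Writing $L_t = L_{t-1}\exp(-\lambda X_t + (1-e^{-\lambda})P_t)$ and noting that $P_t$ is $\mathcal{F}_{t-1}$-measurable, it suffices to control $\mathbb{E}[e^{-\lambda X_t}\mid\mathcal{F}_{t-1}]$. Since $X_t$ is Bernoulli with conditional mean $P_t$, a direct computation gives $\mathbb{E}[e^{-\lambda X_t}\mid\mathcal{F}_{t-1}] = 1 - P_t(1-e^{-\lambda})$, and the elementary inequality $1 - x \le e^{-x}$ yields $\mathbb{E}[e^{-\lambda X_t}\mid\mathcal{F}_{t-1}] \le \exp(-P_t(1-e^{-\lambda}))$. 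Multiplying by the $\mathcal{F}_{t-1}$-measurable factor $L_{t-1}\exp((1-e^{-\lambda})P_t)$ gives $\mathbb{E}[L_t\mid\mathcal{F}_{t-1}] \le L_{t-1}$, so $(L_n)$ is a nonnegative supermartingale with $\mathbb{E}[L_0]=1$.

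Next I would link the ``bad'' event to a level set of $L_n$, fixing $\lambda = 1$. On the event that $S_n \le \tfrac{1}{2}M_n - \log(1/\delta)$ for some $n$, substituting into the exponent gives
\[
-S_n + (1-e^{-1})M_n \ \ge\ \left(1 - e^{-1} - \tfrac{1}{2}\right) M_n + \log(1/\delta) \ \ge\ \log(1/\delta),
\]
where the last step uses $M_n \ge 0$ together with $1 - e^{-1} > \tfrac{1}{2}$. Hence the bad event is contained in $\{\exists n:\, L_n \ge 1/\delta\}$. Applying Ville's maximal inequality to the nonnegative supermartingale $(L_n)$ bounds $\mathbb{P}(\exists n:\, L_n \ge 1/\delta) \le \delta\,\mathbb{E}[L_0] = \delta$, and taking complements delivers the claim $\mathbb{P}(\forall n:\, S_n > \tfrac12 M_n - \log(1/\delta)) \ge 1-\delta$.

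The one point requiring care, and the main conceptual obstacle, is the uniformity over all $n$: a fixed-$n$ multiplicative Chernoff bound followed by a union bound would diverge over an unbounded horizon. Constructing the single exponential supermartingale and invoking Ville's inequality is precisely what upgrades the pointwise estimate to a time-uniform one. The only numerical subtlety is confirming that $\lambda = 1$ simultaneously makes the coefficient of $\log(1/\delta)$ in the exponent equal to $1$ and leaves the nonnegative slack $(1 - e^{-1} - \tfrac{1}{2})M_n \ge 0$; any $\lambda \in (\log 2, 1]$ would work, but $\lambda = 1$ produces the cleanest constants.
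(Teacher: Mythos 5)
Your proof is correct and follows essentially the same route as the paper: a nonnegative exponential supermartingale controlled by Ville's maximal inequality, the only cosmetic difference being that the paper works directly with $\exp(-S_n + M_n/2)$ (checking $e^{p/2}(1-p+p/e)\le 1$ by monotonicity) while you use $\exp(-S_n+(1-e^{-1})M_n)$ and absorb the nonnegative slack $(1-e^{-1}-\tfrac12)M_n$ at the level-set step. (Only your closing side remark is off: for $\lambda<1$ the exponent on the bad event is merely $\lambda\log(1/\delta)$, so the admissible range is $\lambda\in[1,\lambda^{*}]$ with $1-e^{-\lambda^{*}}=\lambda^{*}/2$, not $(\log 2,1]$; this does not affect your $\lambda=1$ argument.)
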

\begin{proof}
We first define the following sequence:
\begin{equation*}
M_n = e^{\sum_{t=1}^n (-X_t + P_t / 2)}
\end{equation*}
which is a supermartingale because (here we define $f(p)=e^{p/2}(1-p+\frac{p}{e})$, $p\in [0,1]$):
\begin{align}
    \mathbb{E}(M_n\mid \mathcal{F}_{n-1})&=e^{\sum_{t=1}^{n-1} (-X_t+ P_t/2)}\mathbb{E}(e^{P_n/2-X_n}\mid \mathcal{F}_{n-1}) \nonumber \\
    &=e^{\sum_{t=1}^{n-1} (-X_t+ P_t/2)}f(P_n) \nonumber \tag{definition of $P_t$} \\
    &\le e^{\sum_{t=1}^{n-1} (-X_t+ P_t/2)}f(0) \nonumber \tag{$f(p)$ is decreasing in $[0,1]$} \\
    &=M_{n-1} \nonumber
\end{align}
With the well-known Ville's inequality \citep[Exercise 4.8.2]{durrett_probability}, for any non-negative  supermartingale $M_n$ we have:
\begin{equation*}
    \mathbb{P}(\sup_{n\in\mathbb{N}_+}M_n>\frac{1}{\delta})\le \delta\cdot \mathbb{E}(M_1).
\end{equation*}
Since $f(x)\triangleq x e^{-1+x/2}+(1-x)e^{x/2}$ is non-increasing in $[0,1]$, we have
\[ E(M_1) = P_1 e^{-1 + P_1/2} + (1-P_1) e^{P_1/2} \le 1.\]
This implies that $\mathbb{P}(\sup_{n\in\mathbb{N}_+}M_n<\frac{1}{\delta})\ge 1-\delta$. Thus, we obtain:
\begin{equation*}
    \mathbb{P} \left( \forall n, \sum_{t=1}^n X_t > \sum_{t=1}^n P_t / 2 -\log(\frac{1}{\delta})  \right) \ge 1-\delta
\end{equation*}
\end{proof}
\begin{lemma}[Bernstein type inequality \citep{domingues2022kernelbased}]\label{lem:bernstein}
Consider the sequences of random variables $(w_t)_{t \in \mathbb{N}^*}$ and $(Y_t)_{t \in \mathbb{N}^*}$ adapted to a filtration $(\mathcal{F}_t)_{t \in \mathbb{N}}$. Let
\begin{equation*}
    S_t \triangleq \sum_{s=1}^t w_s Y_s, \quad V_t \triangleq \sum_{s=1}^t w_s^2 \mathbb{E} \left[ Y_s^2 \mid \mathcal{F}_{s-1} \right], \quad \text{and} \quad W_t \triangleq \sum_{s=1}^t w_s,
\end{equation*}
and $h(x) \triangleq (x + 1) \log(x + 1) - x$. Assume that, for all $t \geq 1$,
\begin{itemize}
    \item $w_t$ is $\mathcal{F}_{t-1}$ measurable,
    \item $\mathbb{E} \left[ Y_t \mid \mathcal{F}_{t-1} \right] = 0$,
    \item $w_t \in [0, 1]$ almost surely,
    \item there exists $b > 0$ such that $|Y_t| \leq b$ almost surely.
\end{itemize}
Then, for all $\delta >0$,
	\begin{align*}
		\mathbb{P}(\exists t\geq 1,   (V_t/b^2+1)h\left(\!\frac{b |S_t|}{V_t+b^2}\right) \geq \log(1/\delta) + \log\left(4e(2t+1)\!\right))\leq \delta.
	\end{align*}
  The previous inequality can be weakened to obtain a more explicit bound: with probability at least $1-\delta$, for all $t\geq 1$,
 \begin{equation*}
      |S_t|\leq \sqrt{2V_t \log\left(4e(2t+1)/\delta\right)}+ 3b\log\left(4e(2t+1)/\delta\right)\,.
 \end{equation*}
\end{lemma}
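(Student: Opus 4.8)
The plan is to establish this as a time-uniform, self-normalized Bernstein inequality by constructing an exponential supermartingale, mixing it over its free parameter, and applying Ville's maximal inequality — the self-normalized machinery underlying \citep{domingues2022kernelbased}. First I would build the one-parameter supermartingale. Writing $X_s \triangleq w_s Y_s$, the hypotheses $w_s \in [0,1]$ and $|Y_s| \le b$ give $|X_s| \le b$, while $\mathbb{E}[Y_s \mid \mathcal{F}_{s-1}] = 0$ gives $\mathbb{E}[X_s \mid \mathcal{F}_{s-1}] = 0$, and the conditional second moments $w_s^2 \mathbb{E}[Y_s^2 \mid \mathcal{F}_{s-1}]$ sum to $V_t$. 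Using the elementary inequality $e^u \le 1 + u + u^2 (e^c - 1 - c)/c^2$ valid for $u \le c$ (monotonicity of $u \mapsto (e^u-1-u)/u^2$) with $c = |\lambda| b$, and taking conditional expectations, I obtain the Poissonian log-MGF bound
\[
  \mathbb{E}\!\left[e^{\lambda X_s} \mid \mathcal{F}_{s-1}\right] \le \exp\!\big(\phi(|\lambda|)\, w_s^2 \mathbb{E}[Y_s^2\mid\mathcal{F}_{s-1}]\big), \qquad \phi(\mu) \triangleq \frac{e^{\mu b} - 1 - \mu b}{b^2}.
\]
Hence for every fixed $\lambda \in \mathbb{R}$ the process $M_t^\lambda \triangleq \exp\!\big(\lambda S_t - \phi(|\lambda|)\, V_t\big)$ is a nonnegative supermartingale with $\mathbb{E}[M_t^\lambda] \le 1$ (the $|\lambda|$ makes the construction symmetric, which is what the two-sided bound on $|S_t|$ needs).

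Next I would make the bound adapt to the realized $V_t$ via the method of mixtures. Integrating against a symmetric prior density $f$ on $\lambda$, the mixture $M_t \triangleq \int_{\mathbb{R}} M_t^\lambda f(\lambda)\, d\lambda$ is again a nonnegative supermartingale with $\mathbb{E}[M_t] \le 1$, so Ville's inequality gives $\mathbb{P}(\exists t \ge 1 : M_t \ge 1/\delta) \le \delta$. Evaluating the $\lambda$-integral by a Laplace argument converts the event $\{M_t \ge 1/\delta\}$ into a statement about the convex conjugate of $\phi$: since $\sup_\lambda (\lambda S - \phi(\lambda) V) = (V/b^2)\, h(bS/V)$ with $h(x) = (x+1)\log(x+1) - x$, the mixture — which injects a prior variance $b^2$, shifting $V_t \mapsto V_t + b^2$, and a normalization factor controlled by bounding $V_t \le t b^2$ — produces exactly
\[
  \left(\tfrac{V_t}{b^2} + 1\right) h\!\left(\frac{b|S_t|}{V_t + b^2}\right) \ge \log(1/\delta) + \log\!\big(4e(2t+1)\big).
\]
The $\log(2t+1)$ factor is the price of uniformity over $t$ and enters purely through the normalization of the mixture.

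Finally, to get the explicit form I would invert using the standard lower bound $h(x) \ge x^2/\big(2(1 + x/3)\big)$: substituting $x = b|S_t|/(V_t+b^2)$ and writing $c = \log(4e(2t+1)/\delta)$, the high-probability inequality $(V_t/b^2+1)h(x) < c$ becomes a quadratic in $|S_t|$, and solving it together with $\sqrt{a+b}\le\sqrt a+\sqrt b$ yields $|S_t| \le \sqrt{2 V_t c} + 3b c$, which is the claimed bound.

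I expect the main obstacle to be the mixture step. One must choose the prior $f$ tuned to the \emph{Poissonian} log-MGF $\phi$ rather than the Gaussian one, so that the integral evaluates to the $h$-conjugate instead of a quadratic rate function, and one must carry out the attendant stopping-time and stitching bookkeeping so that the time dependence collapses to the mild $\log(2t+1)$ factor rather than a crude union bound over all $t$. This is the delicate part of the argument and the precise reason the rate function appearing is $h$ and not a Bernstein quadratic; the remaining steps (the MGF bound and the final algebraic inversion) are routine.
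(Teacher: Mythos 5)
This lemma is imported: the paper states it with a citation to \citet{domingues2022kernelbased} and gives no proof of its own, so there is no in-paper argument to compare against. Judged on its own terms, your proposal identifies the right family of techniques. The Bennett-type conditional MGF bound (giving that $M_t^\lambda = \exp(\lambda S_t - \phi(|\lambda|)V_t)$ is a nonnegative supermartingale), the appeal to Ville's inequality for time-uniformity, and the final inversion of $h$ via $h(x)\ge x^2/(2(1+x/3))$ to get $|S_t|\le\sqrt{2V_t c}+3bc$ are all correct and essentially forced; those parts would go through as written.

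The gap is exactly where you flag it, and flagging it does not close it. For the Gaussian rate function the continuous mixture has a conjugate prior and evaluates in closed form, which is what produces the clean $V_t\mapsto V_t+b^2$ shift; for the Bennett rate $\phi(\mu)=(e^{\mu b}-1-\mu b)/b^2$ there is no such conjugate prior, and a Laplace approximation of $\int e^{\lambda S_t-\phi(|\lambda|)V_t}f(\lambda)\,d\lambda$ introduces a normalization error that depends on the realized $V_t$ (and on the curvature of $\phi$ at the optimizer $\lambda^*=\log(1+bS_t/V_t)/b$), not merely on $t$; you cannot simply assert that this collapses to the additive $\log(4e(2t+1))$ penalty. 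The way the cited reference actually obtains that specific penalty is through a \emph{discrete} construction: a countable collection of parameter values (equivalently, a union bound or ``stitching'' over roughly $2t+1$ candidate values of the optimizer, using $|S_t|\le tb$ and $V_t\le tb^2$ to bound the range), each handled by Ville's inequality with weight proportional to its share of $\delta$. Your proposal names the destination — the $h$-conjugate with shifted variance and the $\log(2t+1)$ uniformity price — but the mechanism that gets there is the entire content of the theorem, so as it stands the central step is assumed rather than proved.
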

\begin{lemma}[Bernstein transportation \citep{talebi2018variance}]\label{lem:trans}
Let $ p, q \in \Sigma_S $, where $\Sigma_S$ denotes the probability simplex of dimension $S - 1$. For all $\alpha > 0$, for all functions $f$ defined on $\mathcal{S}$ with $0 \leq f(k) \leq b$, for all $s \in \mathcal{S}$, if $\mathrm{KL}(p, q) \leq \alpha$ then

\begin{equation*}
    |pf - qf| \leq \sqrt{2 \mathrm{Var}_q(f) \alpha} + \frac{2}{3} b \alpha,
\end{equation*}
where we use the expectation operator defined as $ pf \triangleq \mathbb{E}_{s \sim p} f(s) $ and the variance operator defined as $\mathrm{Var}_p(f) \triangleq \mathbb{E}_{s \sim p} (f(s) - \mathbb{E}_{s' \sim p} f(s'))^2 = p(f - pf)^2 $.
\end{lemma}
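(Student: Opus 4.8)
The plan is to derive the inequality from the Donsker–Varadhan (Gibbs) variational characterization of KL divergence, combined with a Bernstein-type control of the cumulant generating function of $f$ under $q$. Throughout write $v \triangleq \mathrm{Var}_q(f)$, $\alpha \triangleq \mathrm{KL}(p,q)$, and $d \triangleq pf - qf$. By symmetry it suffices to bound $d$ from above: running the same argument with the centered variable $qf - f$ in place of $f - qf$ (which also satisfies the needed one-sided bound $qf - f \le qf \le b$, since $f\ge 0$) yields the matching lower bound, and together these give the claim for $|d|$.

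First I would invoke the Gibbs variational principle, namely that for every bounded function $g$ on $\mathcal{S}$,
\[
pg \le \mathrm{KL}(p,q) + \log\big(q\,e^{g}\big).
\]
Applying this with $g = \lambda(f - qf)$ for an arbitrary $\lambda \in (0, 3/b)$, and using $p[\lambda(f-qf)] = \lambda d$, gives
\[
\lambda d \le \alpha + \log\big(q\,e^{\lambda(f - qf)}\big).
\]
The centered variable $X \triangleq f - qf$ satisfies $qX = 0$, $qX^2 = v$, and, since $0 \le f \le b$, the one-sided bound $X \le b$. The core estimate is then a Bennett bound on the log–moment generating function: writing $e^u - 1 - u = \tfrac{u^2}{2}\psi(u)$ with $\psi(u) \triangleq (e^u - 1 - u)/(u^2/2)$ nondecreasing in $u$, monotonicity together with $X \le b$ yields $q\big[e^{\lambda X} - 1 - \lambda X\big] \le \tfrac{\lambda^2 v}{2}\,\psi(\lambda b)$. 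Combining this with the elementary power-series comparison $e^x - 1 - x \le \frac{x^2/2}{1 - x/3}$ for $0 \le x < 3$ (which follows termwise from $\tfrac{1}{k!} \le \tfrac{1}{2\cdot 3^{k-2}}$ for $k \ge 2$) gives the sub-gamma bound
\[
\log\big(q\,e^{\lambda X}\big) \le \frac{\lambda^2 v/2}{1 - \lambda b/3}, \qquad 0 \le \lambda < 3/b .
\]

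Substituting this into the variational inequality shows that $\alpha \ge \sup_{\lambda}\big(\lambda d - \tfrac{\lambda^2 v/2}{1 - \lambda b/3}\big) = \psi^\star(d)$, where $\psi^\star$ is the Legendre transform of the sub-gamma rate function with variance factor $v$ and scale $c = b/3$. Finally I would optimize over $\lambda$ using the standard inversion of the sub-gamma rate function (Boucheron–Lugosi–Massart): $\psi^\star(d) \le \alpha$ forces $d \le \sqrt{2 v \alpha} + c\,\alpha = \sqrt{2\,\mathrm{Var}_q(f)\,\mathrm{KL}(p,q)} + \tfrac{1}{3} b\,\mathrm{KL}(p,q)$, which is sharper than, and hence implies, the stated bound with constant $\tfrac{2}{3}$. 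The main obstacle is the middle step: producing the sub-gamma control of the cumulant generating function with the correct constants, and then carrying out the $\lambda$-optimization cleanly. The variational inequality and the two-sided symmetrization are routine bookkeeping, but the Bennett estimate and the Legendre inversion are precisely where the Bernstein structure — and thus the explicit constants $\sqrt{2}$ and $b/3$ — is actually generated.
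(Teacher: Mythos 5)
Your proof is correct. Note, however, that the paper does not prove this lemma at all: it is imported verbatim from \citet{talebi2018variance}, so there is no in-paper argument to compare against. Your route --- the Donsker--Varadhan variational formula $pg \le \mathrm{KL}(p,q) + \log(q e^{g})$ applied to $g=\lambda(f-qf)$, a Bennett/sub-gamma bound on the log-MGF of the centered, one-sidedly bounded variable, and the standard Legendre inversion $\psi^\star(d)\le\alpha \Rightarrow d\le\sqrt{2v\alpha}+c\alpha$ --- is exactly the canonical transportation-method proof of this inequality (Boucheron--Lugosi--Massart, Ch.~8), and is essentially what the cited reference does. Two small points worth making explicit if you write it up: the passage from $q e^{\lambda X}$ to the centered quantity uses $\log u\le u-1$ together with $qX=0$, and your symmetrization correctly reuses the one-sided bound $qf-f\le b$ (from $f\ge 0$), so both tails carry the same constants. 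Your derivation in fact yields the sharper constant $\tfrac{1}{3}b\alpha$ in place of the stated $\tfrac{2}{3}b\alpha$, which of course implies the lemma as written.
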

Next, we prove a standard probability fact for our problem setting.
\begin{lemma}[Law of total variance]\label{lem:tv}
For any policy $\pi$ and for all $h \in [H]$,
\begin{equation*}
    \mathbb{E}_{\pi} \left[
    \left( \sum_{h = 1}^H r(s_h, a_h) - V_1^{\pi}(s_1) \right)^2
\right] = \sum_{h=1}^H \sum_{s,a} p^{\pi}_h(s, a) \mathrm{Var}_{p} \left( V_{h+1}^{\pi} \right)(s, a).
\end{equation*}
\end{lemma}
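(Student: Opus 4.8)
The plan is to recognize this as the classical martingale (Bellman-residual) decomposition of the return and to prove it by a telescoping-plus-orthogonality argument. Write $G \triangleq \sum_{h=1}^H r(s_h,a_h)$ for the realized return and, along a trajectory $(s_1,a_1,\dots,s_H,a_H)$ generated under $\pi$ (with $s_1\sim\rho$), define the one-step residuals
\[
\Delta_h \triangleq r(s_h,a_h) + V_{h+1}^{\pi}(s_{h+1}) - V_h^{\pi}(s_h), \qquad h\in[H],
\]
with the convention $V_{H+1}^\pi\equiv 0$. The first step is the purely algebraic telescoping identity: summing over $h$, the $V_h^\pi(s_h)$ terms cancel in consecutive pairs and, using $V_{H+1}^\pi(s_{H+1})=0$, we obtain $\sum_{h=1}^H \Delta_h = G - V_1^\pi(s_1)$. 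Hence the left-hand side of the lemma equals $\mathbb{E}_\pi\big[(\sum_{h=1}^H \Delta_h)^2\big]$.

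The second step is to show that $(\Delta_h)_h$ is a martingale-difference sequence with respect to the trajectory filtration $\cF_h \triangleq \sigma(s_1,a_1,\dots,s_h,a_h)$, and to identify its conditional second moment. By Bellman consistency of the value function, along the realized trajectory $V_h^\pi(s_h)=Q_h^\pi(s_h,a_h)=r(s_h,a_h)+(p V_{h+1}^\pi)(s_h,a_h)$, so that $\Delta_h = V_{h+1}^\pi(s_{h+1}) - (pV_{h+1}^\pi)(s_h,a_h)$ is precisely the centered transition noise at step $h$. Since $s_{h+1}\sim p(\cdot\mid s_h,a_h)$ conditionally on $\cF_h$, this yields $\EEc{\Delta_h}{\cF_h}=0$ and $\EEc{\Delta_h^2}{\cF_h}=\revar(V_{h+1}^\pi)(s_h,a_h)$.

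The third step combines these facts. Orthogonality of martingale differences removes the cross terms: for $h<h'$, $\mathbb{E}[\Delta_h\,\EEc{\Delta_{h'}}{\cF_{h'}}]=0$. Therefore
\[
\mathbb{E}_\pi\Big[\big(\textstyle\sum_{h=1}^H \Delta_h\big)^2\Big] = \sum_{h=1}^H \mathbb{E}_\pi[\Delta_h^2] = \sum_{h=1}^H \mathbb{E}_\pi\big[\revar(V_{h+1}^\pi)(s_h,a_h)\big].
\]
Finally, rewriting each step-$h$ expectation through the state-action occupancy, $\mathbb{E}_\pi[f(s_h,a_h)] = \sum_{s,a} p_h^\pi(s,a) f(s,a)$, gives exactly the right-hand side $\sum_{h=1}^H\sum_{s,a} p_h^\pi(s,a)\,\revar(V_{h+1}^\pi)(s,a)$, completing the proof.

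The point requiring the most care is the reduction $\Delta_h = V_{h+1}^\pi(s_{h+1}) - (pV_{h+1}^\pi)(s_h,a_h)$, which must simultaneously deliver the mean-zero property relative to $\cF_h$ and a conditional second moment equal to the pure transition variance $\revar(V_{h+1}^\pi)$. This hinges on the identity $V_h^\pi(s_h)=Q_h^\pi(s_h,a_h)$ holding along the trajectory, which is valid for the deterministic (greedy) policies produced by \cref{alg:rf} and \cref{alg:ucbvi} --- the only policies to which this lemma is applied --- so that no extra action-selection variance enters. Had the policy been stochastic, one would additionally have to account for the action variance $\mathrm{Var}_{a\sim\pi_h}(Q_h^\pi(s_h,a))$, which does not appear in the stated identity; restricting attention to the deterministic policies used in the algorithms is exactly what makes the clean transition-only decomposition hold.
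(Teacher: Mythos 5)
Your proof is correct and is essentially the same argument as the paper's: the paper carries out a backward induction in which the step-$h$ cross term vanishes after conditioning on $s_{h+1}$, which is precisely the martingale-difference orthogonality you invoke after telescoping, and both proofs then convert $\mathbb{E}_\pi[f(s_h,a_h)]$ into the occupancy-weighted sum in the same way. Your closing observation is also apt — the paper's first displayed equality likewise substitutes $V_h^{\pi}(s_h)=r(s_h,a_h)+pV_{h+1}^{\pi}(s_h,a_h)$ along the trajectory, so the identity as stated genuinely requires a deterministic policy (for a stochastic one an extra action-variance term $\mathrm{Var}_{a\sim\pi_h}(Q_h^{\pi}(s_h,\cdot))$ would appear on the right-hand side), a restriction the paper leaves implicit behind the phrase ``for any policy.''
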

\begin{proof}
    We intend to prove the following claim:
    \begin{equation*}
        \mathbb{E}_{\pi} \left[
        \left( \sum_{h' = h}^H r(s_h, a_h) - V_h^{\pi}(s_h) \right)^2
    \middle| s_h\right] =\sum_{h'=h}^H \sum_{s_{h'},a_{h'}} [p^{\pi}_h(s_{h'},a_{h'}) \mathrm{Var}_{p} \left( V_{h'+1}^{\pi} \right)(s_{h'},a_{h'})\mid s_h].
    \end{equation*}
    We proceed by induction. For the base case, the claim trivially holds since $V_{H+1}^\pi(s)=0$. We now assume the claim holds for step $h+1$, and for step $h$, we have:
    \begin{align*}
        &\mathbb{E}_{\pi} \left[
        \left( \sum_{h' = h}^H r(s_h, a_h) - V_h^{\pi}(s_h) \right)^2
    \middle| s_h\right] \\
    &=\mathbb{E}_{\pi} \left[
        \left( \sum_{h' = h+1}^H r(s_{h'},a_{h'}) - V_{h+1}^{\pi}(s_{h+1})+V_{h+1}^{\pi}(s_{h+1})-pV_{h+1}^\pi(s_h,a_h) \right)^2
    \middle| s_h\right] \\
    &= \mathbb{E}_{\pi} \left[ \left( V_{h+1}^{\pi}(s_{h+1}) - p V_{h+1}^{\pi}(s_h) \right)^2 \middle| s_h \right] +\mathbb{E}_{\pi} \left[ \left( \sum_{h' = h+1}^{H} r_{h'}(s_{h'}, a_{h'}) - V_{h+1}^{\pi}(s_{h+1}) \right)^2 \middle| s_h \right] \\
    &\quad + 2 \mathbb{E}_{\pi} \left[ \left( \sum_{h' = h+1}^{H} r_{h'}(s_{h'}, a_{h'}) - V_{h+1}^{\pi}(s_{h+1}) \right) \left( V_{h+1}^{\pi}(s_{h+1}) - p V_{h+1}^{\pi}(s_h) \right) \middle| s_h \right]
    \end{align*}
For the cross term, due to the definition of $V_{h+1}^{\pi}$, we have
\begin{equation*}
    \mathbb{E}_{\pi} \left[ \sum_{h' = h+1}^{H} r_{h'}(s_{h'}, a_{h'}) - V_{h+1}^{\pi}(s_{h+1}) \middle| s_{h+1} \right] = 0
\end{equation*}
Therefore by the law of total expectation we know that the cross term equals zero. This shows that:
\begin{align*}
    &\mathbb{E}_{\pi} \left[ \left( \sum_{h' = h}^{H} r_{h'}(s_{h'}, a_{h'}) - V_h^{\pi}(s_h) \right)^2 \middle| s_h \right] \\
    &= \mathbb{E}_{\pi} \left[ \left( V_{h+1}^{\pi}(s_{h+1}) - p V_{h+1}^{\pi}(s_h) \right)^2 \middle| s_h\right] +\mathbb{E}_{\pi} \left[ \left( \sum_{h' = h+1}^{H} r_{h'}(s_{h'}, a_{h'}) - V_{h+1}^{\pi}(s_{h+1}) \right)^2 \middle| s_h \right] \\
    &=\sum_{a_h}[p_h^\pi(s_h,a_h)\revar(V_{h+1}^\pi)(s_h,a_h)\mid s_h]+\sum_{h'=h+1}^H \sum_{s_{h'},a_{h'}} [p^{\pi}_h(s_{h'},a_{h'}) \mathrm{Var}_{p} \left( V_{h'+1}^{\pi} \right)(s_{h'},a_{h'})\mid s_h] \tag{induction hypothesis} \\
    &=\sum_{h'=h}^H \sum_{s_{h'},a_{h'}} [p^{\pi}_h(s_{h'},a_{h'}) \mathrm{Var}_{p} \left( V_{h'+1}^{\pi} \right)(s_{h'},a_{h'})\mid s_h] 
\end{align*}
This ends the proof. Particularly, for $h=1$, we have:
\begin{equation*}
    \mathbb{E}_{\pi} \left[
    \left( \sum_{h = 1}^H r(s_h, a_h) - V_1^{\pi}(s_1) \right)^2
\right]=\sum_{h=1}^H \sum_{s,a} p_h^{\pi}(s, a) \mathrm{Var}_{p} \left( V_{h+1}^{\pi} \right)(s, a).
\end{equation*}
\end{proof}
\begin{lemma}[Variance bound for change of measure \citep{menard_fast_2021}] \label{lem:var-pq}
Let $ p, q \in \Sigma_S $ and $f $ is a function defined on $ S $ such that $ 0 \leq f(s) \leq b $ for all $ s \in S $. If $ \mathrm{KL}(p, q) \leq \alpha $ then
\begin{equation*}
    \mathrm{Var}_q(f) \leq 2 \mathrm{Var}_p(f) + 4 b^2 \alpha \quad \text{and} \quad \mathrm{Var}_p(f) \leq 2 \mathrm{Var}_q(f) + 4 b^2 \alpha.
\end{equation*}
\end{lemma}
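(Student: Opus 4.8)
The plan is to derive both inequalities from the variational characterization of variance combined with the Bernstein transportation inequality (\cref{lem:trans}). The identity I would use is $\mathrm{Var}_p(f) = \min_{c\in\mathbb{R}} p(f-c)^2$, with the minimum attained at $c = pf$. This immediately gives the one-sided bounds $\mathrm{Var}_q(f) \le q(f-pf)^2$ and $\mathrm{Var}_p(f) \le p(f-qf)^2$, each of which rewrites a variance as the $q$- or $p$-expectation of a bounded nonnegative function, putting it into the exact form that \cref{lem:trans} is built to handle.

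For the first inequality I would set $g \triangleq (f-pf)^2$ and note that $0 \le g \le b^2$ (since $0\le f\le b$ forces $|f-pf|\le b$) and that $pg = \mathrm{Var}_p(f)$. Applying \cref{lem:trans} to $g$ yields $qg \le pg + \sqrt{2\mathrm{Var}_q(g)\alpha} + \tfrac23 b^2\alpha$. The subtlety is that the right-hand side still carries a variance under $q$; I would control it by the self-bounding estimate $\mathrm{Var}_q(g) \le qg^2 \le b^2\, qg$, valid because $g \le b^2$ pointwise. Substituting produces the self-referential inequality $qg \le pg + b\sqrt{2\alpha}\,\sqrt{qg} + \tfrac23 b^2\alpha$. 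Reading this as a quadratic in $x = \sqrt{qg}$ and using the implication $x^2 \le Ax + C \Rightarrow x^2 \le A^2 + 2C$ (i.e.\ $Ax \le \tfrac12 x^2 + \tfrac12 A^2$) gives $qg \le 2\,pg + \tfrac{10}{3}b^2\alpha$. Since $\mathrm{Var}_q(f) \le qg$ and $\tfrac{10}{3} \le 4$, this is precisely $\mathrm{Var}_q(f) \le 2\mathrm{Var}_p(f) + 4b^2\alpha$.

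For the symmetric inequality I would run the argument with the roles arranged so that the transportation variance is already the target quantity. Setting $h \triangleq (f-qf)^2 \in [0,b^2]$ gives $qh = \mathrm{Var}_q(f)$ and $\mathrm{Var}_p(f) \le ph$. Then \cref{lem:trans} applied to $h$ yields $ph \le qh + \sqrt{2\mathrm{Var}_q(h)\alpha} + \tfrac23 b^2\alpha$, with $\mathrm{Var}_q(h) \le b^2\, qh = b^2\,\mathrm{Var}_q(f)$. This direction is cleaner because no self-reference appears: a single use of AM--GM, $b\sqrt{2\alpha}\,\sqrt{\mathrm{Var}_q(f)} \le \tfrac12\mathrm{Var}_q(f) + b^2\alpha$, already delivers $\mathrm{Var}_p(f) \le \tfrac32\mathrm{Var}_q(f) + \tfrac53 b^2\alpha \le 2\mathrm{Var}_q(f) + 4b^2\alpha$.

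The main obstacle is the asymmetry baked into \cref{lem:trans}: its deviation estimate always measures the variance under the second argument $q$ of $\mathrm{KL}(p,q)$. In the direction that bounds $\mathrm{Var}_q(f)$ this forces the variance on the right to be taken under $q$ as well, so the estimate becomes self-referential and must be closed by the $\mathrm{Var}_q(g)\le b^2 qg$ bound together with the quadratic-solving step, rather than by a direct substitution. Checking the pointwise bounds $g,h \in [0,b^2]$ and tracking the numerical constants through the quadratic inequality are the only places requiring care; the remaining manipulations are routine.
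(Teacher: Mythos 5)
Your proof is correct. The paper does not prove \cref{lem:var-pq} itself --- it is imported verbatim from \citet{menard_fast_2021} --- so there is no in-paper argument to compare against; your derivation is a valid self-contained one from \cref{lem:trans}, and it matches the standard route: apply the transportation inequality to the bounded functions $(f-pf)^2$ and $(f-qf)^2$, use the variational characterization $\mathrm{Var}_\mu(f)=\min_c \mu(f-c)^2$ to relate these to the two variances, close the self-referential direction via $\mathrm{Var}_q(g)\le b^2 qg$ and the quadratic step $x^2\le Ax+C\Rightarrow x^2\le A^2+2C$, and check that the resulting constants $\tfrac{10}{3}$ and $\tfrac{5}{3}$ are dominated by $4$. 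All pointwise bounds ($|f-pf|\le b$, hence $g,h\in[0,b^2]$) and numerical steps are verified correctly.
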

\begin{lemma}[variance bound for change of function \citep{menard_fast_2021}]\label{lem:var-fg}
For $p, q \in \Sigma_S$, for $f, g$ two functions defined on $\mathcal{S}$ such that $0 \leq g(s), f(s) \leq b$ for all $s \in \mathcal{S}$, we have that
\begin{align}
    \text{Var}_p(f) &\leq 2 \text{Var}_p(g) + 2 b p |f - g| \nonumber \\
    \text{Var}_q(f) &\leq \text{Var}_p(f) + 3 b^2 \|p - q\|_1 \nonumber
\end{align}
where we denote the absolute operator by $|f|(s) = |f(s)|$ for all $s \in \mathcal{S}$.
\end{lemma}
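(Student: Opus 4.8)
The plan is to prove each of the two inequalities separately, since both are elementary consequences of the bilinear structure of the variance functional together with the uniform bound $0 \le f, g \le b$.

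For the first inequality, the key observation I would use is the variational characterization of variance: for any probability measure $p$ and any constant $c$, one has $\mathrm{Var}_p(f) = p(f - pf)^2 \le p(f - c)^2$, because $pf$ is the minimizer of $c \mapsto p(f-c)^2$. Taking $c = pg$ I would write $\mathrm{Var}_p(f) \le p(f - pg)^2$ and then split $f - pg = (f - g) + (g - pg)$. Applying the elementary bound $(x+y)^2 \le 2x^2 + 2y^2$ gives
\begin{equation*}
    \mathrm{Var}_p(f) \le 2\, p(f-g)^2 + 2\, p(g - pg)^2 = 2\, p(f-g)^2 + 2\,\mathrm{Var}_p(g).
\end{equation*}
The final step is to note that since both $f$ and $g$ take values in $[0,b]$ we have $|f - g| \le b$ pointwise, so $p(f-g)^2 = p\big(|f-g|\cdot|f-g|\big) \le b\, p|f-g|$. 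Combining yields $\mathrm{Var}_p(f) \le 2\,\mathrm{Var}_p(g) + 2b\, p|f-g|$, which is exactly the claim.

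For the second inequality I would instead use the identity $\mathrm{Var}_p(f) = pf^2 - (pf)^2$ and directly estimate the difference $\mathrm{Var}_q(f) - \mathrm{Var}_p(f) = (qf^2 - pf^2) - \big((qf)^2 - (pf)^2\big)$. The first term is controlled by $|(q - p)f^2| \le \|p - q\|_1 \cdot \max_s f(s)^2 \le b^2 \|p - q\|_1$. For the second term I would factor $(qf)^2 - (pf)^2 = (qf - pf)(qf + pf)$ and bound each factor: $|qf - pf| = |(q-p)f| \le b\|p-q\|_1$ while $|qf + pf| \le 2b$, giving $|(qf)^2 - (pf)^2| \le 2b^2 \|p-q\|_1$. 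Adding the two contributions through the triangle inequality produces the constant $b^2 + 2b^2 = 3b^2$, establishing $\mathrm{Var}_q(f) \le \mathrm{Var}_p(f) + 3b^2\|p-q\|_1$.

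There is no serious obstacle here; both bounds are routine once the right reformulation is chosen. The only points requiring a little care are selecting the constant $c = pg$ in the variational bound (rather than $pf$) so that the cross terms organize into $\mathrm{Var}_p(g)$ and the penalty $p|f-g|$, and, in the second part, keeping track of the two separate constants ($b^2$ from the second-moment term and $2b^2$ from the mean-square term) so that they sum to the stated $3b^2$. I would present the two arguments as independent short derivations.
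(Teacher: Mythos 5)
Your proof is correct. Note that the paper offers no proof of this lemma at all --- it is imported verbatim from \citet{menard_fast_2021} --- and your two derivations (the variational characterization $\mathrm{Var}_p(f)\le p(f-c)^2$ with the off-center choice $c=pg$, followed by $(x+y)^2\le 2x^2+2y^2$ and $(f-g)^2\le b|f-g|$ for the first bound; the second-moment/squared-mean decomposition with the crude $\ell_1$ estimates $|(q-p)f^2|\le b^2\|p-q\|_1$ and $|(qf)^2-(pf)^2|\le 2b^2\|p-q\|_1$ for the second) are precisely the standard argument behind the cited result, with the constants $2b$ and $3b^2$ coming out exactly as stated.
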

\begin{lemma}[Summation to Integration \text{\citep[Lemma 9]{menard_fast_2021}}]
\label{lem:summation}
Let $ a_t $ be a sequence taking values in $[0 ,1]$ and $A_T\triangleq \sum_{t=0}^T a_t$ then
\begin{equation*}
    \sum_{t=0}^{T} \frac{a_{t+1}}{A_{T}\lor 1} \leq 4\log(A_{T+1}+1).
\end{equation*}
\end{lemma}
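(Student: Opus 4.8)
The plan is to bound the discrete sum by a continuous integral of the non-increasing function $x\mapsto 1/(x\lor 1)$, exploiting that each increment $a_{t+1}=A_{t+1}-A_t$ is at most $1$ so that consecutive partial sums are comparable. Throughout I read the denominator as the running partial sum $A_t=\sum_{s=0}^t a_s$, as both the notation $A_T\triangleq\sum_{t=0}^T a_t$ and the way the lemma is invoked in the proofs of \cref{lem:sample-recognization} and \cref{theorem:sample-complexity} make clear. The target is therefore
\begin{equation*}
    \sum_{t=0}^{T}\frac{a_{t+1}}{A_t\lor 1}\le 4\log(A_{T+1}+1).
\end{equation*}

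First I would establish the single-term estimate. Since $a_{t+1}\in[0,1]$, we have $A_{t+1}=A_t+a_{t+1}\le A_t+1$, and a short case split (on whether $A_t\ge 1$) gives $A_t\lor 1\ge \tfrac12(A_{t+1}\lor 1)$. Hence
\begin{equation*}
    \frac{a_{t+1}}{A_t\lor 1}\le \frac{2\,a_{t+1}}{A_{t+1}\lor 1}=\frac{2(A_{t+1}-A_t)}{A_{t+1}\lor 1}.
\end{equation*}
Next, because $x\mapsto 1/(x\lor 1)$ is non-increasing, its value at the right endpoint $A_{t+1}$ lower-bounds it on the whole interval $[A_t,A_{t+1}]$, so $\tfrac{A_{t+1}-A_t}{A_{t+1}\lor 1}\le \int_{A_t}^{A_{t+1}}\frac{dx}{x\lor 1}$. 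Summing over $t=0,\dots,T$ telescopes the integrals to $\int_{A_0}^{A_{T+1}}\frac{dx}{x\lor 1}\le \int_{0}^{A_{T+1}}\frac{dx}{x\lor 1}$.

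Finally I would evaluate this integral through the pointwise bound $\frac{1}{x\lor 1}\le \frac{2}{x+1}$, valid for all $x\ge 0$ and checked separately on $[0,1]$ and $[1,\infty)$, which yields $\int_0^{A_{T+1}}\frac{dx}{x\lor 1}\le 2\log(A_{T+1}+1)$. Combining the factor $2$ from the single-term step with this factor $2$ produces exactly the claimed constant $4$. The computation is elementary, so the only real care is bookkeeping: getting the $\lor 1$ edge cases right in the comparison $A_t\lor 1\ge \tfrac12(A_{t+1}\lor 1)$, keeping the two factors of $2$ straight so the final constant lands on $4$ rather than something larger, and respecting the index shift between $a_{t+1}$, $A_t$, and $A_{t+1}$ when telescoping. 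I expect matching the constant cleanly through both the max-with-$1$ step and the $\tfrac{1}{x\lor1}\le\tfrac{2}{x+1}$ step to be the most error-prone part, though no genuine difficulty arises.
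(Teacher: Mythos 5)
Your proof is correct, and since the paper gives no proof of this lemma (it is imported as Lemma 9 of \citet{menard_fast_2021}), your argument --- the comparison $A_t\lor 1\ge \tfrac12(A_{t+1}\lor 1)$, the telescoping integral bound $\sum_{t}(A_{t+1}-A_t)/(A_{t+1}\lor 1)\le\int_0^{A_{T+1}}\frac{dx}{x\lor 1}$, and the pointwise estimate $\frac{1}{x\lor 1}\le\frac{2}{x+1}$, whose two factors of $2$ combine to the stated constant $4$ --- is exactly the standard route taken in the cited reference. You were also right to read the denominator as the running partial sum $A_t\lor 1$ rather than the fixed $A_T\lor 1$ printed in the display, which is a typo; the applications in the proofs of \cref{lem:sample-recognization} and \cref{theorem:sample-complexity} indeed require the running-sum form.
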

\begin{lemma}[Transcendental inequality to polynomial inequality]
\label{lem:solveineq}
Let $ A, B, C, D, E $, and $ \alpha $ be positive scalars such that $ 1 \leq B \leq E $ and $ \alpha \geq e $. If $ \tau \geq 1 $ satisfies
\begin{equation}\label{eq:solvetau}
    \tau \leq C \sqrt{\tau (A \log(\alpha \tau) + B (\log(\alpha \tau))^2)} + D (A \log(\alpha \tau) + E (\log(\alpha \tau))^2),
\end{equation}
then
\begin{equation*}
    \tau \leq C^2 (AF+BF^2) + \left(D+ 2C\sqrt{D} \right) (AF+EF^2)+1,
\end{equation*}
where
\begin{equation*}
    F = 4\log \left(2\alpha(A + E)(C + D) \right).
\end{equation*}
\end{lemma}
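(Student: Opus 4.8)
The plan is to remove the self-reference in \eqref{eq:solvetau} in two stages: first absorb the $\sqrt\tau$ term by reading \eqref{eq:solvetau} as a quadratic inequality in $\sqrt\tau$, and then strip off the residual $\log(\alpha\tau)$ dependence by a bootstrapping argument. Throughout I write $L \triangleq \log(\alpha\tau)$, $g(L)\triangleq AL+BL^2$, and $h(L)\triangleq AL+EL^2$, and record two elementary facts: since $\alpha\ge e$ and $\tau\ge 1$ we have $\alpha\tau\ge e$, hence $L\ge 1$; and since $B\le E$ we have $g(L)\le h(L)$ for every $L\ge 0$. For the first stage, set $u=\sqrt\tau\ge 1$, so \eqref{eq:solvetau} reads $u^2\le C\sqrt{g(L)}\,u + D\,h(L)$. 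Keeping the positive root of this quadratic and splitting the discriminant with $\sqrt{x+y}\le\sqrt x+\sqrt y$ gives $u\le C\sqrt{g(L)}+\sqrt{D\,h(L)}$, whence $\tau=u^2\le C^2 g(L)+2C\sqrt D\,\sqrt{g(L)h(L)}+D\,h(L)$. The cross term is tamed by $g(L)\le h(L)$, which forces $\sqrt{g(L)h(L)}\le h(L)$, yielding the intermediate bound $\tau\le C^2 g(L)+(D+2C\sqrt D)\,h(L)$. This already has the exact shape of the target estimate, with the data-dependent quantity $L$ in place of the explicit constant $F$.

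For the second stage I must show $L\le F$. Since $L\ge 1$ I may coarsen $g(L)\le (A+B)L^2$ and $h(L)\le (A+E)L^2$, so the first-stage bound gives $\tau\le K L^2$ with $K\triangleq C^2(A+B)+(D+2C\sqrt D)(A+E)\le (A+E)(C+\sqrt D)^2$ (using $A+B\le A+E$). With $y=\alpha\tau$ this is $y\le \alpha K\log^2 y$, whose logarithm is the scalar transcendental inequality $z\le \log(\alpha K)+2\log z$ for $z=\log y=L\ge 1$. Solving this (e.g.\ via $\log z\le z/e$, or $2\log z\le z/2+\mathrm{const}$ away from the fixed point) bounds $z$ by a fixed multiple of $\log(\alpha K)$; the multiplicative factor $2$ inside and the factor $4$ outside of $F=4\log(2\alpha(A+E)(C+D))$ are calibrated precisely so that this bound is dominated by $F$, giving $L\le F$. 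Finally, because $g$ and $h$ are increasing on $[0,\infty)$, substituting $L\le F$ into the first-stage bound gives $\tau\le C^2 g(F)+(D+2C\sqrt D)h(F)=C^2(AF+BF^2)+(D+2C\sqrt D)(AF+EF^2)$, and the additive $1$ in the statement absorbs the slack between the non-strict bound $L\le F$ and the recovered bound on $\tau$.

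The main obstacle is the constant-chasing that closes the bootstrap. The quantity $\log(\alpha K)$ carries a term $2\log(C+\sqrt D)$, and this is \emph{not} simply bounded by $\log(C+D)$ — indeed, when $C$ and $D$ are both small one can have $C+\sqrt D>2(C+D)$ — so the reduction cannot proceed by a naive inequality between $K$ and $(A+E)(C+D)$ and must instead route through the slowly-growing solution of $z\le\log(\alpha K)+2\log z$. Getting the \emph{stated} factor $4$ and the inner factor $2$ to suffice uniformly over all admissible parameters, including the delicate small-constant regime, is where the argument is genuinely tight; there one leans on $\alpha\ge e$ and on $E\ge B\ge 1$ (so that $A+E\ge 1$) to keep the argument of the outer logarithm bounded below. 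By contrast, the first-stage quadratic step and the final monotone substitution are routine.
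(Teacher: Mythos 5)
Your architecture inverts the paper's. The paper removes the logarithm \emph{first}: it dominates $\log(\alpha\tau)$ by fractional powers ($\log x\le x^k/k$ with $k\in\{1/2,1/4,3/4,3/8\}$), turning \eqref{eq:solvetau} into $\tau\le (C+D)(2A+16E)(\alpha\tau)^{3/4}$, hence an explicit fourth-power bound on $\alpha\tau$ and $\log(\alpha\tau)\le F$ outright; only then does it substitute $F$ and solve the quadratic in $\sqrt\tau$. You solve the quadratic first --- that stage is correct, and is in fact identical algebra to the paper's final display with $L=\log(\alpha\tau)$ in place of $F$, including the use of $g(L)\le h(L)$ to tame the cross term --- and you defer the log-removal to a bootstrap through the scalar inequality $z\le\log(\alpha K)+2\log z$ with $K\le (A+E)(C+\sqrt D)^2$. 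That reduction is also fine.

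The genuine gap is that the decisive claim $L\le F$ is asserted, not proved, and the primary method you offer for it fails. Via $\log z\le z/e$ one gets $z\le\frac{e}{e-2}\log(\alpha K)\approx 3.79\,\log(\alpha K)$, but \emph{no} fixed multiple can close the argument uniformly: take $D=1$ (so $C+\sqrt D=C+D=P$), $A\to 0^+$, $B=E=1$, $\alpha=e$; then $K=(C+1)^2=P^2$, so $\log(\alpha K)=1+2\log P$ while $F=4\log(2eP)$, and the ratio $F/\log(\alpha K)\to 2<3.79$ as $P$ grows (already at $C=2$, $D=1$ the multiple-$3.79$ route certifies only $z\le 12.1$ against $F\approx 11.2$); conversely, near the satisfiability threshold of $z\le\log(\alpha K)+2\log z$ (where $z^\star=2$ and $\log(\alpha K)=2-2\log 2\approx 0.61$) the required multiple exceeds $3$, so no single constant works in both regimes. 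A proof must instead use that $z\mapsto z-2\log z$ is increasing on $[2,\infty)$, so $L\le F$ reduces to $F\ge 2$ and $F-2\log F\ge\log(\alpha K)$, verified by a case split: for $C,D\le 1$ one has $(C+\sqrt D)^2\le 2C^2+2D\le 2(C+D)$, whence $\log(\alpha K)\le F/4$ and the needed inequality $\tfrac34 F\ge 2\log F$ holds --- but with margin only $\approx 0.04$ at $F=8/3$; the regimes $C>1$ or $D>1$ give only $\log(\alpha K)\le F/2$ and must lean on $\alpha\ge e$ and $A+E\ge 1$ for the extra room. So your intermediate claim is in fact true at the stated constants, but as written the proposal does not contain a proof of it, precisely at the step you yourself flagged as tight. (In fairness, the paper's own constant-chasing is also loose at the analogous point: it silently bounds $2A+16E$ by $2\alpha(A+B)$, which the hypotheses $B\le E$ do not imply, and its proof's $F$ carries $A+B$ where the statement has $A+E$; the advantage of the paper's power-domination bootstrap is that, once the domination constants are fixed, no fixed-point analysis is needed at all.)
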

\begin{proof}
    Intuitively, the leading order of $\tau$ should be $\tilde{O}(C^2)$ since $\log(x)\le \frac{x^k}{k}$ for any $k>0$ and $x\ge 1$. To obtain this result, we first get a loose bound of $\tau$ to upper bound $\log(\tau)$. By doing this, we can transform the original inequality into a polynomial inequality, which is easier to solve. Following this idea, by choosing $k=1/2,1/4,3/4,3/8$, we obtain that:
    \begin{align}
        (A \log(\alpha \tau) + B (\log(\alpha \tau))^2)&\le 2A(\alpha \tau)^{\frac{1}{2}}+16B(\alpha \tau)^{\frac{1}{2}} \nonumber \\
        &\le (2A+16E)(\alpha \tau)^{\frac{1}{2}} \nonumber\\
        (A \log(\alpha \tau) + E (\log(\alpha \tau))^2)&\le \frac{4}{3}A(\alpha \tau)^{\frac{3}{4}}+\frac{64}{9}E(\alpha \tau)^{\frac{3}{4}} \nonumber \\
        &\le (2A+16E)(\alpha \tau)^{\frac{3}{4}}
    \end{align}
    This leads to:
    \begin{align}
        \tau&\le C\sqrt{(2A+16E)}\alpha^\frac{1}{4}\tau^\frac{3}{4}+D(2A+16E)(\alpha\tau)^\frac{3}{4} \nonumber \\
        &\le (C+D)(2A+16E)(\alpha\tau)^\frac{3}{4} \nonumber
    \end{align}
    This yields that:
    \begin{align}
        \alpha\tau&\le((C+D)(2A+16E))^4\alpha^4 \nonumber \\
        &\le (2\alpha(C+D)(A+B))^4
    \end{align}
    Here we define $F\triangleq 4\log(2\alpha(A+B)(C+D))$ for brevity, and substitute back to equation \ref{eq:solvetau}, we get:
    \begin{align}
        \tau&\le C\sqrt{(AF+BF^2)\tau}+D(AF+EF^2) \nonumber 
    \end{align}
    Solving this, we get:
    \begin{align}
        \sqrt{\tau}&\le \frac{C\sqrt{(AF+BF^2)}+\sqrt{C^2(AF+BF^2)+4D(AF+EF^2)}}{2} \nonumber \\
        &\le C\sqrt{AF+BF^2}+\sqrt{D}\sqrt{AF+EF^2} \nonumber \tag{$\sqrt{x+y}\le \sqrt{x}+\sqrt{y}$}
    \end{align}
    Finally, this gives us the result:
    \begin{equation}
        \tau\le C^2(AF+BF^2)+(D+2C\sqrt{D})(AF+EF^2)+1
    \end{equation}
\end{proof}
\section{Experiment Setup}\label{appendix:experiment}
We compare our algorithm with the state-of-the-art pure online RL algorithm BPI-UCBVI in \cite{menard_fast_2021} on GridWorld environment ($S=16,A=4,H=20$). The goal is to navigate in a room to collect rewards. In the source and the target environments, the same structure includes:
\begin{itemize}
    \item state-action space: the state space is a $4\times 4$ room, and the action space is to go up/down/left/right.
    \item horizon: each episode has a horizon length 20.
    \item success probability, the agent may fail in taking an action and go to the wrong direction with uniform probalibities. The success probability is set to be $0.95$ in experiment 1.
    \item reward: $r=1$ at state $(1,4)$, $r=0.1$ at state $(2,3)$, $r=0.01$ at state $(3,2)$, and $r=1.5$ at state $(3,4)$. The state $(1,4)$ is an absorbing state, where the agent cannot escape once steps in and the reward can only be obtained once.
    \item initial state: the agent starts from state $(3,2)$ in each episode.
\end{itemize}
Compared with the source environment, the target environment includes additional "traps" (absorbing states), at states $(2,2)$, $(2,4)$ and $(3,3)$, where the agent cannot escape once steps in. For experiment 1 and 2, the source dataset is collected by running \cref{alg:rf} in the source environment for $T=1\times 10^5$ episodes, which satisfies the condition in \cref{theorem:sample-complexity}. For both algorithms, $\varepsilon=0.1$, $\delta=0.1$. We re-scale the exploration bonus in BPI-UCBVI and \cref{alg:ucbvi} with the same constant $2\times 10^{-3}$ to mitigate the effect of the large hidden constant within $\widetilde{O}(\cdot)$ (similarlily for \cref{alg:rf} with $1\times 10^{-6}$). The optimality gap of a policy in the target environment is evaluated by running the policy for 100 episodes and calculating the average the results.

For experiment 1, we run both algorithms in the target environment for $T=2\times 10^5$ episodes to examine the relationship between optimality gaps and the sample size from the target environment. We set $\beta=0.45$ and $\sigma=0.25$ for \cref{alg:hybrid}, satisfying \cref{definition:separation,definition:reachability}. 

For experiment 2, we vary the success probability of taking an action in the target environment (not accounted in the implementation of \cref{alg:hybrid}) to examine the effect of maximum unidentified shift degree. The real success probability is set from $0.9$ to $0.55$ with a step size $0.05$. Because $\beta$ is still set to be $0.45$ for \cref{alg:hybrid}, the maximum unidentified shift degree (real $\beta$) ranges from $0.05$ to $0.4$ with a step size 0.05. For each success probability, we run the algorithms for 5 runs, each run contains $T=1\times 10^5$ episodes.
\end{document}